\newcommand{\alglinelabel}{%
  \addtocounter{ALC@line}{-1}% Reduce line counter by 1
  \refstepcounter{ALC@line}% Increment line counter with reference capability
  \label% Regular \label
}
\definecolor{mydarkblue}{rgb}{0,0.08,0.45}
\theoremstyle{plain}
\newtheorem{theorem}{Theorem}[section]
\newtheorem{lemma}[theorem]{Lemma}
\newtheorem{corollary}[theorem]{Corollary}
\theoremstyle{definition}
\newtheorem{definition}[theorem]{Definition}
\theoremstyle{remark}
\newtheorem{remark}[theorem]{Remark}
\newcommand{\jiatai}[1]{{\color{cyan}  [\text{Jiatai:} #1]}}
\newcommand{\daiyan}[1]{{\color{violet}  [\text{Yan:} #1]}}
\renewcommand{\tilde}{\widetilde}
\renewcommand{\hat}{\widehat}
\renewcommand{\bar}{\overline}
\renewcommand{\O}{\operatorname{\mathcal O}}
\newcommand{\Otil}{\operatorname{\tilde{\mathcal O}}}
\newcommand{\E}{\operatornamewithlimits{\mathbb{E}}}
\Crefname{ALC@line}{Line}{Lines}
\Crefname{algocf}{Algorithm}{Algorithms}
\icmltitlerunning{Banker-OMD: A Universal Approach for Delayed Online Bandit Learning}
\begin{document}

\twocolumn[
\icmltitle{Banker Online Mirror Descent:\\A Universal Approach for Delayed Online Bandit Learning}

% It is OKAY to include author information, even for blind
% submissions: the style file will automatically remove it for you
% unless you've provided the [accepted] option to the icml2023
% package.

% List of affiliations: The first argument should be a (short)
% identifier you will use later to specify author affiliations
% Academic affiliations should list Department, University, City, Region, Country
% Industry affiliations should list Company, City, Region, Country

% You can specify symbols, otherwise they are numbered in order.
% Ideally, you should not use this facility. Affiliations will be numbered
% in order of appearance and this is the preferred way.
\icmlsetsymbol{equal}{*}

\begin{icmlauthorlist}
\icmlauthor{Jiatai Huang}{equal,iiis}
\icmlauthor{Yan Dai}{equal,iiis}
\icmlauthor{Longbo Huang}{iiis}
\end{icmlauthorlist}

\icmlaffiliation{iiis}{Institute for Interdisciplinary Information Sciences, Tsinghua University, Beijing, China}

\icmlcorrespondingauthor{Longbo Huang}{longbohuang@mail.tsinghua.edu.cn}

% You may provide any keywords that you
% find helpful for describing your paper; these are used to populate
% the "keywords" metadata in the PDF but will not be shown in the document
\icmlkeywords{Machine Learning, ICML}

\vskip 0.3in
]

% this must go after the closing bracket ] following \twocolumn[ ...

% This command actually creates the footnote in the first column
% listing the affiliations and the copyright notice.
% The command takes one argument, which is text to display at the start of the footnote.
% The \icmlEqualContribution command is standard text for equal contribution.
% Remove it (just {}) if you do not need this facility.

%\printAffiliationsAndNotice{}  % leave blank if no need to mention equal contribution
\printAffiliationsAndNotice{\icmlEqualContribution} % otherwise use the standard text.
\begin{abstract}
  We propose Banker Online Mirror Descent (\texttt{Banker-OMD}), a novel framework generalizing the classical Online Mirror Descent (OMD) technique in the online learning literature. 
  % \texttt{Banker-OMD} introduces a new robust decision-making rule based on conjugate convex combinations and 
  The \texttt{Banker-OMD} framework almost completely decouples feedback delay handling and the task-specific OMD algorithm design, thus facilitating the design of new algorithms capable of efficiently and robustly handling feedback delays. Specifically, it offers a general methodology for achieving $\Otil(\sqrt{T} + \sqrt{D})$-style regret bounds in online bandit learning tasks with delayed feedback, where $T$ is the number of rounds and $D$ is the total feedback delay. We demonstrate the power of \texttt{Banker-OMD} by applications to two important bandit learning scenarios with delayed feedback, including delayed scale-free adversarial Multi-Armed Bandits (MAB) and delayed adversarial linear bandits. 
  %\texttt{Banker-OMD} achieves near-optimal performance in these settings. 
  %In particular, it  leads to the first delayed adversarial linear bandit algorithm achieving $\Otil(\text{poly}(n)(\sqrt{T} + \sqrt{D}))$ regret and the first non-trivial solution to the delayed best-of-both-worlds MAB problem.
  \texttt{Banker-OMD} leads to the first delayed scale-free adversarial MAB algorithm achieving $\Otil(\sqrt{K}L(\sqrt T+\sqrt D))$ regret and the first delayed adversarial linear bandit algorithm achieving $\Otil(\text{poly}(n)(\sqrt{T} + \sqrt{D}))$ regret.
  % where $K$ is the number of arms, $L$ is the maximum loss magnitude and $n$ is the ambient dimension.
  As a corollary, the first application also implies $\Otil(\sqrt{KT}L)$ regret for non-delayed scale-free adversarial MABs, which is the first to match the $\Omega(\sqrt{KT}L)$ lower bound up to logarithmic factors and can be of independent interest.
\end{abstract}

\section{Introduction}
Multi-armed bandit (MAB) is a classical online learning problem with partial information feedback. In the MAB problem, an agent is given a set of arms and needs to choose one each time, after which the agent suffers a loss determined by the environment. The agent's objective is to minimize the expected difference between his/her total loss and the total loss of a fixed best arm, which is called the regret.

%\longbo{give refs for OMD and FTRL}
Among the many techniques successfully applied to MAB algorithm design, Online Mirror Descent (OMD) \citep{warmuth1997continuous} and Follow-The-Regularized-Leader (FTRL) \citep{gordon1999regret} have been proven to be powerful tools, especially in adversarial MAB settings --- where at each round $t$, an adversary arbitrarily picks a loss for each arm when the agent is making a decision. The OMD/FTRL perspective offers an alternative algorithmic form of classical adversarial MAB algorithms originating from the idea of exponential weighting, particularly the well-known EXP3 algorithm \citep{auer2002nonstochastic}. It also leads to the development of new MAB algorithms such as Tsallis-INF \citep{zimmert2019optimal} and BROAD-OMD \citep{wei2018more} by using different regularizers. At last, the OMD/FTRL perspective also generalizes to other online learning tasks, e.g., adversarial linear bandits \citep{abernethy2008competing,audibert2014regret} or Markov Decision Processes \citep{jin2020learning}.

However, in many important practical learning problems, action feedback may not arrive immediately after execution. For instance, in the web advertisement scenario \citep{li2010contextual}, after the server selects a set of ads and renders the page for a user, the signal about user reactions may arrive after the server picks ads for other incoming users. Similar situations can also happen in parallel computing \citep{chen2019task} where jobs need to be allocated to some work node before previous jobs finish or medical experiments \citep{wason2014comparison} where patients need to get treated before the information of previous treatment is gathered.

In such cases where feedback is arbitrarily delayed, the classical OMD approach can no longer be directly applied to guarantee similar performance.
Technically, this is because the classical OMD/FTRL framework heavily relies on techniques to rearrange potential terms into a telescoping sum so that terms from \textit{adjacent} rounds cancel. Thus, delayed feedback makes this real-time cancellation impossible.
While it is still possible to design algorithms for delayed MABs \citep{zimmert2020optimal,gyorgy2021adapting} or delayed MDPs \citep{jin2022near,dai2022follow} using OMD or similar ideas, one naturally asks the following question:

\textit{\textbf{Is there a general framework for transforming existing task-specific OMD algorithms into ones that can handle the ``delayed version'' of the corresponding tasks? 
%can be converted to its delay-robust version without much thinking?
}} 

In this paper, we provide a positive answer to this question by presenting a novel OMD framework, \texttt{Banker-OMD}.
%To a large \jiatai{some?} extent, \texttt{Banker-OMD}, a novel OMD framework we are going to present in this papar, gives .

%\longbo{need revision. this question is not clear to me. what does it mean by "decoupling delays". people do not know what this means or what it implies. }

%\longbo{also, we need to briefly state that we answer this question, and we summary the main contributions below.}

\subsection{Our Contribution}
% To tackle the limitation of OMD, %and answer this question, 
We first present \texttt{Banker-OMD}, a novel framework generalizing the classical OMD framework to handle feedback delays efficiently. Unlike classical OMD that plans a new action based only on the potential term of the previous \textit{single} round, \texttt{Banker-OMD} proposes a new perspective to take \textit{all} previous rounds into account, providing a robust rule to plan new actions when action feedback is delayed.
The \texttt{Banker-OMD} framework achieves $\Otil((\sqrt{T} + \sqrt{D})f(T))$ regret for various delayed bandit learning tasks, where $T$ is the number of rounds, $D$ is the total delay, and $f(T)$ is a task-specific factor irrelevant to delays.
More importantly, our framework keeps the flexibility of the classical OMD framework --- it depends on neither the choice of the regularizer nor the loss estimators and can be equipped with various OMD techniques (e.g., doubling tricks).

To demonstrate the power of \texttt{Banker-OMD}, we apply it to two important bandit learning scenarios with delayed feedback. The first one is delayed scale-free adversarial MAB (\Cref{sec:delayed scale-free MAB}): in addition to the usual delays (i.e., the feedback of $t$-th round can suffer a delay of $d_t$), the losses can fall into a general \textit{unknown} range $[-L,L]$ instead of the restrictive range $[0,1]$ \citep{zimmert2020optimal,thune2019nonstochastic}. As we will explain, such a generalization is very important in real-world scenarios such as advertisement recommendations. We design algorithms that can achieve $\Otil(\sqrt{K}L(\sqrt T+\sqrt D))$ regret. Additionally, a small-loss style bound \citep{neu2015first} is also provided for better data adaptivity.
Consequently, when applying our algorithm to the non-delayed scale-free adversarial MAB problem, our bound dominates the SOTA $\Otil(\sqrt{KL_2}+L_\infty \sqrt{KT})=\Otil(K\sqrt TL)$ \citep{putta2021scale} and matches the $\Omega(\sqrt{KT}L)$ lower bound \citep{auer2002nonstochastic} up to logarithmic factors, which can be of independent interest (see \Cref{remark:comparision with putta} for more discussions).

The second one is delayed adversarial linear bandits (\Cref{sec-application-linear}), where the feedback can be \textit{arbitrarily} delayed and the losses are adversarial, significantly strengthening the uniformly delayed setting \citep{ito2020delay} or stochastic linear bandit setting \citep{NEURIPS2019_56cb94cb,vernade2020linear}. In this setting, an \texttt{Banker-OMD}-based $\Otil(\text{poly}(n)(\sqrt T+\sqrt D))$-regret algorithm is yielded.

% We prove that our \texttt{Banker-OMD} framework leads to a near-optimal $\Otil(\sqrt{K(D+T)}L)$ algorithm for delayed scale-free adversarial MAB (where $K$ is the number of arms and $L$ is the largest magnitude of losses, hidden to the agent), and an $\Otil(n^{3/2}\sqrt T+n^2\sqrt D)$ regret algorithm for delayed adversarial linear bandits (where $n$ is the ambient dimension).
To our knowledge, we are the first to handle feedback delays in adversarial scale-free MABs and the first to allow \textit{arbitrary} (i.e., unrestricted and also unknown) delays in adversarial linear bandits, respectively.
For a better comparison, we also present an overview of our algorithms together with several related works in \Cref{tab:table-regret} (in the appendix).

\subsection{Related Work}
Due to space limitations, only the most relevant works are discussed. \Cref{sec:related work} gives a more thorough discussion.

For the vanilla delayed adversarial MABs (with $[0,1]$-bounded losses), 
\citet{bistritz2019online,thune2019nonstochastic,zimmert2020optimal} achieved $\Otil(\sqrt{D}+\sqrt{KT})$ regret, which is optimal compared to the $\Omega(\sqrt{KT}+\sqrt{D\log K})$ lower bound \citep{cesa2016delay} up to logs. However, our work \textit{does not} build upon any of them. In \Cref{sec:zimmert paper}, we discuss why we develop a new framework.

For scale-free learning, most works in this line consider full-information feedback \citep{orabona2018scale} or stochastic MABs \citep{hadiji2020adaptation}. The only upper bound for adversarial MABs is $\Otil(K\sqrt T L)$ \citep{putta2021scale}, which does not consider feedback delays and still has a $\sqrt K$ gap with the $\Omega(\sqrt{KT}L)$ lower bound \citep{auer2002nonstochastic}. See \Cref{remark:comparision with putta} for more discussions on this.

For delayed linear bandits, \citet{ito2020delay} studied adversarial linear bandits with \textit{known uniform} delays and achieved $\Otil(\sqrt{n(n+d)T})$ regret. All other works (see, e.g., \citep{NEURIPS2019_56cb94cb,vernade2020linear}) only consider stochastic losses. On the contrary, our algorithm can handle both adversarial losses and \textit{arbitrary and unknown} feedback delays.

% \daiyan{keep MDP? or just briefly mention a lot of settings?}

% In addition to these settings, \citet{jin2022near} studied Markov Decision Processes with adversarial losses and unrestricted feedback delays, which is beyond the scope of our paper. 

%At last, we would like to mention the work by \citet{huang2021robust}. \daiyan{I think we should mention it here as nobody will check the appendix :D; a brief overview and then ref \Cref{sec:comparison with ITC paper}}. \jiatai{I think it is better not to mention the ITC paper. The initial Arxiv version of the banker-OMD paper is earlier than the ITC one.}

% \daiyan{maybe also put in the appendix something about \citep{zimmert2020optimal} and \citep{ito2021parameter}?}

\section{Problem Setup: Delayed Adversarial MAB}
\textbf{Notations.}
For $n\ge 1$, we denote the set $\left\{1,2,\ldots, n\right\}$ by $[n]$. We denote the probability simplex over $[n]$ by $\triangle^{[n]}$. We use $\mathbf 0$ to denote the zero vector. We use $\mathbf{1}_{i}$ to denote the one-hot vector with $1$ on the $i$-th coordinate, i.e., $(\mathbf 1_i)_j=\mathbbm{1}[i=j]$. We use $\Otil$ and $\tilde{\Theta}$ to ignore all logarithmic dependencies.
Let $f$ be a strictly convex function defined on some convex domain $A \subseteq \mathbb{R}^K$. For any $x,y \in A$, if $\nabla f(x)$ exists, denote the Bregman divergence between $y$ and $x$ induced by $f$ as
\begin{equation*}
D_f(y, x) \triangleq f(y) - f(x) - \langle \nabla f(x), y-x\rangle.
\end{equation*}
We use $f^\ast(y) \triangleq \sup_{x \in \mathbb{R}^K} \left\{ \langle y, x \rangle - f(x) \right\}$
to denote the Fenchel conjugate of $f$. At last, we use $\bar{f}$ to denote the restriction of function $f$ on $\triangle^{[K]}$, i.e., $\bar{f}(x)=f(x)$ if $x\in \triangle^{[K]}$ and $\bar{f}(x)=\infty$ otherwise.
% \begin{equation*}
%     \bar{f}(x) \triangleq  \begin{cases}
%     f(x), & x \in \triangle^{[K]} \\
%     \infty, & x \notin \triangle^{[K]}
%     \end{cases}.
% \end{equation*}

% Throughout this paper, we use $\Otil$, $\tilde{\Omega}$ or $\tilde{\Theta}$ to suppress poly-logarithmic factors in $T$ (the number of rounds), $D$ (the total delay), $K$ (the number of arms in MABs), $L$ (the maximum magnitude of losses in scale-free settings) or $n$ (the ambient dimension in linear bandits).

This section focuses on delayed adversarial MABs \citep{zimmert2020optimal}, which we use to introduce our \texttt{Banker-OMD} framework. For the more general delayed scale-free adversarial MAB setting and delayed adversarial linear bandit setting, please refer to \Cref{sec:delayed scale-free MAB,sec-application-linear}. % \daiyan{revise}

In a delayed adversarial MAB, there are $K \ge 2$ available actions (arms) and $T\ge 1$ rounds. For each round $t\in [T]$, there is a loss vector $l_t\in [0,1]^{K}$ obliviously determined by an adversary.
%\footnote{In the standard delayed MAB setting, we assume $l_{t,a} \in [0,1]$. We will study the scale-free setting in \Cref{sec:delayed scale-free MAB}, where losses fall into a general and unknown range. Besides, we allow the adversary to be adaptive, i.e., $l_t$ can depend on the history up to round $t-1$.}
Meanwhile, there is also an oblivious delay $d_t$ associated with this round (also decided by the adversary).\footnote{For simplicity, we focus on the standard arm-independent delay model in the main text. Our framework is actually capable of the more general \textit{arm-dependent} delay model; see \Cref{apdx-arm-dependent-delays}.}
We define $D\triangleq \sum_{t=1}^T d_t$ following the convention \citep{bistritz2019online}. Losses and delays are all \emph{hidden} to the agent. 

At the beginning of $t$-th round, the agent chooses an action $A_t \in [K]$. The feedback $(t, l_{t, A_t})$ will be revealed to the agent at the end of the $t+d_t$-th round.
%(i.e., after $d_t$ rounds; note that in bandit learning tasks, only the losses corresponding to picked actions, i.e., $l_{t,A_t}$, will be revealed to the agent).
The agent can decide $A_t$ based on all historical actions,
%$\left\{ A_1,\ldots,A_{t-1}\right\}$,
all arrived observations,
% $\left\{ (s, l_{s, A_s}) : s+d_s < t \right\}$, 
and any private randomness.
Note that the agent does \textbf{not} have direct access to $d_t$'s, though it can infer the delay $d_t$ \textit{after} its feedback has arrived, i.e., after $d_t$ rounds.

The agent's objective is to minimize the difference between its total loss, i.e., $\sum_{t=1}^T l_{t, A_t}$ and the minimal possible total loss incurred by a single action. Formally speaking, we use pseudo-regret (also referred to as regret for convenience) as the performance metric for any MAB algorithm.
\begin{definition}
The pseudo-regret of an MAB algorithm is
\begin{equation*}
    \mathfrak{R}_T \triangleq \max_{i \in [K]} \E\left [\sum_{t=1}^T l_{t, A_t} - \sum_{t=1}^T l_{t, i}\right ],\label{eq:regret-def}
\end{equation*}
where the expectation is taken to both the algorithm's internal randomness and randomness from the environment. 
\end{definition}

\begin{comment}
\subsection{Assumptions on Losses and Delays}
\daiyan{shorter; combine to prev sec}
We focus on adversarial settings where the losses $l_t$ are chosen by some adversarial opponent. Formally, at the beginning of round $t$ (at the same time when the agent makes decisions), an adversary  arbitrarily picks a loss vector $l_t\in \mathbb R^K$ (and keeps it as a secret). The loss vector $l_t$ may be adaptive, i.e., it can depend on the history up to time $t-1$.

%\daiyan{do we assume oblivious adversary or adaptive one?}

Similarly to the assumption on the losses, we focus on \textit{obliviously adversarial} delays in this paper. That is, the delay sequence $d_1,d_2,\ldots,d_T$ is arbitrarily decided before the game starts. Following the convention \citep{bistritz2019online, thune2019nonstochastic, zimmert2020optimal}, we use the total delay $D \triangleq  \sum_{t=1}^T d_t$ to express our regret bounds.

% assumptions on the delay  sequence $\left\{d_t\right\}_{t=1}^T$ also lead to  different settings. Typical choices include  stochastic delay lengths, where $d_t$'s are i.i.d. samples from a fixed distribution, and (obliviously) adversarial delay lengths, where $d_t$'s are chosen by an adversarial opponent at the beginning.

In the classical delayed MAB model, losses (the entries in $l$) are assumed to be bounded in some bounded range that is known beforehand. Without loss of generality, we shall first assume $l_{t,a} \in [0,1]$ for all $t\in [T]$ and $a\in [K]$. We will study in \Cref{sec:delayed scale-free MAB}  the scale-free setting where losses can fall into a general and unknown range. %, please refer to \Cref{sec:delayed scale-free MAB}.
\end{comment}

\section{Vanilla OMD Framework}
\label{sec-classical-omd}
We review the vanilla Online Mirror Descent (OMD) framework for non-delayed MAB problems in \Cref{vanilla-omd}. It has been widely used in many bandit learning works (e.g., \citep{abernethy2008competing, abernethy2015fighting, audibert2014regret}).
Note that we use action scales $\sigma_t$ instead of learning rates $\eta_t$ for the ease of presentation, whose relationship is $\sigma_t=\eta_t^{-1}$.

\begin{algorithm}[htb]
\caption{Vanilla OMD for MAB}
\label{vanilla-omd}
\begin{algorithmic}[1]
\REQUIRE{Number of arms $K$, rounds $T$, Legendre regularizer $\Psi:\mathbb{R}_+^K\ \rightarrow \mathbb{R}$ (defined in \Cref{def:legendre}), initial action $x_1 \in \triangle^{[K]}$, action scales $\sigma_1, \ldots, \sigma_T$.}
% \KwOut{A sequence of actions $A_1, A_2,\ldots \in [K]$}
% \BlankLine
\FOR{$t=1,2,\ldots,T$}
    \STATE Sample $A_t \in [K]$ according to $x_t$. Pull arm $A_t$.
    \STATE Receive $l_{t,A_t}$. Calculate $\tilde{l}_t \leftarrow \frac{l_{t,A_t}}{x_{t,A_t}}\mathbf{1}_{A_t}$. \alglinelabel{line-importance-sampling}
    \STATE Set $x_{t+1} \leftarrow \nabla\bar{\Psi}^*(\nabla\Psi(x_t) - \frac{1}{\sigma_t}\tilde{l}_t)$.
\ENDFOR
\end{algorithmic}
\end{algorithm}

Before presenting our framework, we first sketch the standard analysis of an OMD-based algorithm. By the OMD framework itself, we can conclude that (see, e.g., Theorem 28.4 by \citet{lattimore2020bandit}):
\begin{align}
&\quad \E[\langle l_t, x_t - y \rangle]=\E[\langle \tilde{l}_t, x_t - y \rangle]\qquad (\forall y\in \triangle^{[K]})\nonumber\\
&\le \E[\sigma_t D_\Psi(y, x_t) - \sigma_t D_\Psi(y, z_t) + \sigma_t D_\Psi(x_t, \tilde{z}_t)], \label{eq-omd-basic-mab}
\end{align}
where $z_t$ and $\tilde z_t$ are defined as
\begin{align}
z_t &= \nabla\bar{\Psi}^* \big (\nabla\Psi(x_t) - \sigma_t^{-1}\tilde{l}_t \big ), \nonumber\\
\tilde{z}_t &= \nabla\Psi^* \big (\nabla\Psi(x_t) - \sigma_t^{-1}\tilde{l}_t \big ). \label{eq-z-def}
\end{align}

\Cref{eq-omd-basic-mab} then gives the following by telescoping sums:
\begin{align}
\mathfrak R_T\le \sigma D_\Psi(y, x_1) + \sum_{i=1}^T\sigma\E[D_\Psi(x_t, \tilde{z}_t)]. \label{eq-omd-mab-cumu-fix-y}
\end{align}

When $\Psi$ satisfies certain conditions, one can find constants $C_1$ and $C_2$ such that (see, e.g., \citep{abernethy2015fighting}):
\begin{itemize}
\item $D_\Psi(y, x_1) \le C_1$ for any $y\in \triangle^{[K]}$, and
\item $\E[\sigma_t D_\Psi(x_t, \tilde{z}_t)] \le \frac{C_2}{\sigma_t}$ for any $x_t \in \triangle^{[K]}$ and $\sigma_t > 0$.
\end{itemize}

In such cases, we say $(\Psi, x_1)$ is \textit{$(C_1, C_2)$-regular}. Setting $\sigma_t\equiv \sqrt{\frac{C_2}{C_1}T}$ then gives the following according to \Cref{eq-omd-mab-cumu-fix-y}:
\begin{align}
\mathfrak R_T
% &\le \sigma D_\Psi(y, x_1) - \sigma \E[D_\Psi(y, z_T)] + \sum_{i=1}^T\sigma\E[D_\Psi(x_t, \tilde{z}_t)]\\
& \le \sqrt{\frac{C_2}{C_1}T}C_1+T\left (\sqrt{\frac{C_2}{C_1}T}\right )^{-1} C_2 \nonumber\\
& =\O\big (\sqrt{C_1C_2T}\big ). \label{eq:vanilla OMD C_1C_2}
\end{align}

However, as noticed, the classical OMD framework heavily relies on the availability of $l_{t,A_t}$ at the end of round $t$ (based on which we compute $\tilde z_t$). 
Consequently, it's not easy to make an OMD-based algorithm capable of feedback delays.

To resolve this issue, the previous solution in the literature is to carefully design the loss estimators $\tilde l_{t}$ and/or the regularizer $\Psi$ (see, e.g., \citep{zimmert2020optimal,putta2021scale}).
However, such an idea limits the use of OMD to complicated applications like scal-free delayed adversarial MABs or delayed adversarial linear bandits.
We, on the other hand, design an enhanced framework that can easily handle feedback delays, leaving the regularizers and estimators to the user, as we will see in \Cref{sec:delayed scale-free MAB,sec-application-linear}.
%in delayed MAB problems, the fact that $l_{t,A_t}$ is revealed at the end of round $t+d_t$ forbids us to directly apply existing OMD-based algorithms. 

\section{\texttt{Banker-OMD} for Delayed Bandit Feedback}
\label{sec-banker-omd}
We are ready to introduce our \texttt{Banker-OMD} framework. We begin by inspecting the classical \Cref{eq-omd-basic-mab} for the vanilla OMD framework, with a new set of terminologies for the terms This gives a better understanding of our framework. % ``Banker'' OMD. % \daiyan{maybe rewrite the last half e.g. alias to terminology?; also rename following sections} \jiatai{is this better?}

\subsection{Terminologies in Regret Decomposition}
To begin with, we sum \Cref{eq-omd-basic-mab} over $t=1,2,\ldots , T$ for an upper bound of $\mathfrak{R}_T$ and rename the terms as follows:
\begin{align}
\mathfrak{R}_T & \le \sum_{t=1}^T \underbrace{\E[\sigma_t D_\Psi(y, x_t)]}_{\text{withdrawal}} - \sum_{t=1}^T \underbrace{\E[\sigma_t D_\Psi(y, z_t)]}_{\text{saving}} \nonumber \\
&\quad + \sum_{t=1}^T \underbrace{\E[\sigma_t D_\Psi(x_t, \tilde{z}_t)]}_{\text{immediate cost}} \label{eq-omd-basic-mab-sum},
\end{align}
where $z_t$ and $\tilde z_t$ are defined in \Cref{eq-z-def}.
% Note that \Cref{eq-omd-basic-mab-sum} in fact holds for \textit{any} action sequence $(x_t)_{t=1}^T$, no matter whether it is generated by the vanilla OMD framework. That's why we can modify the OMD framework. % \daiyan{what's any? simply any sequence suffices?} \jiatai{``any sequence'' is ok to me, while rigorously speaking, $x_t$ and $z_t$'s are r.v.'s.} % and not actually used by $\mathcal{A}$.

% To facilitate understanding of our key idea and algorithm procedure, in \Cref{eq-omd-basic-mab-sum}, we give aliases to each term. Specifically, $\sigma_t$ is called the ``action scale,'' $\sigma_t D_\Psi(y, x_t)$ is called the ``withdrawal term,'' $\sigma_t D_\Psi(y, z_t)$ is called the ``saving term,'' $\sigma_t D_\Psi(x_t, \tilde{z}_t)$ is called the ``immediate cost term''.

We now explain the idea behind the names in \Cref{eq-omd-basic-mab-sum}. Let us first look at the third term $\sigma_t D_\Psi(x_t, \tilde{z}_t)$. 
As mentioned in \Cref{sec-classical-omd}, one can bound $\E[\sigma_t D_\Psi(x_t, \tilde{z}_t) \mid  \mathcal{F}_{t-1}]\le \frac{C_2}{\sigma_t}=\O(\sigma_t^{-1})$ when $(\Psi,x_0)$ is chosen to be ``good enough.'' In other words, they can be bounded without affecting the first two terms. Hence, we refer to them as ``immediate costs'' as they immediately contribute to $\mathfrak R_T$ once $\sigma_t$'s are decided.

Then consider {\small $\sum_{t=1}^T \E[\sigma_t D_\Psi(y, x_t)] - \sum_{t=1}^T \E[\sigma_t D_\Psi(y, z_t)]$}. 
If we choose a constant scale $\sigma_t=\sigma$ and pick $x_{t+1} \leftarrow z_t$ (as in \Cref{sec-classical-omd}), it becomes a telescoping sum bounded by $\sigma D_\Psi(y, x_1) - \sigma D_\Psi(y, z_s)$. 
Therefore, we can view it in a step-by-step manner: In round $t$, we manage to reduce some regret by incurring a negative term $-\sigma D_\Psi(y,z_t)$. However, we will immediately use this term to pay the new regret $\sigma D_\Psi(y,x_{t+1})$.
After that, we compute $z_{t+1}$ and introduce a new negative term $-\sigma D_\Psi(y,z_{t+1})$ for subsequent rounds.
Thus, the $-\sigma D_\Psi(y,z_t)$ terms can be viewed as bank ``savings,'' which ensures that one can ``withdraw'' $\sigma D_\Psi(y, x_{t+1})$ to pay the cost of $x_{t+1}$ in the future, without overdrafting.

%\begin{remark}
%The above telescoping sum approach also implies an interesting trade-off: For a fixed regularizer $\Psi$ and an initial action $x_1$, the larger $\sigma$ we choose, the larger the ``initial investment'' $\sigma D_\Psi(y, x_1)$ and the smaller ``immediate costs'' $\E[\sigma D_\Psi(x_t, \tilde{z}_t)]=\O(1/\sigma)$ become. \daiyan{no ending...} \jiatai{err.. any idea?} \daiyan{suggest removing this remark :D such a trade-off just appears in vanilla OMD}
%\end{remark}

\subsection{Beyond Telescoping: Utilizing Multiple Savings}
%\jiatai{need to come up with a better subsec title} 
%\daiyan{first sentence not very logical: why \Cref{lemma-banker-convex-comb} has sth to do with delays?}

We generalize the banker idea above to tackle the challenge of potentially absent feedback:
We use \textit{more than one} ``saving terms'' accumulated in previous rounds to make up the current ``withdrawal'' $\tilde \sigma D_\Psi(y,x)$ -- in contrast to vanilla OMD where only a \textit{single} saving is used.

Formally, we focus on a specific round $t\in [T]$. We aim to design the action $x_t$ and the action scale $\sigma_t$ wisely so that the withdrawal $\sigma_t D_\Psi(y,x_t)$ is covered by several previous savings from rounds $1\le t_1<t_2<\cdots<t_h<t$, namely $\sigma_{t_1}D_\Psi(y,z_{t_1})$, $\sigma_{t_2}D_\Psi(y,z_{t_2})$, ..., $\sigma_{t_h}D_\Psi(y,z_{t_h})$.\footnote{The vanilla OMD covered in the previous section therefore reduces to the special case of $h=1$ and $t_1=t-1$.}
In this section, we treat $t_1,t_2,\ldots,t_h$ as some given sequence -- its decision rule is the focus of the next section.

In this case, we have \Cref{lemma-banker-convex-comb}, where the choice of $x$ in \Cref{eq:convex-com} is analog to $z_t$ used in vanilla OMD (\textit{c.f.} \Cref{eq-z-def}).
\begin{lemma}
\label{lemma-banker-convex-comb}
For any $z_1,\ldots, z_h \in \triangle^{[K]}$, $\sigma_1,\ldots,\sigma_h > 0$ and Legendre convex $\Psi: \mathbb{R}_+^K \rightarrow \mathbb{R}$, let $\tilde \sigma = \sum_{i=1}^h \sigma_i$ and
\begin{equation}
    x = \nabla\bar{\Psi}^*\left (\sum_{i=1}^h \frac{\sigma_i}{\tilde \sigma}\nabla\Psi(z_i)\right ), \label{eq:convex-com}
\end{equation}
then we have
\begin{equation*}
    \sigma D_\Psi(y, x) \le \sum_{i=1}^m \sigma_i D_\Psi(y, z_i),\quad \forall y \in \triangle^{[K]}.
\end{equation*}
\end{lemma}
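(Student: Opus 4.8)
My plan is to route the comparison through the \emph{unprojected} mirror point. Write $\theta \triangleq \sum_{i=1}^h \frac{\sigma_i}{\tilde\sigma}\nabla\Psi(z_i)$, so that the point in the statement is $x = \nabla\bar\Psi^*(\theta)$, and introduce the auxiliary point $\tilde x \triangleq \nabla\Psi^*(\theta)$, which satisfies $\nabla\Psi(\tilde x) = \theta$. I will prove the two inequalities $\tilde\sigma D_\Psi(y,x) \le \tilde\sigma D_\Psi(y,\tilde x)$ and $\tilde\sigma D_\Psi(y,\tilde x) \le \sum_{i=1}^h \sigma_i D_\Psi(y,z_i)$ for every $y \in \triangle^{[K]}$, and chain them. (Here I read the ``$\sigma$'' and ``$m$'' in the displayed inequality as $\tilde\sigma$ and $h$.)

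For the first inequality I use a Bregman-projection argument. Since $\nabla\Psi(\tilde x)=\theta$, for any $u \in \triangle^{[K]}$ we have $D_\Psi(u,\tilde x) = \Psi(u) - \langle\theta,u\rangle + \Psi^*(\theta)$, so $x = \nabla\bar\Psi^*(\theta) = \arg\min_{u\in\triangle^{[K]}}\{\Psi(u)-\langle\theta,u\rangle\}$ is exactly the Bregman projection of $\tilde x$ onto $\triangle^{[K]}$. The first-order optimality condition at $x$ reads $\langle\nabla\Psi(x)-\nabla\Psi(\tilde x),\, y-x\rangle\ge 0$ for all $y\in\triangle^{[K]}$, and plugging this into the three-point identity $D_\Psi(y,\tilde x)=D_\Psi(y,x)+D_\Psi(x,\tilde x)+\langle\nabla\Psi(x)-\nabla\Psi(\tilde x),\,y-x\rangle$ gives $D_\Psi(y,\tilde x)\ge D_\Psi(y,x)+D_\Psi(x,\tilde x)\ge D_\Psi(y,x)$; multiplying by $\tilde\sigma>0$ yields the claim.

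For the second inequality I expand both sides using the Fenchel--Young equalities $\Psi^*(\nabla\Psi(z_i))=\langle\nabla\Psi(z_i),z_i\rangle-\Psi(z_i)$ and $\Psi^*(\theta)=\langle\theta,\tilde x\rangle-\Psi(\tilde x)$ (valid because the defining suprema are attained, $\Psi$ being Legendre). Using $\sum_{i=1}^h\sigma_i\nabla\Psi(z_i)=\tilde\sigma\theta$, the $\Psi(y)$- and $\langle\theta,y\rangle$-terms cancel and one is left with
\[
\sum_{i=1}^h\sigma_i D_\Psi(y,z_i)-\tilde\sigma D_\Psi(y,\tilde x)=\sum_{i=1}^h\sigma_i\Psi^*(\nabla\Psi(z_i))-\tilde\sigma\,\Psi^*(\theta).
\]
Since $\theta=\sum_{i=1}^h\frac{\sigma_i}{\tilde\sigma}\nabla\Psi(z_i)$ is a convex combination of the $\nabla\Psi(z_i)$ and $\Psi^*$ is convex, Jensen's inequality gives $\tilde\sigma\,\Psi^*(\theta)\le\sum_{i=1}^h\sigma_i\Psi^*(\nabla\Psi(z_i))$, so the right-hand side above is $\ge 0$. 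Chaining the two inequalities finishes the proof.

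The step I expect to require the most care is the first one: justifying that $x=\nabla\bar\Psi^*(\theta)$ behaves as a genuine Bregman projection and that the first-order optimality condition is meaningful even when $x$ (or some $z_i$) sits on the boundary of $\triangle^{[K]}$, where $\nabla\Psi$ may blow up. This is handled by the standard fact that for a Legendre $\bar\Psi$ the image of $\nabla\bar\Psi^*$ lies in the relative interior of $\triangle^{[K]}$, so $\nabla\Psi(x)$ is well defined; everything else is routine Fenchel-duality bookkeeping plus one application of Jensen.
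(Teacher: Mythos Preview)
Your proof is correct and follows essentially the same route as the paper: the paper also introduces the unprojected point $\tilde x=\nabla\Psi^*(\theta)$, bounds $D_\Psi(y,x)\le D_\Psi(y,\tilde x)$ via the Bregman Pythagorean inequality, and then uses convexity of $\Psi^*$ for the second step. The only cosmetic difference is that the paper phrases the second step through the duality $D_\Psi(y,\tilde x)=D_{\Psi^*}(\nabla\Psi(\tilde x),\nabla\Psi(y))$ and convexity of $D_{\Psi^*}$ in its first argument, whereas you expand directly and apply Jensen to $\Psi^*$; these are the same computation.
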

In other words, once we set $\sigma_t$ as $\sum_{i=1}^h \sigma_{t_i}$ and construct $x_t$ from \Cref{eq:convex-com}, we automatically ensure $\sigma_t D_\Psi(y,x_t)\le \sum_{i=1}^h \sigma_{t_i} D_\Psi(y,z_{t_i})$ -- i.e., the $t$-th withdrawal is covered by the savings from rounds $t_1,t_2,\ldots,t_h$ -- as desired.

\subsection{``Over-Drafting'' in Case of Saving Shortage}
In the presence of delayed feedback, the total accumulated savings might not be always enough. For example, there is no feedback in the first few rounds, but we still need to decide $x_t$ in real time.
To cover the remaining part of the withdrawal, we propose to ``over-draft'' some more savings via investing on a \textit{default action} $x_0\in \triangle^{[K]}$. 
Specifically, if we want to pick an action scale $\sigma_t>\sum_{i=1}^h \sigma_{t_i}$. Let the shortage be $b_t=\sigma_t - \sum_{i=1}^h \sigma_{t_i}$. We then add and subtract $\E[b_t] D_\Psi(y,x_0)$ to the RHS of \Cref{eq-omd-basic-mab-sum}, which gives
\begin{align}
&\mathfrak{R}_T \le \sum_{t=1}^T \underbrace{\E[\sigma_t D_\Psi(y, x_t)]}_{\text{withdrawal}} - \sum_{t=1}^T \underbrace{\E[\sigma_t D_\Psi(y, z_t)]}_{\text{saving}} \nonumber \\
& \qquad\,\, -\underbrace{\E[b_t] D_\Psi(y,x_0)}_{\text{imaginary saving}} + \sum_{t=1}^T \underbrace{\E[\sigma_t D_\Psi(x_t, \tilde{z}_t)]}_{\text{immediate cost}} \nonumber
 \\
 &\qquad\,\,  + \underbrace{\E[b_t] D_\Psi(y,x_0)}_{\text{investment cost}}.\label{eq:regret decomposition after investment}
\end{align}

As desired, we can use the negative term $- b_t D_\Psi(y,x_0)$ as an ``imaginary'' saving -- in addition to the $h$ ``actual'' savings $\sum_{i=1}^h \sigma_{t_i} D_\Psi(y,z_{t_i})$ -- to apply \Cref{lemma-banker-convex-comb}. This allows us to pay the withdrawal due to the following action:
\begin{equation}
x_t = \nabla\bar{\Psi}^*\left (\sum_{i=1}^h \frac{\sigma_{t_i}}{\sigma_t}\nabla\Psi(z_{t_i}) + \frac{b_t}{\sigma_t}\nabla\Psi(x_0)\right ),\label{eq:enhanced decision rule of x}
\end{equation}
which exactly uses up the $h$ actual savings $\sigma_{t_i}D_\Psi(y,z_{t_i})$ together with the imaginary one $-b_tD_\Psi(y,x_0)$.

We call $b_t$ an ``investment'': When the total savings are insufficient for a new action, we ``invest'' in some $x_0$ to make up the difference and proceed. As illustrated in \Cref{eq:regret decomposition after investment}, such investments are not for free. They each introduces an \textit{investment cost} $\E[b_t] D_\Psi(y,x_0)$ to our total regret.

Intuitively, if for each $t\in [T]$ we can assign a sequence of previous savings $\{(\sigma_{t,i},z_{t,i})\}_{i=1}^{h_t}$ together with an investment $b_t$, we can set $\sigma_t=b_t+\sum_{i=1}^{h_t} \sigma_{t,i}$, determine $x_t$ according to \Cref{eq:enhanced decision rule of x}, and yield the following regret bound:
\begin{align}
\mathfrak R_T
% &\lesssim \sum_{t=1}^T \E[b_t] D_\Psi(y,x_0)+\sum_{t=1}^T \E[\sigma_t D_\Psi(x_t,\tilde z_t)]\nonumber\\
&\lesssim\underbrace{\E[B_T]\cdot D_\Psi(y,x_0)}_{\text{total investment cost}}+\sum_{t=1}^T \underbrace{\E[\sigma_t D_\Psi(x_t,\tilde z_t)]}_{\text{immediate cost}}, \label{eq:informal decomposition of regret}
\end{align}
where $B_T\triangleq \sum_{t=1}^T b_t$ is called the \textit{total investment}.

\begin{algorithm}[!t]
\caption{\texttt{Banker-OMD} Framework}
\label{banker-omd}
\begin{algorithmic}[1]
\REQUIRE{Number of arms $K$, regularizer and default investment $(\Psi,x_0)$, subroutine to pick action scales $S$}
\ENSURE{A sequence of actions $A_1, A_2,\ldots \in [K]$}
\STATE Initialize $B_0\leftarrow 0$ \COMMENT{maintain the total investment}
\FOR{$t=1,2,\ldots,T$}
    \STATE $a_t \leftarrow \mathrm{missing}$. \COMMENT{whether feedback of $t$ has arrived}
    \STATE $\sigma_t \leftarrow S(t, \{(A_s,a_s,l_{s,A_s},\sigma_s,v_s)\}_{s<t})$. \COMMENT{a user-specified rule of deciding the action scale $\sigma_t$}
    \STATE $v_t \leftarrow \sigma_t$. \COMMENT{coefficient of the saving term $D_\Psi(y, z_t)$} \alglinelabel{line:initialize v in Banker}
    \STATE Initialize $b_t\gets \sigma_t$. For all $s<t$ such that $a_s=\text{arrived}$, set $\sigma_{t,s}=\min\{v_s,b_t\}$ and $b_t\gets b_t-\sigma_{t,s}$.\COMMENT{determine the amount of ``investment'' and ``savings'' to form $x_t$ by minimizing $b_t$ while ensuring \Cref{eq:conditions of b and m}}\alglinelabel{line:deciding b and m}
    %$\sigma_t^{(prev)} \leftarrow b_t + \sum_{i=1}^{t-1}m_{t,i}$\;
    %$\sigma_t^{(self)} \leftarrow \sigma_t - \sigma_t^{(prev)}$\;
    \STATE $B_t \leftarrow B_{t-1} + b_t$. \COMMENT{update total investment $B_t$}
    \FOR{$s=1,2,\ldots,t-1$}
        \STATE $v_s \leftarrow v_s - \sigma_{t,s}$. \COMMENT{spend $\sigma_{t,s}$ units of savings}\alglinelabel{line:spend savings in banker}
    \ENDFOR
    
    \STATE $x_t \leftarrow \nabla\bar{\Psi}^*(\frac{1}{\sigma_t}\sum_{s=1}^{t-1}\sigma_{t,s}\nabla\Psi(z_s)+\frac{b_t}{\sigma_t}\nabla\Psi(x_0))$.\label{line:xt-eq} \COMMENT{decide $x_t$ according to \Cref{lemma-banker-convex-comb} and \Cref{eq:enhanced decision rule of x}} \alglinelabel{line:decide x_t in banker}
    \STATE Sample $A_t \in [K]$ according to $x_t$, pull arm $A_t$.
    \FOR{upon receiving each new feedback $(s, l_{s,A_s})$}
        \STATE $\tilde{l}_s \leftarrow \frac{l_{s,A_s}}{x_{s,A_s}}\mathbf{1}_{A_s}$. $z_s \leftarrow \nabla\bar{\Psi}^*(\nabla\Psi(x_s) - \frac{1}{\sigma_s}\tilde{l}_s)$.
        % $\tilde{z}_s \leftarrow \nabla\Psi^*(\nabla\Psi(x_s) - \frac{1}{\sigma_s}\hat{l}_s)$\;
        \STATE $a_s \leftarrow\mathrm{arrived}$. \COMMENT{saving $-\sigma_s D_\Psi(y, x_s)$ available}
    \ENDFOR
\ENDFOR
\end{algorithmic}
\end{algorithm}

\subsection{Formal Framework: \texttt{Banker-OMD}}
% Note that \texttt{Banker-OMD}  offers a general  algorithm \textit{framework}, which offers great flexibility in algorithm design.
In this section, we focus on assigning $\{(\sigma_{t,i},z_{t,i})\}_{i=1}^{h_t}$ and $b_t$ to every round $1\le t\le T$ so that \Cref{eq:informal decomposition of regret} is ensured.

For each round $s<t$, we denote its corresponding ``remaining savings'' as $v_s$, i.e., the coefficient before $D_\Psi(y,z_s)$ in the current regret upper bound is $-v_s$.
Thus, $v_s$ is initialized to $0$ and becomes $\sigma_s$ once the $s$-th feedback arrives (so $z_s$ is revealed and the saving $\sigma_s D_\Psi(y,z_s)$ becomes available).
As sketched in the previous section, we must decompose each action scale $\sigma_t$ into an investment $b_t$ together with the sum of several previous savings -- if we use $\sigma_{t,s}$ units of saving from round $s$ when constructing $\sigma_t$, then we need
\begin{equation}\label{eq:conditions of b and m}
b_t+\sum_{s=1}^{t-1}\sigma_{t,s} = \sigma_t,\quad \text{and}\quad \sigma_{t,s} \le v_s,
\end{equation}
% where $b_t$ means the new investment (i.e., we make an imaginary saving $b_tD_\Psi(y,x_0)$) and $m_{t,s}$ denotes the saving we withdraw from $v_s D_\Psi(y,z_s)$.
As $b_t$ causes extra investment cost, we shall minimize $b_t$ while ensuring \Cref{eq:conditions of b and m} -- in our framework, we only allocate $b_t$ if using up all previous $v_s$'s is still insufficient.%\footnote{In the algorithm framework \Cref{banker-omd}, we greedily use up all previous $v_s$'s one-by-one for the ease of understanding. A better decision rule also exists with better computational efficiency -- see \Cref{sec:efficiency} for more discussions.}
% After deciding $b_t$ and $\{\sigma_{t,s}\}_{s<t}$, we then construct the action $x_t$ according to \Cref{eq:enhanced decision rule of x}. After that, the saving corresponding to round $s$ drops from $v_s$ to $(v_s - m_{t,s})$ by definition of $m_{t,s}$, which we maintain in \Cref{line:spend savings in banker}.

We formally present our framework in \Cref{banker-omd}, which we call \texttt{Banker-OMD}.
As we sketched in the previous section, we shall expect its regret to be controlled by the total investment $B_T$, the action scales $\sigma_t$, and the remaining savings $v_t$ -- which is formalized in the following theorem.
% This intuition is formalized in the following theorem. Compared with the informal bound (\Cref{eq:informal decomposition of regret}), this theorem considers the cumulative loss difference (i.e., regret without taking expectation), in accordance with \Cref{eq-omd-basic-mab}.

\begin{theorem}[\texttt{Banker-OMD} Regret Bound]
\label{thm-banker-omd}
At the end of any round $T$, for any $y \in \triangle^{[K]}$, we have
{\begin{align}
\sum_{t=1}^T \langle \tilde{l}_t, x_t - y\rangle & \le \underbrace{B_T \cdot D_\Psi(y, x_0)}_{\text{total investment cost}} + \underbrace{\sum_{t=1}^T \sigma_t D_\Psi(x_t, \tilde{z}_t)}_{\text{total immediate costs}} \nonumber\\
&\quad - \underbrace{\sum_{t=1}^T v_t D_\Psi(y, z_t)}_{\text{remaining savings}}, \label{eq-thm-banker-omd}
\end{align}}
where $B_T,\tilde{l}_1,\ldots,\tilde{l}_t,x_1,\ldots,x_t, z_1,\ldots,z_t$ are the variable values produced by \Cref{banker-omd} at the end of round $T$ and $\tilde z_t$ is defined as $\tilde{z}_t = \nabla\Psi^*(\nabla\Psi(x_t) - \frac{1}{\sigma_t}\tilde{l}_t)$.
\end{theorem}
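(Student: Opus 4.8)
The plan is to prove \Cref{thm-banker-omd} by induction on $T$, maintaining as the inductive hypothesis a stronger ``ledger'' invariant that tracks the full state of the algorithm. Concretely, at the end of round $t$ the claim should read: for every $y\in\triangle^{[K]}$,
\begin{equation*}
\sum_{s=1}^t \langle \tilde l_s, x_s - y\rangle \le B_t\, D_\Psi(y,x_0) + \sum_{s=1}^t \sigma_s D_\Psi(x_s,\tilde z_s) - \sum_{s\,:\,a_s=\mathrm{arrived}} v_s\, D_\Psi(y,z_s) - \sum_{s\,:\,a_s=\mathrm{missing}} \sigma_s\, D_\Psi(y,x_s),
\end{equation*}
where $B_t$, $v_s$, $\sigma_s$ are the values held by \Cref{banker-omd} at the end of round $t$. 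The point of carrying the extra ``$-\sigma_s D_\Psi(y,x_s)$ for rounds with missing feedback'' term is that it is exactly the withdrawal introduced when $x_s$ was played; once feedback for round $s$ arrives, this term must be converted into the saving term $-v_s D_\Psi(y,z_s)$ with $v_s=\sigma_s$. At the end of round $T$ we drop the (nonpositive) last sum and the (nonpositive, by \Cref{eq-omd-basic-mab}'s sign conventions after moving terms) residual to recover \Cref{eq-thm-banker-omd} with $v_t$ reinterpreted as the final remaining-savings value (zero for rounds whose feedback never arrived, which is why the statement only sums $v_t D_\Psi(y,z_t)$).

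**The inductive step.**

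First I would handle the action-selection part of round $t$. Using \Cref{line:deciding b and m,line:decide x_t in banker} of \Cref{banker-omd}, the action $x_t$ is built via \Cref{eq:enhanced decision rule of x} from the savings $\{(\sigma_{t,s},z_s)\}_{s<t,\,a_s=\mathrm{arrived}}$ and the investment $b_t$ on $x_0$, with $\sigma_t = b_t + \sum_s \sigma_{t,s}$. Applying \Cref{lemma-banker-convex-comb} (with the points $z_s$ and $x_0$, and weights $\sigma_{t,s}$ and $b_t$) yields $\sigma_t D_\Psi(y,x_t) \le \sum_s \sigma_{t,s} D_\Psi(y,z_s) + b_t D_\Psi(y,x_0)$. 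The bookkeeping in \Cref{line:spend savings in banker} subtracts exactly $\sigma_{t,s}$ from each $v_s$ and \Cref{line:initialize v in Banker} sets $v_t\gets\sigma_t$ with $a_t=\mathrm{missing}$; so adding the new withdrawal $-\sigma_t D_\Psi(y,x_t)$ to the RHS of the inductive hypothesis, and using the \Cref{lemma-banker-convex-comb} bound to rewrite it, shows the invariant is preserved through action selection, with $B_{t-1}$ increased to $B_t=B_{t-1}+b_t$ exactly as in the algorithm. Second, I would add the new loss-difference term $\langle\tilde l_t,x_t-y\rangle$ using \Cref{eq-omd-basic-mab} in the form $\langle\tilde l_t,x_t-y\rangle \le \sigma_t D_\Psi(y,x_t) - \sigma_t D_\Psi(y,z_t) + \sigma_t D_\Psi(x_t,\tilde z_t)$ (this is the single-round OMD inequality with $z_t,\tilde z_t$ as in \Cref{eq-z-def}, which is the same as the definition of $z_t$ in the feedback-processing loop of \Cref{banker-omd}); but note $\langle\tilde l_t,x_t-y\rangle$ only actually needs to be accounted for, and $z_t$ only becomes defined, once the feedback arrives — so I would instead split round $t$ into ``play'' and ``observe'' events and apply \Cref{eq-omd-basic-mab} at the observe event of round $t$, at which point $-\sigma_t D_\Psi(y,x_t)$ (a missing-feedback withdrawal term in the invariant) gets replaced by $-\sigma_t D_\Psi(y,z_t) + \sigma_t D_\Psi(x_t,\tilde z_t)$, i.e., $v_t$ flips from the ``missing'' sum to the ``arrived'' sum and an immediate cost $\sigma_t D_\Psi(x_t,\tilde z_t)$ is added. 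Summing over all play/observe events in their actual chronological order gives the invariant for round $T$.

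**The main obstacle.**

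The delicate point — and the main obstacle — is getting the accounting to line up when feedback events are interleaved arbitrarily with action events, rather than reasoning round-by-round. I would address this by indexing the proof not by $t$ but by the sequence of atomic events the algorithm processes (a ``play round $t$'' event and an ``observe round $s$'' event), and proving that each event type preserves the ledger invariant: a play event consumes arrived savings $v_s$ and possibly mints investment $b_t$, creating a missing-feedback withdrawal $-\sigma_t D_\Psi(y,x_t)$; an observe event converts that withdrawal into a genuine saving $-\sigma_t D_\Psi(y,z_t)$ plus an immediate cost, and adds $\langle\tilde l_t,x_t-y\rangle$ to the LHS. One must check that at every play event enough arrived savings plus fresh investment are available to exactly equal $\sigma_t$ — this is guaranteed by \Cref{eq:conditions of b and m} and the greedy allocation in \Cref{line:deciding b and m} — and that savings are never double-spent, which holds because \Cref{line:spend savings in banker} decrements $v_s$ immediately. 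A final subtlety: the theorem statement writes the last term as $\sum_{t=1}^T v_t D_\Psi(y,z_t)$ over all $t$, but $z_t$ is only defined for rounds whose feedback has arrived; for rounds still missing feedback, the corresponding contribution sits in the ``missing'' sum of the invariant as $\sigma_t D_\Psi(y,x_t)\ge 0$, which is simply dropped (it only helps the bound), so one should remark that \Cref{eq-thm-banker-omd} holds with the convention that $v_t=0$ and the term is absent for rounds with no feedback yet — consistent with $v_t$ being the algorithm's variable, which is $0$ in that case.
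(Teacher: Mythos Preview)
Your proposal is correct and follows essentially the same approach as the paper: induction combined with \Cref{lemma-banker-convex-comb} to cover the withdrawal $\sigma_t D_\Psi(y,x_t)$ by savings plus investment, and the single-step OMD inequality (\Cref{eq-omd-basic-mab}) to convert a withdrawal into a saving plus an immediate cost. The paper does a terse round-by-round induction directly on \Cref{eq-thm-banker-omd}, whereas you carry a slightly stronger ``ledger'' invariant and induct over atomic play/observe events; this is a cleaner way to handle the interleaving but is the same argument at heart.

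One small correction to your final remark: in \Cref{banker-omd}, \Cref{line:initialize v in Banker} sets $v_t\gets\sigma_t$ immediately, so for rounds with missing feedback the algorithm's variable is $v_t=\sigma_t$, not $0$. The right reading of the theorem is that for such rounds $\tilde l_t$ is (by convention) $\mathbf 0$, hence $z_t=x_t$ and $\tilde z_t=x_t$; then your ``missing'' term $-\sigma_t D_\Psi(y,x_t)$ coincides exactly with $-v_t D_\Psi(y,z_t)$ and nothing needs to be dropped.
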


\Cref{thm-banker-omd} offers a regret bound for \texttt{Banker-OMD} under general parameter selection and serves as the basis for achieving $\Otil(\sqrt{T} + \sqrt{D})$-style regret bounds by choosing proper configurations (see \Cref{thm-banker-omd-B} for an easy example). 
The proof of \Cref{thm-banker-omd} generally follows from our informal intuition, i.e., inductively applying \Cref{lemma-banker-convex-comb}. A formal version can be found in \Cref{sec:appendix analysis of banker}.

Imitating \Cref{eq:vanilla OMD C_1C_2} for vanilla OMD, we give the following corollary by assuming $(C_1,C_2)$-regularity in \Cref{thm-banker-omd}.
\begin{corollary}
\label{corollary-adversarial}
If $(\Psi, x_0)$ is $(C_1,C_2)$-regular, i.e., $D_\Psi(y, x_0)\leq C_1$ for all $y \in \triangle^{[K]}$ and $\E[\sigma_t D_\Psi(x_t, \tilde{z}_t) \mid  \mathcal{F}_{t-1}]\leq \frac{C_2}{\sigma_t}$ for all $t\in [T]$, then the \texttt{Banker-OMD} framework presented in \Cref{banker-omd} ensures
\begin{equation*}
    \mathfrak{R}_T \le C_1\cdot\E\left[B_T\right] + C_2\cdot\E \left[\sum_{t=1}^T \sigma_t^{-1} \right].
\end{equation*}
\end{corollary}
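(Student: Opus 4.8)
The plan is to obtain the corollary directly from \Cref{thm-banker-omd} by passing to expectations and then discarding the favorable terms. First I would translate the pseudo-regret into the linearized form that \Cref{thm-banker-omd} controls: since the importance-weighted estimator built in \Cref{banker-omd} is unbiased, $\E[\langle \tilde l_t, x_t - y\rangle \mid \mathcal F_{t-1}] = \langle l_t, x_t - y\rangle$ for every fixed $y \in \triangle^{[K]}$, and since $\E[\sum_{t=1}^T(l_{t,A_t} - l_{t,i})] = \E[\sum_{t=1}^T\langle l_t, x_t - \mathbf 1_i\rangle]$, the definition of $\mathfrak R_T$ gives $\mathfrak R_T = \sup_{y \in \triangle^{[K]}} \E\big[\sum_{t=1}^T\langle \tilde l_t, x_t - y\rangle\big]$ (the supremum of the linear objective over the simplex being attained at some vertex $\mathbf 1_i$). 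Fix a maximizing comparator $y^\star \in \triangle^{[K]}$; it then suffices to bound $\E\big[\sum_{t=1}^T\langle \tilde l_t, x_t - y^\star\rangle\big]$.

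Next I would apply \Cref{thm-banker-omd} with $y = y^\star$ -- noting that it is a \emph{pathwise} inequality, so expectations can be taken term by term -- and treat the three resulting terms separately. For the ``remaining savings'' term, the quantities $v_t$ produced by \Cref{banker-omd} are nonnegative (each $v_t$ is initialized to $\sigma_t \ge 0$ and thereafter only decreased by amounts not exceeding its current value), and $D_\Psi(y^\star, z_t) \ge 0$ by convexity of $\Psi$, so $-\sum_{t=1}^T v_t D_\Psi(y^\star, z_t) \le 0$ and this term is simply dropped. For the ``total investment cost'' term, every $b_t$ is nonnegative so $B_T \ge 0$, and $(C_1,C_2)$-regularity gives $D_\Psi(y^\star, x_0) \le C_1$; hence $\E[B_T\, D_\Psi(y^\star, x_0)] \le C_1\,\E[B_T]$.

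For the ``immediate cost'' term I would use the tower rule together with the fact that the action scale $\sigma_t$ is chosen by the subroutine $S$ from data available strictly before round $t$, so $\sigma_t$ is $\mathcal F_{t-1}$-measurable. The regularity hypothesis $\E[\sigma_t D_\Psi(x_t, \tilde z_t) \mid \mathcal F_{t-1}] \le C_2/\sigma_t$ then yields $\E[\sigma_t D_\Psi(x_t, \tilde z_t)] = \E\big[\E[\sigma_t D_\Psi(x_t, \tilde z_t) \mid \mathcal F_{t-1}]\big] \le \E[C_2\sigma_t^{-1}]$, and summing over $t \in [T]$ gives $C_2\,\E\big[\sum_{t=1}^T \sigma_t^{-1}\big]$. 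Adding the three bounds gives exactly $\mathfrak R_T \le C_1\,\E[B_T] + C_2\,\E\big[\sum_{t=1}^T \sigma_t^{-1}\big]$, as claimed.

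The argument uses no new ideas beyond \Cref{thm-banker-omd}; the only points that require care are the bookkeeping ones: checking that \Cref{thm-banker-omd} is deterministic so expectations commute with the sum, that $\sigma_t$ is predictable with respect to $\mathcal F_{t-1}$ so the conditional form of the regularity condition applies verbatim, and that the algorithm-generated quantities $B_T$ and $v_t$ are genuinely nonnegative. I expect this nonnegativity/measurability bookkeeping to be the main (and only) obstacle.
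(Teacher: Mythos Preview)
Your proposal is correct and matches the paper's intended argument: the paper does not spell out a separate proof for this corollary, merely remarking that it follows from \Cref{thm-banker-omd} by imposing $(C_1,C_2)$-regularity and imitating the vanilla OMD bound \Cref{eq:vanilla OMD C_1C_2}. Your write-up fills in exactly those routine steps (unbiasedness of $\tilde l_t$, pathwise application of \Cref{thm-banker-omd}, dropping the nonnegative remaining-savings term, and the tower rule for the immediate costs), and the measurability/nonnegativity bookkeeping you flag is indeed straightforward from the algorithm description.
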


\begin{remark}
If there is no delay and we set $\sigma_t\equiv \sigma$, then \texttt{Banker-OMD} coincides with the vanilla OMD with the same $\sigma$ -- which is indeed our inspiration for \texttt{Banker-OMD}.
%\texttt{GreedyPick} is used (introduced in the next section), \texttt{Banker-OMD} coincides with the vanilla OMD with constant step sizes. 
% \daiyan{mention it here or change elsewhere? GreedyPick is not introduced here...} \jiatai{this remark is not very important, maybe just remove it? or just say ``only make new investment ondemand''}
% In fact, this on-demand or so-called ``greedy'' overdrafting policy has very good properties. We will explore it further in the next section.
\end{remark}

\subsection{Tuning the Action Scales Painlessly}
One may notice that in \Cref{banker-omd}, the regularizer $\Psi$ and the action scales $\sigma_t$ remain unspecified.
In fact, our \texttt{Banker-OMD} framework inherits the flexibility of vanilla OMD -- the users can arbitrarily pick $\Psi$ and $\sigma_t$ as they want.
For example, $\Psi$ can be any popular regularizer including negative entropy, log-barrier \citep{wei2018more}, and Tsallis entropy \citep{abernethy2015fighting}. 
The action scale $\sigma_t$ can also be freely decided, e.g., they may depend on the current round index $t$ and the statistics of experienced delays.\footnote{As a side note, we \textit{do not} require $\sigma_t$'s to be monotone -- even more flexible than the vanilla OMD framework.}

In this section, we exemplify a possible tuning of $\sigma_t$ which automatically ensures $\Otil(\sqrt{T} + \sqrt{D})$-style regret bounds in delayed bandit learning problems. 
In the following, we assume $(\Psi, x_0)$ to be $(C_1, C_2)$-regular, which solely depends on the the choice of $(\Psi,x_0)$ picked by the user.

% \textbf{Deciding Action Scales.}
%Before introducing specific choices, we first describe the high-level idea when designing \texttt{Banker-OMD} algorithms --- simply focusing on the immediate costs. Specifically, 
% We design the action scale sequence $\{\sigma_t\}_{t=1}^T$ in two steps. We first try to ensure the total immediate costs to be within $\Otil(\sqrt{T} + \sqrt{D})$. After that, we calculate the total investment $B_T$ under this action scale sequence. We are done if it is also of order $\Otil(\sqrt T+\sqrt D)$.

%As described, we first consider the immediate costs. 
Let $\mathfrak{d}_t$ denote the number of feedback that has not arrived at the beginning of time $t$. We define $\mathfrak{D}_t = \sum_{s=1}^t \mathfrak{d}_s$ as the cumulative experienced delay up to time $t$ -- both $\mathfrak d_t$ and $\mathfrak D_t$ can be easily maintained at run-time.
Since $\sum_{t=1}^T \frac{1}{\sqrt t}=\O(\sqrt T)$ and $\sum_{t=1}^T \frac{\mathfrak{d}_t}{\sqrt{\mathfrak{D}_t}} = \O(\sqrt{\mathfrak{D}_T}) = \O(\sqrt D)$ (\Cref{lem:summation lemma}), one only needs to tune the action scales $\sigma_t$ as
\begin{equation*}
\sigma_t=\left (\frac{1}{\sqrt{t}} + \frac{\mathfrak{d}_t}{\sqrt{\mathfrak{D}_t}}\right )^{-1},\quad \forall t\ge 1
\end{equation*}
to upper bound the total immediate costs as
\begin{equation*}
C_2\sum_{t=1}^T \sigma_t^{-1}=\O\left (C_2(\sqrt{T} + \sqrt{D})\right ).
\end{equation*}

For the total investment part, we make the following observation, which follows from a direct analysis of \Cref{line:deciding b and m}:
\begin{lemma}
\label{lemma-banker-omd-B}
Let $T_0$ be the largest number such that $b_{T_0}>0$. Then, at the end of time $T$, we can decompose $B_T$ as
\begin{equation*}
B_T = B_{T_0} = \sigma_{T_0} + \sum_{t=1}^{T_0 - 1}\mathbbm{1}[ t + d_t \ge T_0]\sigma_t.
\end{equation*}
\end{lemma}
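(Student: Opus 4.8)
The plan is to split the statement into two independent parts. The equality $B_T = B_{T_0}$ is essentially definitional: since $B_t = B_{t-1} + b_t$ and, by maximality of $T_0$, $b_t = 0$ for every $t > T_0$, we get $B_T = B_{T_0}$; note that $T_0$ is well defined because $b_1 = \sigma_1 > 0$ (no savings are available in the first round, so all of $\sigma_1$ must be invested). The substantive part is the closed form for $B_{T_0}$, which I would obtain through a ``conservation of savings'' bookkeeping argument on \Cref{banker-omd}.

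Concretely, write $v_s^{(t)}$ for the value of the bookkeeping variable $v_s$ immediately after \emph{all} of round $t$'s processing (including the feedback-receiving loop at the end of round $t$), and set $v_s^{(t)} = 0$ for $s > t$. The core claim, proved by induction on $t$, is the invariant
\begin{equation*}
B_t \;=\; \sum_{s=1}^{t} v_s^{(t)}.
\end{equation*}
For the inductive step one observes: (i) $v_s$ is only ever modified in \Cref{line:initialize v in Banker} (where $v_t$ is set to $\sigma_t$) and in \Cref{line:spend savings in banker} (where $v_s$ is decremented by $\sigma_{t,s}$), and the feedback-receiving loop never touches it; hence $v_s$ does not change between the end of round $t-1$ and \Cref{line:spend savings in banker} of round $t$, so $v_s^{(t)} = v_s^{(t-1)} - \sigma_{t,s}$ for $s < t$ and $v_t^{(t)} = \sigma_t$; and (ii) the loop in \Cref{line:deciding b and m} starts from $b_t = \sigma_t$ and subtracts each $\sigma_{t,s} = \min\{v_s, b_t\} \le b_t$, so $b_t$ stays nonnegative and $\sum_{s < t} \sigma_{t,s} = \sigma_t - b_t$. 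Substituting (i) and (ii) into $\sum_{s \le t} v_s^{(t)}$ and using $B_t = B_{t-1} + b_t$ together with the inductive hypothesis closes the induction.

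It then remains to evaluate $v_s^{(T_0)}$ for each $s \le T_0$, which splits into three cases. For $s = T_0$: $v_{T_0}$ is set to $\sigma_{T_0}$ in round $T_0$ and is not decremented afterwards within round $T_0$, so $v_{T_0}^{(T_0)} = \sigma_{T_0}$. For $s < T_0$ with $s + d_s \ge T_0$: the feedback of round $s$ arrives only at the end of round $s + d_s \ge T_0$, so $a_s = \mathrm{missing}$ during \Cref{line:deciding b and m} of every round $t \le T_0$; hence $\sigma_{t,s} = 0$ for all such $t$, $v_s$ is never decremented, and $v_s^{(T_0)} = \sigma_s$. For $s < T_0$ with $s + d_s < T_0$: here I use the hypothesis $b_{T_0} > 0$. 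Since $b_{T_0}$ is only decreased in \Cref{line:deciding b and m} and ends strictly positive, it is positive throughout that loop; so when round $s$ is handled there, we must have $v_s < b_{T_0}$ at that moment (otherwise $\sigma_{T_0,s} = \min\{v_s, b_{T_0}\}$ would bring $b_{T_0}$ to zero, contradicting $b_{T_0} > 0$ at the end), forcing $\sigma_{T_0,s} = v_s$ and hence $v_s^{(T_0)} = 0$. Summing the invariant over these cases gives
\begin{equation*}
B_T = B_{T_0} = \sum_{s=1}^{T_0} v_s^{(T_0)} = \sigma_{T_0} + \sum_{s=1}^{T_0-1} \mathbbm{1}[s + d_s \ge T_0]\,\sigma_s,
\end{equation*}
as claimed.

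The argument is mostly bookkeeping, so I do not anticipate a genuine obstacle; the one point demanding care is the timing, namely that the feedback of round $s$ becomes usable in \Cref{line:deciding b and m} strictly \emph{after} round $s + d_s$. This is exactly what places the boundary case $s + d_s = T_0$ on the ``not-yet-arrived'' side, producing the non-strict indicator $\mathbbm{1}[s + d_s \ge T_0]$ rather than a strict one.
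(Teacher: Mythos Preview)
Your proposal is correct and follows essentially the same approach as the paper: the paper's proof also reduces the lemma to the invariant $B_t = \sum_{s=1}^t v_s$ (verified by induction on $t$) and then observes that, under the greedy rule, whenever $b_{T_0} > 0$ the only nonzero $v_s$'s at that moment correspond to actions whose feedback is still on the way. Your write-up is simply a more detailed unpacking of that same two-step argument, including the explicit case analysis for $v_s^{(T_0)}$ and the careful handling of the $s + d_s = T_0$ boundary.
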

% \begin{proof}[Proof Sketch]
%     According to our assumption, $T_0$ is the last round in which $B_T$ increases, hence $B_T = B_{T_0}$. Since \texttt{GreedyPick} only makes a new investment on-demand, it will pick a $b_t > 0$ only if it uses up all savings $\{v_s\}$. Thus after this decision, the total investment $B_T$ is scattered among the new action scale $\sigma_{T_0}$, together with the action scales of feedback still on the way at that time.
% \end{proof}
Suppose that at the beginning of round $T_0$, there are still $m$ feedback on the way, each corresponding to rounds $t_1 < \cdots < t_m$. 
Since these $m$ feedback contributes at least $\binom{m+1}{2}$ to the total delay $D$, we have $m = \O(\sqrt{D})$. Thus, under the action scale $\sigma_t=\left (\frac{1}{\sqrt t}+\frac{\mathfrak d_t}{\sqrt{\mathfrak D_t}}\right )^{-1}$, we have
\begin{align*}
    B_T &= \sigma_{T_0} + \sum_{i=1}^m\sigma_{t_i} \le \sqrt{t_1} + \sum_{i=2}^m \frac{\sqrt{\mathfrak{D}_{t_i}}}{\mathfrak{d}_{t_i}} + \frac{\sqrt{\mathfrak{D}_{T_0}}}{\mathfrak{d}_{T_0}} \\
    &\le \sqrt{T} + \sum_{i=1}^m \frac{\sqrt{D}}{i} = \O\big (\sqrt{T} + \sqrt{D}\log D\big ).
\end{align*}

Therefore, we obtain the following theorem:

\begin{theorem}[Main Theorem: \texttt{Banker-OMD} with Delays]
\label{thm-banker-omd-B}
When setting $\sigma_t$ as follows in \Cref{banker-omd}:
\begin{equation*}
\sigma_t = \left (\frac{1}{\sqrt{t}} + \mathfrak{d}_t\sqrt{\frac{\ln( \mathfrak{D}_t+1)}{\mathfrak{D}_t}}\right )^{-1},\quad \forall 1\le t\le T,
\end{equation*}
we have $B
_T\le \O(\sqrt{T} + \sqrt{D \log D})$ at the end of round $T$. Moreover, if in addition $(\Psi, x_0)$ is $(C_1, C_2)$-regular, setting
\begin{equation*}
\sigma_t = \sqrt{\frac{C_2}{C_1}}\left (\frac{1}{\sqrt{t}} + \mathfrak{d}_t\sqrt{\frac{\ln( \mathfrak{D}_t+1)}{\mathfrak{D}_t}}\right )^{-1},\quad \forall 1\le t\le T
\end{equation*}
guarantees 
\begin{equation*}
    \mathfrak R_T=\O\left (\sqrt{C_1 C_2} (\sqrt{T} + \sqrt{D\log D})\right ).
\end{equation*}
\end{theorem}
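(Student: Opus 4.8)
The plan is to derive both statements from Corollary~\ref{corollary-adversarial}, which bounds $\mathfrak R_T$ by $C_1\E[B_T]+C_2\E[\sum_{t=1}^T\sigma_t^{-1}]$; since the delay sequence is oblivious, the step sizes $\sigma_t$, the total investment $B_T$, and the round $T_0$ from Lemma~\ref{lemma-banker-omd-B} are all deterministic, so the expectations are vacuous. For the base step sizes $\sigma_t^{-1}=\frac{1}{\sqrt t}+\mathfrak d_t\sqrt{\ln(\mathfrak D_t+1)/\mathfrak D_t}$ I will show, separately, that $\sum_{t=1}^T\sigma_t^{-1}=\O(\sqrt T+\sqrt{D\log D})$ and $B_T=\O(\sqrt T+\sqrt{D\log D})$; the latter is exactly the first assertion of the theorem. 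For the second assertion, I observe that the theorem's $\sigma_t$ is just the constant $\sqrt{C_2/C_1}$ times the base one, and that multiplying every $\sigma_t$ by this constant merely rescales $v_s$, $\sigma_{t,s}$ and $b_t$ in Algorithm~\ref{banker-omd} by the same constant, hence leaves $T_0$ and the set of rounds still pending at $T_0$ unchanged while scaling $B_T$ by $\sqrt{C_2/C_1}$ and each $\sigma_t^{-1}$ by $\sqrt{C_1/C_2}$; substituting into Corollary~\ref{corollary-adversarial} then yields $\mathfrak R_T=\O(\sqrt{C_1C_2}(\sqrt T+\sqrt{D\log D}))$.

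For $\sum_{t}\sigma_t^{-1}$, I split it as $\sum_t\frac{1}{\sqrt t}+\sum_t\mathfrak d_t\sqrt{\ln(\mathfrak D_t+1)/\mathfrak D_t}$. The first sum is $\O(\sqrt T)$. In the second, $t\mapsto\mathfrak D_t$ is nondecreasing with $\mathfrak D_T\le D$, so $\ln(\mathfrak D_t+1)\le\ln(\mathfrak D_T+1)=\O(\log D)$; pulling this factor out and invoking the summation lemma (Lemma~\ref{lem:summation lemma}), which gives $\sum_t\mathfrak d_t/\sqrt{\mathfrak D_t}=\O(\sqrt{\mathfrak D_T})=\O(\sqrt D)$, shows the second sum is $\O(\sqrt{D\log D})$.

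The substantive work is bounding $B_T$. By Lemma~\ref{lemma-banker-omd-B}, $B_T=\sigma_{T_0}+\sum_{i=1}^m\sigma_{t_i}$, where $t_1<\cdots<t_m<T_0$ are precisely the rounds whose feedback has not arrived by the start of round $T_0$, i.e.\ $t_i+d_{t_i}\ge T_0$. Since $\sigma_t=(\tfrac{1}{\sqrt t}+\cdots)^{-1}\le\sqrt t\le\sqrt T$, the terms $\sigma_{T_0}$ and, when $m\ge1$, $\sigma_{t_1}$ each contribute $\O(\sqrt T)$. For $i\ge2$ I exploit the structure of the pending set: at the start of round $t_i$ all of $t_1,\dots,t_{i-1}$ are still pending (their feedback arrives only at a round $\ge T_0>t_i$), so $\mathfrak d_{t_i}\ge i-1$; moreover round $t_j$ has by round $t_i$ been pending for $t_i-t_j\ge i-j$ rounds, so $\mathfrak D_{t_i}\ge\sum_{j<i}(i-j)=\binom i2$. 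Combined with $\mathfrak D_{t_i}\le D$, this gives $\sigma_{t_i}\le\sqrt{\mathfrak D_{t_i}}/\big(\mathfrak d_{t_i}\sqrt{\ln(\mathfrak D_{t_i}+1)}\big)\le\sqrt D/\big((i-1)\sqrt{\ln(\binom i2+1)}\big)$. Independently, the constraints $t_i+d_{t_i}\ge T_0$ with the $t_i$ distinct integers below $T_0$ force $d_{t_i}\ge m-i+1$, so $D\ge\sum_i d_{t_i}\ge\binom{m+1}2$ and hence $m=\O(\sqrt D)$. Summing over $i$ and comparing $\sum_k (k\sqrt{\ln k})^{-1}$ with $\int (x\sqrt{\ln x})^{-1}\,dx=2\sqrt{\ln x}$ gives $\sum_{i=2}^m\sigma_{t_i}=\O(\sqrt D\,\sqrt{\ln m})=\O(\sqrt{D\log D})$, whence $B_T=\O(\sqrt T+\sqrt{D\log D})$.

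I expect the third paragraph to be the crux: converting the purely combinatorial fact ``these $m$ rounds are still pending at $T_0$'' into quantitative lower bounds on $\mathfrak d_{t_i}$ and $\mathfrak D_{t_i}$ in terms of the index $i$ alone, and then recognizing that the resulting weighted harmonic sum $\sum_i (i\sqrt{\ln i})^{-1}$ is only $\O(\sqrt{\ln m})$, not $\O(\log m)$. This last observation is exactly why the $\sqrt{\ln(\mathfrak D_t+1)/\mathfrak D_t}$ correction in the step size upgrades the naive $\sqrt D\log D$ to $\sqrt{D\log D}$, and it is the reason the correction must be carried consistently through both the $B_T$ estimate and the immediate-cost estimate. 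Everything else is bookkeeping on top of Corollary~\ref{corollary-adversarial} and Lemmas~\ref{lemma-banker-omd-B} and~\ref{lem:summation lemma}.
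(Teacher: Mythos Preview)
Your proof is correct and follows essentially the same architecture as the paper's: both parts go through Corollary~\ref{corollary-adversarial}, the immediate-cost sum is handled identically, and the $B_T$ bound goes via Lemma~\ref{lemma-banker-omd-B} together with $\mathfrak d_{t_i}\ge i-1$ and $m=\O(\sqrt D)$.

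There is one genuine, if minor, technical difference worth pointing out in the $B_T$ step. You lower-bound $\mathfrak D_{t_i}\ge\binom i2$ (by counting how long each of $t_1,\dots,t_{i-1}$ has been pending), then bound $\sigma_{t_i}\le \sqrt D/\bigl((i-1)\sqrt{\ln(\binom i2+1)}\bigr)$ and use the sharper summation $\sum_k (k\sqrt{\ln k})^{-1}=\O(\sqrt{\ln m})$ to land on $\O(\sqrt{D\log D})$. The paper instead never lower-bounds $\mathfrak D_{t_i}$: it invokes the monotonicity of $x/\ln(x+1)$ to upper-bound $\sqrt{\mathfrak D_{t_i}/\ln(\mathfrak D_{t_i}+1)}$ uniformly by $\O(\sqrt{D/\log D})$, reducing the sum to the plain harmonic series $\sum_{i}1/(i-1)=\O(\log m)$, which with $m=\O(\sqrt D)$ again gives $\O(\sqrt{D\log D})$. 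Your route avoids the monotonicity check at the price of the slightly finer integral estimate; the paper's route trades that integral for a one-line harmonic sum. Both are clean and yield the same bound.
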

\begin{remark}
This theorem only provides an easy way of getting $\Otil(\sqrt{T}+\sqrt {D})$ regret. As we said, the action scales are up to the user -- as we will demonstrate shortly, a different and more sophisticated design of $\sigma_t$ is used in \Cref{sec:delayed scale-free MAB}.
\end{remark}

\section{Application 0: Delayed Adversarial MAB}
\begin{comment}
\begin{table*}[t]
  \caption{Components of \texttt{Banker-TINF}}
  \label{table-tinf}
  \centering
\noindent\makebox[\textwidth]{
\begin{tabular}{lllc}
\hline
$\Psi(x)$ & $x_0$ & $\sigma_t$ & Volume Combination Planner
%& $D_\Psi(y, x_0)$ & $\sigma_t D_\Psi(x_t, \tilde{z}_t)$
\\ \hline\hline

$-\sum\limits_{i=1}^K2\sqrt{x_i}$ & $\left (\frac 1K,\frac 1K,\ldots,\frac 1K\right )$ & $\left (\frac{1}{\sqrt{t}} + \mathfrak{d}_t\sqrt{\frac{\ln( \mathfrak{D}_t+1)}{\mathfrak{D}_t}}\right )^{-1}$ & \texttt{GreedyPick}
%& $\O(\sqrt{K})$ & $\O(\sqrt{K}/\sigma_t)$ 
\\
 \hline
\end{tabular}}
\end{table*}
\end{comment}

For a sanity check and a simple yet illustrative example, we consider the standard delayed adversarial MAB setting.
Inspired by the \texttt{Tsallis-INF} algorithm for adversarial MABs \citep{zimmert2019optimal}, we use the $\nicefrac 12$-Tsallis entropy regularizer $\Psi(x)=-\sum_{i=1}^K2\sqrt{x_i}$.
To ensure regularity, we pick $x_0=(\frac 1K,\frac 1K,\ldots,\frac 1K)$, which ensures $(\Psi,x_0)$ is $(\O(\sqrt{K}), \O(\sqrt{K}))$-regular \citep{abernethy2015fighting}.
At last, as suggested by \Cref{thm-banker-omd-B}, we determine $\sigma_t^{-1}$ as $\frac{1}{\sqrt t}+\mathfrak d_t \sqrt{\frac{\ln (\mathfrak D_t+1)}{\mathfrak D_t}}$.
Plugging all these configurations into \Cref{banker-omd}, we get the algorithm \texttt{Banker-TINF}, whose performance guarantee is stated in \Cref{thm-banker-tinf}. 
% \daiyan{remove the table? redundant}

\begin{corollary}
\label{thm-banker-tinf}
\texttt{Banker-TINF} applied to delayed adversarial MABs ensures 
$\mathfrak{R}_T = \O\left(\sqrt{KT} + \sqrt{KD\log D}\right)$.
\end{corollary}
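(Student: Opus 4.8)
The plan is to obtain \Cref{thm-banker-tinf} directly from \Cref{thm-banker-omd-B} (equivalently \Cref{corollary-adversarial}) once we verify that the configuration chosen for \texttt{Banker-TINF} — the $\nicefrac12$-Tsallis entropy $\Psi(x)=-\sum_{i=1}^K 2\sqrt{x_i}$ together with the uniform default action $x_0=(\nicefrac1K,\ldots,\nicefrac1K)$ — is $(C_1,C_2)$-regular with $C_1=\O(\sqrt K)$ and $C_2=\O(\sqrt K)$. Granting this, \Cref{thm-banker-omd-B} immediately yields $\mathfrak R_T=\O(\sqrt{C_1C_2}(\sqrt T+\sqrt{D\log D}))=\O(\sqrt K(\sqrt T+\sqrt{D\log D}))=\O(\sqrt{KT}+\sqrt{KD\log D})$, exactly the claimed bound. (The prefactor $\sqrt{C_2/C_1}$ appearing in the tuned $\sigma_t$ of \Cref{thm-banker-omd-B} is $\Theta(1)$ here, so dropping it as in the algorithm description is harmless.) Thus the entire argument reduces to the two regularity inequalities, and no further reasoning about the delay sequence is needed — the $\sqrt{KD\log D}$ term is inherited verbatim from the $B_T=\O(\sqrt T+\sqrt{D\log D})$ bound of \Cref{thm-banker-omd-B}, scaled by $\sqrt{C_1C_2}$.

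For the ``withdrawal'' bound $D_\Psi(y,x_0)\le C_1$, I would just compute. Since $\nabla\Psi(x)_i=-x_i^{-1/2}$, at the uniform point $\nabla\Psi(x_0)_i=-\sqrt K$ for every $i$, so $\langle\nabla\Psi(x_0),y-x_0\rangle=-\sqrt K\sum_i(y_i-\tfrac1K)=0$ for all $y\in\triangle^{[K]}$. Hence $D_\Psi(y,x_0)=\Psi(y)-\Psi(x_0)=2\sqrt K-2\sum_i\sqrt{y_i}\le 2\sqrt K$, giving $C_1=2\sqrt K$.

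The substantive step is the ``immediate cost'' bound $\E[\sigma_t D_\Psi(x_t,\tilde z_t)\mid\mathcal F_{t-1}]\le C_2/\sigma_t$ with $C_2=\O(\sqrt K)$, where $\tilde z_t=\nabla\Psi^*(\nabla\Psi(x_t)-\sigma_t^{-1}\tilde l_t)$ and $\tilde l_t=\tfrac{l_{t,A_t}}{x_{t,A_t}}\mathbf 1_{A_t}\ge\mathbf 0$ is supported on the single coordinate $A_t$. I would use the standard local-norm form of the OMD stability lemma (cf. the analysis of \texttt{Tsallis-INF} \citep{zimmert2019optimal} and the regularity computation of \citet{abernethy2015fighting}): by an exact second-order Taylor expansion, $D_\Psi(x_t,\tilde z_t)\le\tfrac1{2\sigma_t^2}\,\lVert\tilde l_t\rVert^2_{(\nabla^2\Psi(\xi_t))^{-1}}$ for some $\xi_t$ on the segment $[x_t,\tilde z_t]$. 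For the $\nicefrac12$-Tsallis entropy $\nabla^2\Psi(\xi)=\tfrac12\,\mathrm{diag}(\xi_i^{-3/2})$, hence $(\nabla^2\Psi(\xi))^{-1}=2\,\mathrm{diag}(\xi_i^{3/2})$. The one subtlety is controlling the $A_t$-coordinate: since $\tilde l_t\ge\mathbf 0$, the argument $\nabla\Psi(x_t)-\sigma_t^{-1}\tilde l_t$ is coordinatewise $\le\nabla\Psi(x_t)$, and $\nabla\Psi^*$ is coordinatewise monotone (indeed $(\nabla\Psi^*(\theta))_i=\theta_i^{-2}$), so $\tilde z_{t,A_t}\le x_{t,A_t}$, the segment stays $\le x_{t,A_t}$ in coordinate $A_t$, and equals $x_{t,i}$ elsewhere. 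Therefore $\lVert\tilde l_t\rVert^2_{(\nabla^2\Psi(\xi_t))^{-1}}=2(\xi_{t,A_t})^{3/2}\tilde l_{t,A_t}^2\le 2x_{t,A_t}^{3/2}\cdot\tfrac{l_{t,A_t}^2}{x_{t,A_t}^2}=2\,\tfrac{l_{t,A_t}^2}{\sqrt{x_{t,A_t}}}$, so $\sigma_t D_\Psi(x_t,\tilde z_t)\le\tfrac1{\sigma_t}\cdot\tfrac{l_{t,A_t}^2}{\sqrt{x_{t,A_t}}}$. Taking the conditional expectation over $A_t\sim x_t$ and using $l_{t,i}\in[0,1]$ with Cauchy--Schwarz, $\E\big[\tfrac{l_{t,A_t}^2}{\sqrt{x_{t,A_t}}}\mid\mathcal F_{t-1}\big]=\sum_i\sqrt{x_{t,i}}\,l_{t,i}^2\le\sum_i\sqrt{x_{t,i}}\le\sqrt K$, which gives $C_2=\sqrt K$.

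I expect the main obstacle to be making this last stability estimate fully rigorous: one must check that $\tilde z_t$ lies where $\Psi$ is twice differentiable (it does, since $\tilde z_t\in\mathbb R_+^K$ has no zero coordinate, as $\tilde z_{t,A_t}=(x_{t,A_t}^{-1/2}+\sigma_t^{-1}\tilde l_{t,A_t})^{-2}>0$), apply the Taylor remainder along the whole segment rather than just at the endpoints, and correctly handle the coordinate $A_t$ where $\tilde z_t$ can be far below $x_t$ — which is precisely why the monotonicity argument above is needed. All of this is classical for the $\nicefrac12$-Tsallis entropy, so once $C_1=C_2=\O(\sqrt K)$ is in hand, the corollary is a one-line substitution into \Cref{thm-banker-omd-B}.
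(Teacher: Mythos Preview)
Your proposal is correct and follows exactly the paper's approach: the paper states (citing \citet{abernethy2015fighting}) that the $\nicefrac12$-Tsallis entropy with $x_0=\mathbf 1/K$ is $(\O(\sqrt K),\O(\sqrt K))$-regular and then invokes \Cref{thm-banker-omd-B}, while you supply the explicit computations of $C_1$ and $C_2$ before making the same invocation. One minor imprecision: the identity $D_\Psi(x_t,\tilde z_t)=\tfrac{1}{2\sigma_t^2}\lVert\tilde l_t\rVert^2_{(\nabla^2\Psi(\xi_t))^{-1}}$ is really obtained via the dual expansion $D_\Psi(x_t,\tilde z_t)=D_{\Psi^*}(\nabla\Psi(\tilde z_t),\nabla\Psi(x_t))$ with the Lagrange point on the \emph{dual} segment (as in the paper's \Cref{lemma-apdx-sftinf-single-immediate-cost}); your primal-segment phrasing is fine here only because $\Psi$ is separable, so $\nabla\Psi^*$ acts coordinatewise monotonically and maps the dual segment into the coordinatewise range $[\tilde z_t,x_t]$.
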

%\longbo{try to add some intuition}

\begin{remark}
While the delay-related term $\O(\sqrt{KD\log D})$ is slightly worse than the SOTA result $\O(\sqrt{D\log K})$ \citep{zimmert2020optimal}, we see that \texttt{Banker-OMD} conveniently translates  algorithm design into  finding a  regularizer with appropriate regularity, which is a task decoupled from handling feedback delays. As we will present in the following sections, this feature enables us to conveniently adapt existing OMD-based algorithms from a large variety of non-delayed bandits settings to their delayed counterparts.
\end{remark}

\begin{comment}
Specifically, we will demonstrate the power of \texttt{Banker-OMD} by applying it to two important bandit problems with feedback delays, including delayed scale-free adversarial MAB (\Cref{sec:delayed scale-free MAB}) and delayed linear bandits (\Cref{sec-application-linear}). We will show that \texttt{Banker-OMD} can easily be adapted to these settings and thus be used to achieve near-optimal $\Otil(\sqrt{T} + \sqrt{D})$-style regret bounds. %For the best-of-both-worlds delayed MAB problem, \texttt{Banker-OMD} enjoys a mixture of best-of-both-worlds style term and $\Otil(\sqrt{D})$-style term as the regret upper bound. Due to space limitation, we  present the latter two  applications in \Cref{sec-application-mab} and \Cref{sec-bobw}.
\end{comment}

\section{Application I: Delayed Scale-Free MAB}\label{sec:delayed scale-free MAB}
%\jiatai{TODO, include some introduction to SF-MAB}

In prior sections of this paper, we assumed by default that all losses are $[0,1]$-bounded. However, in practice, it is often hard to quantize the feedback to a limited range in a deterministic way. For example, one advertisement may cause unexpectedly great reactions in the market, either positive or negative. Therefore, the $[0,1]$-bounded-loss assumption may fail to capture such subtle but important scenarios --- moreover, as one will see shortly, algorithm design in such a setting is hard, even in absence of feedback delays.

In a scale-free adversarial MAB problem (without feedback delays), all losses $l_{t,a}$'s are within some bounded interval $[-L, L]$ and $L$ is  \textit{unknown} to the agent.\footnote{The assumption on the boundedness of the interval is without loss of generality because $L$ is kept as a secret: viewing from hindsight, such an $L<\infty$ always exists.}
Due to the $\Omega(\sqrt{KT})$ lower bound for standard adversarial MABs \citep{auer2002nonstochastic}, a trivial lower bound $\Omega(\sqrt{KT}L)$ exists. On the other hand, the best-known upper bound by \citet{putta2021scale} only reduce to $\Otil(K\sqrt{T}L)$ in the worst case --- a $\sqrt K$ gap with the lower bound exists even without feedback delays (see \Cref{remark:comparision with putta} for more discussion).

A scale-free MAB algorithm is more robust to extreme feedback, but we want more --- e.g., being robust to feedback delays. This results in the novel setting which we call delayed scale-free adversarial MAB, where the feedback of each round $t$ will only be delivered after $d_t$ rounds; meanwhile the losses still fall into an unknown bounded range $[-L,L]$ instead of $[0,1]$.
This is again the reality: in advertisement recommendations, the effect of deploying an advertisement also cannot be immediately observed.
% We apply our \texttt{Banker-OMD} to design novel algorithms for this setting. The proposed algorithms \texttt{Banker-SFTINF} (\Cref{banker-sftinf}) and \texttt{Banker-SFLBINF} (\Cref{banker-sflbinf}) enjoy $\Otil(\sqrt{KT}L)$ regret when there is no feedback delays, matching the lower bound \citep{auer2002nonstochastic} up to logarithmic factors. They are also robust to feedback delays, giving $\Otil(\sqrt{K(D+T)}L)$ regret when total delays are bounded by $D$.

\subsection{High-Level Design Ideas}
\label{sec-sftinf-high-level-ideas}

In this section, we outline the design idea of our algorithm. For this section, we shall assume the losses to be non-negative --- this assumption enables us to use the $\nicefrac 12$-Tsallis entropy regularizer, following the idea of \Cref{corollary-adversarial}.\footnote{For the general loss case, the log-barrier regularizer can be used to derive an even better regret guarantee (which adapts to the losses), albeit with a more complicated analysis; see \Cref{sec:sflbinf}.}

% We first present the design outline for the new setting.
% Thanks to vanilla OMD and \texttt{Banker-OMD}, we continue to reduce the tasks to sequentially picking $x_t\in \triangle^{[K]}$ to make $\E[\langle l_t, x_t - y \rangle]$ well-controlled for all possible $y\in \triangle^{[K]}$.
% Like \texttt{Banker-TINF}, we still use the unbiased importance weighting estimator $\tilde l_t$, the Bregman divergence decomposition \Cref{eq-omd-basic-mab}, and the \texttt{Banker-OMD} regret bound \Cref{thm-banker-omd} to give a quick analysis of our designs. 
As scale-free losses already produce adequate difficulties \citep{putta2021scale}, we first present our solution for non-delayed settings before presenting the delay-robust algorithms. %\daiyan{seems intuition is first \#1, then delay, then no forecast? need revise}
% Before presenting our final algorithm \texttt{Banker-SFTINF}, let us have a gentle start first.
Assume that we face a non-delayed instance where $d_t\equiv 0$ and we additionally have a \textit{forecast} oracle about $\lVert l_t\rVert_\infty$ at the beginning of each round $t$ (i.e., $ l_{t,A_t}\in [0,\lVert l_t\rVert_\infty]$ always holds). With the non-delay assumption and the forecast, we can follow the spirit of non-delayed adversarial MAB algorithms \citep{zimmert2019optimal} for an algorithm with $\O(L\sqrt{KT})$ regret, as follows:

\textbf{Idealized Setting \#1 (no delay, with $\lVert l_t\rVert_\infty$ forecast).}
% Analogue to the usual design $\sigma_t = \sqrt t$ \citep{zimmert2019optimal} when facing $[0,1]$-bounded losses, we set $\sigma_t = (1 + \sum_{s=1}^t \Vert l_s \Vert_\infty^2)^{\nicefrac 1 2}$. %As exhibited in the previous paragraph, a direct calculation will reveal that both diameter and stability terms are bounded by
With the $\nicefrac 12$-Tsallis entropy regularizer, the total immediate costs become $\sqrt K\sum \sigma_t^{-1} l_{t,A_t^2}$. Analogue to the $[0,1]$-bounded case \citep{zimmert2019optimal}, we want it to be of order $\sigma_t$ after applying the summation lemma (\Cref{lem:summation lemma}). Thus, \begin{equation*}
    \sigma_t = \left (1 + \sum_{s=1}^t \Vert l_s \Vert_\infty^2\right )^{\nicefrac 1 2}
\end{equation*}
ensures the total immediate costs to be bounded by
\begin{align*}
&\quad \sqrt K\sum_{t=1}^T \Vert l_t \Vert_\infty^2\left (1 + \sum_{s=1}^t \Vert l_s \Vert_\infty^2\right )^{-\nicefrac 12}\\
&=\O \left (\sqrt K \sqrt {1 + \sum_{s=1}^T \Vert l_s \Vert_\infty^2} \right ) = \O(\sqrt{KT}L).
\end{align*}
The same $\sigma_t$ also ensures the total investment cost to be $\sigma_T \sqrt K=\O(\sqrt{KT}L)$. Hence, $\mathfrak R_T=\O(\sqrt{KT} L)$.

The above reasoning shows that we can easily adapt a classical $[0,1]$-valued MAB algorithm to scale-free settings with the help of accurate forecasts. In fact, with such forecasts available, we can also generalize \texttt{Banker-TINF} for delayed adversarial MABs (\Cref{thm-banker-tinf}) to scale-free tasks.

\textbf{Idealized Setting \#2 (has delay, with $\lVert l_t\rVert_\infty$ forecast).}
Inspired by \Cref{thm-banker-omd-B}, we set \begin{equation*}
    \sigma_t = (\mathfrak d_t + 1)^{-1} \cdot \left ( 1 + \sum_{s=1}^t (\mathfrak d_s + 1)\Vert  l_s \Vert_\infty^2 \right )^{\nicefrac 1 2},
\end{equation*}
where we weigh the delays in $\mathfrak D_t$ by the loss magnitudes. 
The total immediate costs then become proportional to 
\begin{align*}
& \quad \sum_{t=1}^T \E [\sigma_t^{-1}l_{t,A_t}^2] \\
& \le \sum_{t=1}^T \lVert l_t\rVert_\infty^2(\mathfrak d_t+1)\cdot \left (1+\sum_{s=1}^t (\mathfrak d_s+1) \lVert l_s\rVert_\infty^2\right )^{-\nicefrac 12}\\
& =\O\left (\sqrt{\sum_{t=1}^T (\mathfrak d_t+1)\lVert l_t\rVert_\infty^2}\right )=\O(\sqrt{D+T}L),
\end{align*}
%$\sum_{t=1}^T \E [\sigma_t^{-1}l_{t,A_t}^2] \le \sum_{t=1}^T \lVert l_t\rVert_\infty^2(\mathfrak d_t+1)\cdot (1+\sum_{s=1}^t (\mathfrak d_s+1) \lVert l_s\rVert_\infty^2)^{-\nicefrac 12}=\O(\sqrt{\sum_{t=1}^T (\mathfrak d_t+1)\lVert l_t\rVert_\infty^2})=\O(\sqrt{D+T}L)$,
where we again used \Cref{lem:summation lemma} together with the fact that $\sum_t(\mathfrak d_t+1)\le D+T$.
%by using an ad-hoc summation lemma \citep[Lemma 3.5]{auer2002adaptive} and the fact that . 
Following the proof of \Cref{thm-banker-omd-B}, we can bound the total investment as
$B_T=\Otil \left (\sqrt{\sum_{t=1}^T (\mathfrak d_t+1)\lVert l_t\rVert_\infty^2} \right )=\Otil(\sqrt{D+T}L)$.
Combining these two parts then gives $\mathfrak R_T=\Otil(\sqrt{K(D+T)}L)$.

% \daiyan{add transition here? or move the transition of the previous paragraph after \textbf{Setting \#2}?}

\textbf{Actual Setting.} In the actual situation without the $\Vert l_t \Vert_\infty$ oracle, we introduce a doubling trick to maintain an upper bound $\hat L_t$ of the maximum observed feedback $l_{s,A_s}$. We then replace all unknown $\lVert l_s\rVert_\infty$'s in $\sigma_t$ with $\hat L_t$, pretending that it is a prediction returned by an ideal oracle, i.e.,
\begin{equation}\label{eq:sigma in SFTINF}
\sigma_t=(\mathfrak d_t+1)^{-1}\cdot \left (1+\sum_{s=1}^t (\mathfrak d_s+1)\hat L_s^2\right )^{\nicefrac 12}.
\end{equation}

While maintaining the estimations $\hat L_t$, if we receive feedback $l_{s,A_s}$ exceeding the current $\hat L_t$ (i.e., we underestimated the real losses), we update the upper bound $\hat L$ to $2 l_{s,A_s}$ and \textit{skip} that round by setting the loss estimator $\tilde l_s$ to $\mathbf 0$ (see \Cref{line-sftinf-skip}).
As there is at most $\O(\sqrt D)$ feedback on the way, each doubling only causes $\O(\sqrt D)$ rounds to be skipped. Their losses can be bounded by the new value of $\hat L_t$, which means the total regret caused by skipping is at most $\sum_{\hat L_t}\O(\sqrt D L_t)=\O(\sqrt DL)$ --- informally, by doing so, we get a ``weak oracle'' that upper bounds each $\lvert l_{t,A_t}\rvert$ within a constant multiplicative error, which is sufficient for $\Otil(\sqrt T+\sqrt D)$-style regret as we show in \Cref{sec:sketch of SFTINF}.

\subsection{\texttt{Banker-SFTINF} for Non-Negative Losses}
We formalize the ideas presented in the last section into \Cref{banker-sftinf}, with a regret guarantee stated in \Cref{theorem-sftinf-regret}.

\begin{algorithm}[!t]
\caption{\texttt{Banker-SFTINF} for Delayed Scale-Free Adversarial MAB with Non-Negative Losses}
\label{banker-sftinf}
\begin{algorithmic}[1]
% \KwIn{Number of arms $K$}
% \KwOut{A sequence of actions $A_1, A_2,\ldots \in [K]$}
% \BlankLine
\STATE Initialize $\hat L_1 = 1$. Deploy \texttt{Banker-OMD} (with modifications) with $x_0 = \nicefrac {\mathbf 1} K$ and the $\nicefrac 12$-Tsallis entropy regularizer $\Psi(x) = -2\sum_{i=1}^K \sqrt{x_i}$ as follows.
\FOR{$t=1,2,\ldots,T$}
\STATE Set $a_t \leftarrow \mathrm{missing}$ and $\mathfrak d_t\gets \sum_{s=1}^{t-1}\mathbbm 1_{\{a_s=\mathrm{missing}\}}$.
\STATE Set $D_t\gets \sum_{s=1}^t (\mathfrak d_s + 1)\hat L_s^2$. \COMMENT{implement \Cref{eq:sigma in SFTINF}}
\STATE Calculate $\sigma_t=\left ((\mathfrak d_t+1)\sqrt\frac{\ln(3 + D_t/\hat L_t^2)}{{3+D_t}}\right )^{-1}$.
\STATE Pick new action $x_t$ as Lines 5 -- 10 of \Cref{banker-omd}.
\STATE Play $A_t \sim x_t$ and initialize $\hat L_{t+1}\gets \hat L_t$.
\FOR{upon receiving each new feedback $(s, l_{s,A_s})$}
    \STATE $\hat L_{t+1} \gets \max\{\hat L_{t+1}, 2l_{s,A_s}\}$. \COMMENT{doubling trick} \alglinelabel{line-sftinf-maintain-hat-L}
    \IF{$l_{s,A_s} > \hat L_s$}
        \STATE $\tilde l_s \gets \mathbf 0$; $a_s \gets \mathrm{skipped}$. \COMMENT{skip $s$}  \alglinelabel{line-sftinf-skip}
    \ELSE
        \STATE $\tilde{l}_s \leftarrow \frac{l_{s,A_s}}{x_{s,A_s}}\mathbf{1}_{A_s}$; $a_s \leftarrow\mathrm{arrived}$. \COMMENT{keep $s$}
    \ENDIF
    \STATE Calculate $z_s \leftarrow \nabla\bar{\Psi}^*(\nabla\Psi(x_s) - \frac{1}{\sigma_s}\tilde{l}_s)$.
\ENDFOR
\ENDFOR
\end{algorithmic}
\end{algorithm}

% In \Cref{banker-sftinf}, we monitor a quantity $D_t$, which is a $\hat L_t^2$-weighted variant of the total experienced delay $\mathfrak D_t=\sum_{s=1}^t (\mathfrak d_s+1)$. We then use $D_t$ to choose our action scale as $\sigma_t = \tilde{\Theta}((\mathfrak d_t + 1)^{-1}\sqrt{D_t})$, which is an analog of the idealized choice $\sigma_t=(\mathfrak d_t+1)^{-1} (1+\sum_{s=1}^t (\mathfrak d_s+1)\lVert l_s\rVert_\infty^2)^{\nicefrac 12}$ we discussed in \Cref{sec-sftinf-high-level-ideas}. At \Cref{line-sftinf-maintain-hat-L}, we record the maximum loss magnitude $\hat L_t$ we have encountered. To implement the idea of ``skipping'', when seeing a new feedback $l_{s,A_s}$ larger than $\hat L_s$ (\Cref{line-sftinf-skip}), we replace its importance sampling loss estimate $\tilde l$ by a zero vector in further calculation in \texttt{Banker-OMD}.

\begin{theorem}\label{theorem-sftinf-regret}
When the losses are non-negative, the \texttt{Banker-SFTINF} algorithm in \Cref{banker-sftinf} ensures
\begin{align*}
\mathfrak R_T
& = \O\left (\sqrt{K(D+T)\log (D+T)}\cdot L\right ) \\
& =\Otil\left (\sqrt{K(D+T)}L\right ).
\end{align*}
% in delayed scale-free adversarial MABs, where $T$ is the number of rounds, $K$ is the number of arms, $D=\sum_{t=1}^T d_t$ is the total delay, and $L=\sup_{t\in [T]}\lVert l_t\rVert_\infty$ is the maximum magnitude of losses.
\end{theorem}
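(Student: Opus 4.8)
The plan is to split $\mathfrak R_T$ into a regret term handled by the generic \texttt{Banker-OMD} guarantee plus a ``skipping cost'' accounting for the rounds dropped by the doubling trick, and to control each via the loss-aware scales of \Cref{banker-sftinf}. Fix any comparator $i\in[K]$. Since $\hat L_s$ is determined at the start of round $s$, the clipped estimator $\tilde l_s=\frac{l_{s,A_s}}{x_{s,A_s}}\mathbf 1_{A_s}\mathbbm 1[l_{s,A_s}\le\hat L_s]$ satisfies $\E[\tilde l_{s,a}\mid\mathcal F_{s-1}]=l_{s,a}\,\mathbbm 1[l_{s,a}\le\hat L_s]$, so a one-line computation gives
\begin{equation*}
\langle l_s,x_s-\mathbf 1_i\rangle=\E\big[\langle\tilde l_s,x_s-\mathbf 1_i\rangle\mid\mathcal F_{s-1}\big]+\E\big[l_{s,A_s}\mathbbm 1[l_{s,A_s}>\hat L_s]\mid\mathcal F_{s-1}\big]-l_{s,i}\,\mathbbm 1[l_{s,i}>\hat L_s].
\end{equation*}
Dropping the last term (losses are non-negative), summing over $s$, and taking expectations yields
\begin{equation*}
\mathfrak R_T\le\E\Big[\sum_{t=1}^T\langle\tilde l_t,x_t-\mathbf 1_i\rangle\Big]+\E\Big[\sum_{s\,:\,l_{s,A_s}>\hat L_s}l_{s,A_s}\Big],
\end{equation*}
where the first sum runs over all $T$ rounds, with $\tilde l_s=\mathbf 0$ (hence $z_s=\tilde z_s=x_s$) on skipped rounds.

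For the first expectation I would apply \Cref{thm-banker-omd} with $y=\mathbf 1_i$, regarding each skipped round as an immediately-arrived feedback carrying the trivial (still non-positive) saving $-\sigma_s D_\Psi(\mathbf 1_i,x_s)$, so that the bookkeeping of \Cref{banker-omd} and \Cref{lemma-banker-omd-B} is untouched. Dropping the non-positive remaining-savings term gives $\sum_t\langle\tilde l_t,x_t-\mathbf 1_i\rangle\le B_T\,D_\Psi(\mathbf 1_i,x_0)+\sum_t\sigma_t D_\Psi(x_t,\tilde z_t)$, where $D_\Psi(\mathbf 1_i,\mathbf 1/K)\le 2\sqrt K$ and, by the standard local-norm bound for the $\nicefrac12$-Tsallis entropy together with the clip $l_{t,A_t}\le\hat L_t$ inside $\tilde l_t$ and $\sum_a\sqrt{x_{t,a}}\le\sqrt K$, $\E[\sigma_t D_\Psi(x_t,\tilde z_t)\mid\mathcal F_{t-1}]\le\sqrt K\,\hat L_t^2/\sigma_t$ (and $=0$ on skipped rounds). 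Writing $D_t=\sum_{s\le t}(\mathfrak d_s+1)\hat L_s^2$ so that $\sigma_t^{-1}=(\mathfrak d_t+1)\sqrt{\ln(3+D_t/\hat L_t^2)/(3+D_t)}$ and $D_t-D_{t-1}=(\mathfrak d_t+1)\hat L_t^2$: since $t\mapsto\hat L_t$ is non-decreasing and $\hat L_t\ge 1$, $D_t/\hat L_t^2\le\sum_{s\le t}(\mathfrak d_s+1)=\O(D+T)$, hence $\ln(3+D_t/\hat L_t^2)=\O(\log(D+T))$; using $\sum_t(D_t-D_{t-1})/\sqrt{3+D_t}\le 2\sqrt{3+D_T}$ (a telescoping estimate in the spirit of \Cref{lem:summation lemma}) together with $D_T\le\hat L_{T+1}^2\,\O(D+T)$ and $\hat L_{T+1}=\O(L)$ from the doubling trick, the total immediate cost is $\sqrt K\sum_t\hat L_t^2/\sigma_t=\O(\sqrt{K(D+T)\log(D+T)}\,L)$. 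For the investment, \Cref{lemma-banker-omd-B} gives $B_T=\sigma_{T_0}+\sum_{i=1}^m\sigma_{t_i}$ over the $m$ feedbacks in flight at round $T_0$; the triangle-number bound forces $m=\O(\sqrt D)$ and $\mathfrak d_{t_i}\ge i-1$, while $\sigma_t\le(\mathfrak d_t+1)^{-1}\sqrt{3+D_T}$, so $B_T=\O(L\sqrt{D+T}\log D)$ and $2\sqrt K\,\E[B_T]=\Otil(\sqrt{K(D+T)}\,L)$.

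It remains to bound the skipping cost $\E[\sum_{s\,:\,l_{s,A_s}>\hat L_s}l_{s,A_s}]$, which is the accounting sketched in \Cref{sec-sftinf-high-level-ideas}: at any instant at most $\O(\sqrt D)$ feedbacks are in flight, so each increase of $\hat L$ can be charged only $\O(\sqrt D)$ skipped rounds, each of loss at most the new value of $\hat L$; since the successive values of $\hat L$ grow geometrically and never exceed $\O(L)$, they sum to $\O(L)$, whence the skipping cost is $\Otil(\sqrt D\,L)=\Otil(\sqrt{(D+T)}\,L)$. Summing the three bounds yields the claimed $\mathfrak R_T=\O(\sqrt{K(D+T)\log(D+T)}\,L)=\Otil(\sqrt{K(D+T)}\,L)$.

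The main obstacle I anticipate is the interplay between the skipping and the \texttt{Banker-OMD} bookkeeping: one must simultaneously argue that skipped rounds do not inflate $B_T$ (they behave as immediately-arrived trivial savings) and that their total loss is $\Otil(\sqrt D\,L)$, the latter requiring control of both the \emph{number} of skipped rounds (via the in-flight/triangle-number argument) and their \emph{magnitudes} (via the geometric growth of $\hat L$), with attention to the conditioning since ``skip round $s$'' is measurable w.r.t.\ $\mathcal F_{s-1}$ and $A_s$ only. The remaining ingredients --- the local-norm immediate-cost estimate for the $\nicefrac12$-Tsallis entropy and the scale tuning via \Cref{lem:summation lemma} --- are routine given \Cref{thm-banker-omd} and \Cref{lemma-banker-omd-B}.
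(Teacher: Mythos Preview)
Your decomposition and the handling of the immediate-cost and skipping terms match the paper's proof essentially verbatim. There is, however, a quantitative gap in your investment bound that prevents you from reaching the stated $\O(\sqrt{K(D+T)\log(D+T)}\,L)$: you upper-bound the action scale by $\sigma_t\le(\mathfrak d_t+1)^{-1}\sqrt{3+D_T}$, throwing away the logarithm in the denominator of $\sigma_t$. That gives $B_T=\O(L\sqrt{D+T}\log D)$ and hence an investment contribution of order $\sqrt{K(D+T)}\,L\log D$, which is a factor $\sqrt{\log D}$ too large for the first displayed bound in the theorem (it only suffices for the $\Otil$ line). The paper instead writes
\[
\sigma_t=(\mathfrak d_t+1)^{-1}\,\hat L_t\sqrt{\frac{3+D_t/\hat L_t^2}{\ln(3+D_t/\hat L_t^2)}}
\]
and uses the monotonicity of $x\mapsto x/\ln x$ together with $D_t/\hat L_t^2\le D+T$ to get $\sigma_t\le(\mathfrak d_t+1)^{-1}\,\O\!\big(L\sqrt{(D+T)/\log(D+T)}\big)$; summing $\sum_i(\mathfrak d_{t_i}+1)^{-1}=\O(\log D)$ then yields $B_T=\O\!\big(L\sqrt{(D+T)\log D}\big)$, which after multiplying by $2\sqrt K$ fits inside the claimed bound. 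Your sketch is otherwise on target; you just need to keep that $\ln$ factor when bounding $\sigma_t$.
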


\begin{remark}
As noticed by \citet{cesa2016delay}, any delayed adversarial MAB algorithm must suffer $\Omega(\sqrt{KT}+\sqrt{D\log K})$ regret when losses are $[0,1]$-bounded. Thus, in the scale-free setting, any reasonable algorithm must suffer $\Omega((\sqrt{KT}+\sqrt{D\log K})\cdot L)$ total regret (even assuming a known $L$). Hence, regarding $K$ as a small constant independent of $T$, \Cref{banker-sftinf} is only $ \O(\sqrt{\log (D+T)})=\Otil(1)$ times worse than the theoretical lower bound.
\end{remark}

\subsection{Analysis Sketch of \texttt{Banker-SFTINF}}\label{sec:sketch of SFTINF}
We sketch the analysis of \texttt{Banker-SFTINF}. A formal version is presented in \Cref{sec:proof banker-sftinf}.
Denote by $\mathcal E_t$ the event that 
$a_t=\mathrm{skipped}$'' eventually.
%, and by $\neg \mathcal E_t$ its complement.
Let $\hat l_t \triangleq \frac{l_{t,A_t}}{x_{t, A_t}} \mathbf 1_{A_t}$ be the unnullified estimator. We decompose the regret as follows:
\begin{align}
    \mathfrak R_T 
    % &= \sum_{t=1}^T \E[\langle \hat l_t, x_t - \mathbf 1_{i^*} \rangle] \nonumber \\
    & = \sum_{t=1}^T \E[\mathbbm 1_{\neg \mathcal E_t}\langle \hat l_t, x_t - \mathbf 1_{i^*} \rangle] + \sum_{t=1}^T \E[\mathbbm 1_{\mathcal E_t}\langle \hat l_t, x_t - \mathbf 1_{i^*} \rangle] \nonumber\\
    % & \le \sum_{i=1}^T \E[\langle \tilde l_t, x_t - \mathbf 1_{i^*} \rangle] + \sum_{t=1}^T \E[\mathbbm 1_{\mathcal E_t}\langle \hat l_t, x_t \rangle] \nonumber \\
    & \le \underbrace{\sum_{i=1}^T \E[\langle \tilde l_t, x_t - \mathbf 1_{i^*} \rangle]}_{\text{Banker-OMD controlled regret}} + \underbrace{\sum_{t=1}^T \E[\mathbbm 1_{\mathcal E_t}l_{t,A_t}]}_{\lVert l_t\rVert_\infty\text{ estimation error}} \label{eq-sftinf-decomposition}
\end{align}
because $\tilde l_t = \mathbbm 1_{\neg \mathcal E_t} \hat l_t$ by definition. Since $\tilde l_t$'s are the losses fed into \texttt{Banker-OMD}, the first term decomposes into the total investment cost $\E[B_T]\cdot D_\Psi(y,x_0)$ and the immediate costs $\sum_{t=1}^T \E[\mathbbm 1_{\neg \mathcal E_t} \sigma_t D_\Psi(x_t,\tilde z_t)]$, thanks to \Cref{thm-banker-omd}.

First, consider the total investment $B_T$. Let the last new investment be made at $T_0$ (\Cref{lemma-banker-omd-B}). Suppose that there are $m$ feedback corresponding to rounds $T_1,T_2,\ldots,T_m$ absent at $T_0$ (i.e., $T_i+d_{T_i}\ge T_0$). Then $B_T$ is bounded by
\begin{align*}
B_T &\le \sum_{i=0}^m \frac{1}{\mathfrak d_{T_i}+1}\sqrt{3+D_{T_i}}\\
& =\Otil\left (\sqrt{(D+T)L^2}\right ),
\end{align*}
because $m=\O(\sqrt D)$ and $\sum_{t=1}^T (\mathfrak d_t+1)=\O(D+T)$.

For immediate costs, we can prove $\E[\sigma_t D_\Psi(x_t,\tilde z_t)]\le \sqrt K \sigma_t^{-1} \lVert l_t\rVert_\infty^2$ analogue to the $[0,1]$-bounded cases (formally stated in \Cref{lemma-apdx-sftinf-single-immediate-cost}).
\begin{comment}
\jiatai{below deprecated} We further define
\begin{equation*}
    \tilde D_t \triangleq (\mathfrak d_t + 1) \hat L_t^2 + \sum_{s\in\{1,\ldots,t-1\}:\neg \mathcal E_s} (\mathfrak d_s + 1) \hat L_s^2,\quad \tilde \sigma_t = \left[(\mathfrak d_t + 1)\sqrt\frac{\ln(3 + \tilde D_t/\hat L_t^2)}{{3+ \tilde D_t}}\right]^{-1},
\end{equation*}
which only contain previous rounds not \textit{eventually} skipped. Hence, $\tilde D_t \le D_t$ and $\sigma_t \ge \tilde \sigma_t$. We have
\begin{align*}
    \sum_{t=1}^T \E\left[ \mathbbm 1_{\neg \mathcal E_t} \sigma_t D_\Psi(x_t, \tilde z_t) \right] &\le \sqrt{K}\E\left[\sum_{t=1}^T \mathbbm 1_{\neg \mathcal E_t} \frac{\hat L_t^2}{\sigma_t} \right] \le \sqrt{K}\E\left[\sum_{t=1}^T \mathbbm 1_{\neg \mathcal E_t} \frac{\hat L_t^2}{\tilde \sigma_t} \right]\\
    &=\Otil(\sqrt K) \E\left [\sqrt{\sum_{t=1}^T \mathbbm 1_{\neg \mathcal E_t} (\mathfrak d_t+1)\hat L_t^2}\right ]=\Otil(\sqrt{K(D+T)}L),
\end{align*}
\end{comment}
Moreover, we have
\begin{align*}
\sum_{t=1}^T \sigma_t^{-1}\lVert l_t\rVert_\infty^2 & \le \sqrt{\sum_{t=1}^T (\mathfrak d_t+1)\hat L_t^2}\\
& \le \sqrt{(D+T)}L
\end{align*}
due to the summation lemma \Cref{lem:summation lemma} and $\lVert l_t\rVert_\infty\le \hat L_t$.
\begin{comment}
$\sum \sigma_t^{-1}\lVert l_t\rVert_\infty^2$ (instead of $\frac 1 {\sigma_t}$ in $[0,1]$-bounded cases), we can proceed by writing
\begin{align*}
    &\quad \sum_{t=1}^T \E\left[ \sigma_t D_\Psi(x_t, \tilde z_t) \right] \le \sqrt{K}\E\left[\sum_{t=1}^T \frac{\hat L_t^2}{\sigma_t} \right] \\
    &=\Otil(\sqrt K) \E\left [\sqrt{\sum_{t=1}^T (\mathfrak d_t+1)\hat L_t^2}\right ]=\Otil(\sqrt{K(D+T)}L),
\end{align*}
again due to the summation lemma \Cref{lem:summation lemma} and the fact that $\hat L_t=\O(L)$.
\end{comment}

Besides, the other term in \Cref{eq-sftinf-decomposition}
% (i.e., the one due to imperfect $\lVert l_t\rVert_\infty$ estimation)
can never exceed $O(\sqrt D L)$ --- intuitively, each ``skipping'' will only cause no more than $\sqrt D$ feedback on the way to be skipped.
% Moreover, since the sum of $\hat L$ is bounded by $\O(L)$, the estimation error term is bounded by $\O(\sqrt DL)$. 
Therefore, \Cref{eq-sftinf-decomposition} gives $\mathfrak R_T=\Otil(\sqrt{K(D+T)}L)$, as claimed.
% \begin{align*}
%     \mathfrak R_T
%     % &\le \max_{y\in \triangle^{[K]}}\left \{\E[B_T]\cdot D_\Psi(y,x_0)+\sum_{t=1}^T \E [\mathbbm 1_{\neg \mathcal E_t}\sigma_t D_\Psi(x_t,\tilde z_t) ]+\sum_{t=1}^T \E[\mathbbm 1_{\mathcal E_t}l_{t,A_t}]\right \}\\
%     &= \Otil\left (\sqrt K\sqrt{D+T}L\right )+\Otil\left (\sqrt{K(D+T)}L\right )+\O\left (\sqrt DL\right ).
% \end{align*}

\subsection{\texttt{Banker-SFLBINF} for General Losses}\label{sec:sflbinf}
% \daiyan{put algs \& explanations into appendix, only present ``main result'' \& remark 6.4 here}
As mentioned, the $\nicefrac 12$-Tsallis entropy regularizer only works for non-negative losses. Fortunately, for the general-loss case, we can use the log-barrier regularizer $\Psi(x)=-\sum_{i=1}^K \ln x_i$ to derive the \texttt{Banker-SFLBINF} algorithm (\Cref{banker-sflbinf} in \Cref{sec:design of banker-sflbinf}).
It not only allows us to derive the same regret guarantee as \Cref{theorem-sftinf-regret} when losses can be negative (up to logarithmic factors), but also gives a small-loss style \citep{neu2015first} adaptive regret bound.
The bound is stated in \Cref{theorem-sflbinf-regret} and proved in \Cref{sec:proof banker-sflbinf}.

% In this section, we present another variant of \texttt{Banker-OMD} for delayed scale-free adversarial MABs that can handle general (i.e., possibly negative) unbounded losses, called \texttt{Banker-SFLBINF} (\Cref{banker-sflbinf}, \texttt{SFLBINF} stands for \textbf{S}cale-\textbf{F}ree \textbf{Log}-\textbf{B}arrier \textbf{INF}), whose regret guarantee is stated in \Cref{theorem-sflbinf-regret} and formally proved in \Cref{sec:proof banker-sflbinf}.

\begin{theorem}\label{theorem-sflbinf-regret}
\texttt{Banker-SFLBINF} in \Cref{banker-sflbinf} ensures 
\begin{equation*}
    \mathfrak R_T=\Otil \left (\sqrt{K\E[\tilde{\mathfrak L}_T^2]}+\sqrt{KD}L\right ),
\end{equation*}
% \begin{align*}
% \mathfrak R_T
% % & = \O\left(\sqrt{K\E [1 + \tilde {\mathfrak L}^2_T]}\log T + \sqrt D L\log L + \sqrt{K(1+D) \log D\log T} + \sqrt{KD}\log TL + KL \right )\\
% & =\Otil\left (\sqrt{K\E[\tilde{\mathfrak L}_T^2]}+\sqrt{KD}L + KL + \sqrt{K(1+D)}\right ),
% \end{align*}
where $\tilde {\mathfrak L}^2_T \triangleq \sum_{t=1}^T (\mathfrak d_t + 1) l_{t,A_t}^2$ and
$\mathfrak d_t$ is the number of feedback absent at the beginning of round $t$.
% thus, $\tilde {\mathfrak L}^2_T$ is a ``delay-weighted'' square sum of incurred losses. 
% \daiyan{deled this sentence}
In particular, %$\mathfrak R_T=\Otil (\sqrt{K(D+T)}L )$.
\begin{align*}
    \mathfrak R_T
    & = \O\left(\sqrt{K(D+T)}\log T L + \sqrt D L\log L \right)
    \\
    & =\Otil\left (\sqrt{K(D+T)}L\right ).
\end{align*}
\end{theorem}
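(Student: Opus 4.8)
My plan is to replay, almost line for line, the \texttt{Banker-SFTINF} argument sketched in \Cref{sec:sketch of SFTINF}, replacing the $\nicefrac{1}{2}$-Tsallis entropy by the log-barrier $\Psi(x)=-\sum_{i=1}^K\ln x_i$, and to track the two places where this substitution matters. I would start from the master bound of \Cref{thm-banker-omd} applied to the nullified estimators $\tilde l_t$ that are actually fed into the framework; the ``remaining savings'' term $-\sum_t v_tD_\Psi(y,z_t)$ is $\le 0$ (Bregman divergences are nonnegative and $v_t\ge0$) and is discarded, leaving $\sum_t\langle\tilde l_t,x_t-y\rangle\le \E[B_T]\,D_\Psi(y,x_0)+\sum_t\E[\sigma_tD_\Psi(x_t,\tilde z_t)]$. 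Exactly as in \Cref{eq-sftinf-decomposition} I would then split $\mathfrak R_T$ into this ``\texttt{Banker-OMD}-controlled'' part and the ``$\lVert l_t\rVert_\infty$-estimation error'' $\sum_t\E[\mathbbm 1_{\mathcal E_t}l_{t,A_t}]$ from the rounds that are eventually skipped --- the nullification on $\mathcal E_t$ being precisely what keeps $\E[\langle\tilde l_t,x_t-y\rangle]=\E[\mathbbm 1_{\neg\mathcal E_t}\langle l_t,x_t-y\rangle]$ --- and bound the latter by $\Otil(\sqrt D\,L)$: each doubling of $\hat L$ can nullify only the $O(\sqrt D)$ feedbacks then in flight, and the successive $\hat L$'s grow geometrically up to $O(L)$.

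\textbf{The two log-barrier inputs.} First, since $D_\Psi(\cdot,x_0)$ is \emph{not} uniformly bounded on $\triangle^{[K]}$, I would --- as is standard for the log-barrier --- compete against the perturbed comparator $y=(1-\tfrac1T)\mathbf 1_{i^*}+\tfrac1{KT}\mathbf 1$, which changes $\mathfrak R_T$ by only $\sum_t\langle l_t,\mathbf 1_{i^*}-y\rangle=O(L)$ (negligible) while giving $D_\Psi(y,x_0)=O(K\log T)$. Second --- the key improvement over the Tsallis regularizer --- writing out $z_t,\tilde z_t=\nabla\Psi^\ast(\nabla\Psi(x_t)-\sigma_t^{-1}\tilde l_t)$ for the log-barrier and using its self-concordance yields a \emph{dimension-free} stability estimate $D_\Psi(x_t,\tilde z_t)\le O(\sigma_t^{-2}l_{t,A_t}^2)$ (no factor of $K$), hence $\E[\sigma_tD_\Psi(x_t,\tilde z_t)\mid\mathcal F_{t-1}]\le O\big(\sigma_t^{-1}\,\E[l_{t,A_t}^2\mid\mathcal F_{t-1}]\big)$. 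Thus the effective ``$(C_1,C_2)$'' is $(K\log T,\ \asymp l_{t,A_t}^2)$ rather than $(\sqrt K,\sqrt K)$; balancing them is exactly why \texttt{Banker-SFLBINF} scales its actions with an extra $\asymp 1/\sqrt{K\log T}$ factor, the remaining $(\mathfrak d_t+1)^{-1}$ and the delay-weighted running loss-sum inside $\sigma_t$ being prescribed by \Cref{banker-sflbinf} just as in \Cref{thm-banker-omd-B} and \Cref{banker-sftinf}.

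\textbf{Combining.} With this tuning the two costs behave just as for \texttt{Banker-SFTINF}. The total immediate cost is $\lesssim\sum_t\E[\sigma_t^{-1}l_{t,A_t}^2]$; since $l_{t,A_t}\le\hat L_t$ and $\sigma_t^{-1}$ is, up to logs, $\sqrt{K\log T}\,(\mathfrak d_t+1)$ divided by the square root of the delay-weighted running loss-sum, the summation lemma (\Cref{lem:summation lemma}) turns this into $\Otil\big(\sqrt K\cdot\sqrt{\E[\tilde{\mathfrak L}_T^2]}\big)$, where $\tilde{\mathfrak L}_T^2=\sum_t(\mathfrak d_t+1)l_{t,A_t}^2$. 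For the total investment, \Cref{lemma-banker-omd-B} writes $B_T=\sigma_{T_0}+\sum_i\mathbbm 1[T_i+d_{T_i}\ge T_0]\sigma_{T_i}$ over the $m=O(\sqrt D)$ feedbacks in flight at the last investment time $T_0$; using that the rank of $T_i$ among those in-flight rounds forces $\mathfrak d_{T_i}+1\gtrsim i$, the sum collapses to a harmonic one and yields $B_T=\Otil\big(\tfrac1{\sqrt{K\log T}}(\sqrt{\tilde{\mathfrak L}_T^2}+\sqrt D\,L)\big)$, so $D_\Psi(y,x_0)\cdot\E[B_T]=\Otil\big(\sqrt{K\,\E[\tilde{\mathfrak L}_T^2]}+\sqrt{KD}\,L\big)$. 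Adding the three pieces (the skip error $\Otil(\sqrt D L)$ is absorbed) gives the first bound; substituting the worst case $\tilde{\mathfrak L}_T^2\le(D+T)L^2$ gives the ``in particular'' statement.

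\textbf{Main obstacle.} The delicate point is running the log-barrier with \emph{negative} losses. One must ensure the iterates stay in $\mathrm{dom}\,\nabla\Psi$, i.e. $1+\sigma_t^{-1}l_{t,A_t}>0$ at every round, and --- more subtly --- that the quadratic self-concordant Taylor estimate behind $D_\Psi(x_t,\tilde z_t)=O(\sigma_t^{-2}l_{t,A_t}^2)$ is valid on the actual range traversed, not merely infinitesimally; this is precisely why the doubling/skipping mechanism (or an explicit clipping of $\tilde l_t$) is needed to keep $|l_{t,A_t}|$ controlled relative to the \emph{adaptive, delay-dependent} $\sigma_t$, and carrying this control through the whole induction is the real work. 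The second, milder difficulty is extracting the \emph{adaptive} quantity $\tilde{\mathfrak L}_T^2$ --- rather than the crude $(D+T)L^2$ --- from an action scale that can only depend on feedback that has already arrived; the $(\mathfrak d_t+1)$-weighting inside the running loss-sum together with the summation lemma is what makes this go through.
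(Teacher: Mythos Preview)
Your overall architecture matches the paper's exactly: the same three-term decomposition into investment, immediate cost, and skipping error via \Cref{thm-banker-omd}; the log-barrier replacement with a perturbed comparator giving $D_\Psi(y,x_0)=O(K\log T)$; the dimension-free stability bound $\sigma_tD_\Psi(x_t,\tilde z_t)\le O(\sigma_t^{-1}l_{t,A_t}^2)$; and the summation lemma / \Cref{lemma-banker-omd-B} route to the adaptive quantity $\tilde{\mathfrak L}_T^2$.

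The gap is precisely the one you flag under ``Main obstacle,'' and it is not closed by the mechanism you describe. The log-barrier stability bound requires $\tilde z_{t,A_t}\le 2x_{t,A_t}$, equivalently $l_{t,A_t}\ge -\tfrac12\sigma_t$. The doubling skip only enforces $|l_{t,A_t}|\le\hat L_t$; since $\sigma_t\asymp(\mathfrak d_t+1)^{-1}\sqrt{D_t}/\sqrt{K\log T}$ can be \emph{arbitrarily smaller} than $\hat L_t$ when the backlog $\mathfrak d_t$ is large, the two conditions are not comparable and your skipping argument alone does not validate the immediate-cost estimate. The paper's \texttt{Banker-SFLBINF} has two additional ingredients that you have not accounted for: (i) a \emph{second} skipping criterion ``$l_{s,A_s}<-\tfrac12\sigma_s$'' (\Cref{line-alg2-skip} of \Cref{banker-sflbinf}), and (ii) a boost $\sigma_t\gets\max\{\sigma_t,2\hat L_t\}$ whenever $\mathfrak d_t\le\sqrt{\mathfrak D_t/K}$ (\Cref{line-alg2-sigma}), whose purpose is to make the second criterion fire only on rounds with large backlog. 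Bounding the number of these extra ``type-$V_t$'' skips needs a separate stopping-time argument (partitioning by dyadic levels of $\mathfrak D_t$) and yields $\sum_t\mathbbm 1_{V_t}=O(\sqrt{KD})$, hence skipping error $O(\sqrt{KD}\,L)$ rather than the $\Otil(\sqrt D\,L)$ you claim. The $\sigma_t$-boost also contributes an additional $\sqrt{D/K}\,L$ summand to $B_T$, which after multiplying by $D_\Psi(y,x_0)=O(K\log T)$ is another source of the $\sqrt{KD}\,L$ term. Your final bound happens to be right because you already have $\sqrt{KD}\,L$ from the investment side, but the argument as written leaves the immediate-cost estimate unjustified on rounds with large $\mathfrak d_t$ and negative loss.
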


\begin{remark}\label{remark:comparision with putta}
When running on non-delayed instances, \texttt{Banker-SFLBINF} enjoys an $\Otil(\sqrt{KT}L + KL)$ regret guarantee. Meanwhile, the SOTA bounds are $\tilde{\mathcal O}(\sqrt{KL_2}+L_\infty \sqrt{KT})$ or $\Otil(\sqrt{KL_2}+L_\infty \sqrt{KL_1})$ \citep{putta2021scale}, which both reduce to $\Otil(K\sqrt TL)$ in the worst case. Hence, our algorithm dominates theirs by a factor of $\sqrt K$ in the worst case, closing the gap with the $\Omega(\sqrt{KT}L)$ lower bound \citep{auer2002nonstochastic}.
More importantly, our bound is \textit{uniformly} better than theirs because $\tilde{\mathfrak L}_T^2=\sum_{t=1}^T l_{t,A_t}^2\le \sum_{t=1}^T \lVert l_t\rVert_2^2=L_2$.
Besides, a technical comparison with \citep{putta2021scale} is presented in \Cref{sec:comparison with putta}.
\end{remark}

% \daiyan{shall we say some intuition about why LBINF can give small-loss bounds but TINF cannot?} \jiatai{hard to explain in 1-2 sentences, better to give up..}

\section{Application II: Delayed Linear Bandits}
\label{sec-application-linear}
% \jiatai{move this whole section to appendix?}
In a delayed adversarial linear bandit, there is a convex action set $\mathcal{C}\subseteq \mathbb{R}^n$ where $n$ is called the ambient dimension. Correspondingly, there is a loss set $\bar{\mathcal{C}}=\{l\in \mathbb{R}^n \mid \lvert \langle l, x\rangle \rvert \le 1, \forall x \in \mathcal{C}\}$. At each round $t$, the agent picks an action $A_t \in \mathcal{C}$. At the same time, an adversary picks a loss vector $l_t \in \bar{\mathcal{C}}$ and a delay $d_t$. Then the agent suffers a loss of $\hat{l}_t = \langle l_t, A_t \rangle$ which is revealed to the agent at the end of the $t+d_t$-th round. The objective of the agent is still to minimize the pseudo-regret, now defined by
\begin{equation*}
\mathfrak{R}_T \triangleq  \sup_{y \in \mathcal{C}} \E\left [\sum_{t=1}^T \langle l_t, A_t - y \rangle\right ].
\end{equation*}

OMD, as in many other problems, has been widely adopted in non-delayed linear bandit problems \citep{abernethy2008competing, bubeck2012towards, bubeck2015entropic}. In particular, \citet{abernethy2008competing} proposed an OMD-based algorithm \texttt{BOLO} for adversarial linear bandits. It uses an $\O(n)$-self-concordant barrier (see \Cref{def:self-concordant} for an exact definition) together with a sampling scheme supported on the Dikin ellipsoid (which we will introduce in \Cref{banker-bolo}). This algorithm achieves $\O(n^{\nicefrac 32}\sqrt{T\log T})$ regret.

Now, we illustrate how to easily convert the \texttt{BOLO} algorithm into a delay-robust version via our \texttt{Banker-OMD} framework. We first pick an $\O(n)$-self-concordant $\Psi$ and set $x_0=\nabla\Psi^*(\mathbf{0})$; according to \citet{abernethy2008competing}, this ensures $(\Psi, x_0)$ to be $(\O(n\log T), \O(n^2))$-regular under the sampling scheme of \texttt{BOLO}.
After that, we pick the action scales according to \Cref{thm-banker-omd-B} -- resulting in the \texttt{Banker-BOLO} algorithm (presented as \Cref{banker-bolo} in \Cref{sec:design of banker-bolo}).
Our \texttt{Banker-BOLO} algorithm then works in delayed adversarial linear bandits and ensures the following:

\begin{comment}
\begin{remark}
Though \citet{bubeck2012towards} got a seemingly better $\O(\sqrt{nT\log K})$ regret guarantee, they considered \textit{finite-arm} adversarial linear bandits (i.e., the action set contains $K$ discrete vectors, instead of being an arbitrary convex set $\mathcal C$), which is easier than the infinite-arm case considered by \citep{abernethy2008competing} and us.
\end{remark}
\end{comment}

\begin{theorem}
\label{thm-banker-bolo}
When $\Psi$ is an $\O(n)$-self-concordant barrier over $\mathcal{C}$, \texttt{Banker-BOLO} in \Cref{banker-bolo} ensures 
\begin{equation*}
    \mathfrak{R}_T=\Otil \big (n^{\nicefrac 32}\sqrt T+n^2\sqrt D\big ),
\end{equation*}
which is only $\Otil(n^2\sqrt D)$ more than the non-delayed regret $\Otil(n^{\nicefrac 32}\sqrt T)$ achieved by \texttt{BOLO} \citep{abernethy2008competing}.
% \begin{equation*}
%     \mathfrak{R}_T
%     =\O\left(n^{3/2}\sqrt{\log T}(\sqrt{T} + \sqrt{D \log D}) + n^2 \sqrt{D}\log T\right)
%     =\Otil\left (n^{3/2}\sqrt T+n^2\sqrt D\right ),
% \end{equation*}
% where $T$ is the number of rounds, $n$ is the ambient dimension, and $D=\sum_{t=1}^T d_t$ is the total delay.
\end{theorem}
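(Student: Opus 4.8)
The plan is to leverage the fact that the \texttt{Banker-OMD} machinery of \Cref{thm-banker-omd,corollary-adversarial,thm-banker-omd-B} is agnostic to both the loss estimator and the regularizer, so the whole argument reduces to (i) checking that the \texttt{BOLO} sampling scheme and estimator really do instantiate \Cref{banker-omd} and that the resulting $(\Psi,x_0)$ is regular, and (ii) choosing the action scales $\sigma_t$ and adding up the two cost terms in \Cref{corollary-adversarial}. First I would recall from \citet{abernethy2008competing} that, with an $\O(n)$-self-concordant barrier $\Psi$ and $x_0=\nabla\Psi^*(\mathbf 0)$, sampling $A_t$ from the Dikin ellipsoid at $x_t$ yields $\E[A_t\mid x_t]=x_t$ and an unbiased estimator $\tilde l_t$ with $\E[\tilde l_t\mid x_t]=l_t$ and local dual norm $\|\tilde l_t\|_{\nabla^{-2}\Psi(x_t)}=\O(n)$; consequently $\E[\sum_{t}\langle \tilde l_t,x_t-y\rangle]=\mathfrak R_T$ for the worst-case comparator $y$, and the one-step inequality \Cref{eq-omd-basic-mab} underlying \Cref{thm-banker-omd} holds verbatim with $D_\Psi$ the Bregman divergence of the barrier. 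I also need to note that the convex-combination update \Cref{eq:enhanced decision rule of x} always returns a point in $\mathrm{int}(\mathcal C)$ (since $\nabla\bar\Psi^*$ maps into $\mathrm{int}(\mathcal C)$ by the Legendre property), so the Dikin-ellipsoid sampling is always deployable --- this is the only place where the specific geometry of the application enters.

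The regularity statement from \citet{abernethy2008competing} is conditional: $D_\Psi(y,x_0)\le C_1=\O(n\log T)$ is unconditional, but the stability bound $\E[\sigma_t D_\Psi(x_t,\tilde z_t)\mid \mathcal F_{t-1}]\le C_2/\sigma_t$ with $C_2=\O(n^2)$ needs the step $\sigma_t^{-1}$ to be below a threshold of order $1/n$, i.e.\ $\sigma_t=\Omega(n)$, because the self-concordance argument requires $\sigma_t^{-1}\|\tilde l_t\|_{\nabla^{-2}\Psi(x_t)}$ to be a small constant. Therefore I would run \Cref{banker-omd} with the action scales of \Cref{thm-banker-omd-B} \emph{clamped from below} by $\Theta(n)$:
\[
\sigma_t=\max\!\left\{\Theta(n),\ \sqrt{\tfrac{C_2}{C_1}}\Big(\tfrac1{\sqrt t}+\mathfrak d_t\sqrt{\tfrac{\ln(\mathfrak D_t+1)}{\mathfrak D_t}}\Big)^{-1}\right\}.
\]
By \Cref{corollary-adversarial} this gives $\mathfrak R_T\le C_1\,\E[B_T]+C_2\,\E[\sum_{t=1}^T\sigma_t^{-1}]$, and it remains to bound the two terms.

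The immediate-cost term is the easy half: clamping only \emph{increases} $\sigma_t$, so $\sum_t\sigma_t^{-1}\le\sqrt{C_1/C_2}\sum_t(\tfrac1{\sqrt t}+\mathfrak d_t\sqrt{\ln(\mathfrak D_t+1)/\mathfrak D_t})=\sqrt{C_1/C_2}\cdot\O(\sqrt T+\sqrt{D\log D})$ by \Cref{lem:summation lemma}, hence $C_2\,\E[\sum_t\sigma_t^{-1}]=\O(\sqrt{C_1C_2}(\sqrt T+\sqrt{D\log D}))=\Otil(n^{3/2}(\sqrt T+\sqrt D))$. For the investment term I would use \Cref{lemma-banker-omd-B}: writing $T_0$ for the round of the last positive investment and $T_1<\dots<T_m$ for the feedbacks still outstanding at $T_0$, one has $B_T=\sigma_{T_0}+\sum_{i=1}^m\sigma_{T_i}$, and the $\binom{m+1}{2}\le D$ counting argument gives $m=\O(\sqrt D)$. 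Since $\max\{a,b\}\le a+b$, each $\sigma_{T_i}$ splits into the clamp $\Theta(n)$ plus a Banker scale, and the Banker scales sum to $\sqrt{C_2/C_1}\cdot\Otil(\sqrt T+\sqrt D)$ exactly as in the proof of \Cref{thm-banker-omd-B}; hence $B_T=\Otil\!\big(n\sqrt D+\sqrt{C_2/C_1}(\sqrt T+\sqrt D)\big)$, and multiplying by $C_1=\O(n\log T)$ yields $C_1\,\E[B_T]=\Otil(n^2\sqrt D+\sqrt{C_1C_2}(\sqrt T+\sqrt D))=\Otil(n^{3/2}\sqrt T+n^2\sqrt D)$ (the $\O(n\log T)$ clamped rounds at small $t$ contribute only a lower-order $\Otil(n^2)$). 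Adding the two halves gives $\mathfrak R_T=\Otil(n^{3/2}\sqrt T+n^2\sqrt D)$, and when $D=0$ no clamping beyond \texttt{BOLO}'s own is triggered, so the bound collapses to $\Otil(n^{3/2}\sqrt T)$.

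The crux --- and the main obstacle --- is precisely the tension between the self-concordance stability requirement, which forces $\sigma_t=\Omega(n)$ (unlike the MAB applications, where arbitrarily small action scales are harmless), and the \texttt{Banker-OMD} investment accounting: the clamp is harmless for the immediate costs but inflates $B_T$ by an additive $\O(n\sqrt D)$ over the $\Theta(\sqrt D)$ heavily-delayed rounds, and this $\O(n\sqrt D)$, amplified by the investment-cost coefficient $C_1=\Otil(n)$, is exactly what turns the ``ideal'' $n^{3/2}\sqrt D$ delay term into $n^2\sqrt D$. Getting this bookkeeping right --- in particular, verifying that replacing the Banker-prescribed scales by their clamped versions does not disturb the saving/withdrawal structure that \Cref{thm-banker-omd} relies on (which it does not, since \Cref{lemma-banker-convex-comb} and \Cref{eq:enhanced decision rule of x} are valid for \emph{any} positive scales) --- is the delicate part; everything else is a routine instantiation of the general framework with the parameters supplied by \citet{abernethy2008competing}.
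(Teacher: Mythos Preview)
Your proposal is correct and follows essentially the same route as the paper: instantiate \Cref{thm-banker-omd}/\Cref{corollary-adversarial} with the \texttt{BOLO} estimator and an $\O(n)$-self-concordant barrier, use the $(C_1,C_2)=(\O(n\log T),\O(n^2))$ regularity constants from \citet{abernethy2008competing}, clamp $\sigma_t$ at $8n$ so the self-concordance stability lemma applies, and then bound the two terms via \Cref{lemma-banker-omd-B} and \Cref{lem:summation lemma}. Your treatment of the clamp is in fact more explicit than the paper's appendix --- you correctly trace the $n^2\sqrt D$ term to the $(m{+}1)\cdot\Theta(n)$ surcharge in $B_T$ over the $m=\O(\sqrt D)$ outstanding rounds, whereas the paper cites \Cref{thm-banker-omd-B} somewhat loosely.

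One small correction: the bound $D_\Psi(y,x_0)\le C_1=\O(n\log T)$ is \emph{not} unconditional over all of $\mathcal C$, since a self-concordant barrier blows up at the boundary; the paper (and \citet{abernethy2008competing}) restricts the comparator to the shrunk set $\mathcal C_T=\{y:\pi_{x_0}(y)\le 1-1/T\}$ via \Cref{lemma-self-cord-radius}, incurring only an $\O(1)$ additive loss when passing from $\sup_{y\in\mathcal C}$ to $\sup_{y\in\mathcal C_T}$. This is a standard patch and does not affect your argument.
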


% \begin{remark}
To our knowledge, \texttt{Banker-BOLO} is the first algorithm achieving $\Otil(\text{poly}(n)(\sqrt{T} + \sqrt{D}))$ regret for adversarial linear bandits with arbitrary delays.%
\footnote{Meanwhile, the lower bound is still under-explored. The only lower bound for delayed adversarial linear bandits is $\Omega(n\sqrt T+\sqrt{ndT})$ under the uniform and known delay assumption \citep{ito2020delay}. As our setting is more general, $(\sqrt T+\sqrt D)$ factors are unavoidable, though the optimal dependency on $n$ remains unknown.}
The previous feedback delay model on adversarial linear bandits only allows \textit{constant} delay lengths \citep{ito2020delay}; moreover, this constant length needs to be revealed to the agent \textit{beforehand}.  Therefore, our setting is far more general than the previous one.
And -- more importantly -- such achievements are attained by simply plugging a non-delayed adversarial linear bandit algorithm (\texttt{BOLO}) into our \texttt{Banker-OMD} framework.
% \end{remark}

\section{Conclusion and Future Directions}
In this paper, we propose the \texttt{Banker-OMD} framework for bandit learning problems with feedback delays. It almost decouples feedback delay handling and the task-specific OMD algorithm design.
We illustrate the power of our framework by studying two different problems --- we achieve the first near-optimal performance in both settings. Our result also extends to non-delayed scale-free adversarial MABs.

However, the most important contribution of our work is to  illustrate the power of \texttt{Banker-OMD} in decoupling feedback delays from the algorithm design (i.e., deciding the regularizer $\Psi$ and action scale $\sigma_t$'s). Using our framework, algorithms for non-delayed problems like \texttt{Tsallis-INF} \citep{zimmert2019optimal} or \texttt{BOLO} \citep{abernethy2008competing} can be easily tuned to handle delays. Thus, we expect our framework to be used in many other delayed bandit learning problems.

Moreover, \texttt{Banker-OMD} allows the learning scales to be non-monotonic (see \Cref{banker-omd}), which is more flexible than vanilla OMD even in non-delayed settings.
Our framework is also capable of handling arm-dependent delays (see \Cref{apdx-arm-dependent-delays}).
We leave further investigations to the future.

Besides regret, the space complexity of \texttt{Banker-OMD} can also be improved: The current version requires memorizing the actions for all rounds with absent feedback (which can be as large as $\O(\sqrt DK)$), while the algorithms specially designed for delayed MABs only need $\O(K)$ space \citep{zimmert2020optimal}. See \Cref{sec:efficiency} for more discussions.

\section*{Acknowledgements}

This work is supported by the Technology and Innovation Major Project of the Ministry of Science and Technology of China under Grant 2020AAA0108400 and 2020AAA0108403 and the Tsinghua Precision Medicine Foundation 10001020109.
We want to thank the anonymous reviewers for their insightful comments.

\newpage
\bibliographystyle{ACM-Reference-Format}
\bibliography{references}

%%
%% If your work has an appendix, this is the place to put it.
\onecolumn
\newpage
\appendix
\renewcommand{\appendixpagename}{\centering \LARGE Supplementary Materials}
\appendixpage

\startcontents[section]
\printcontents[section]{l}{1}{\setcounter{tocdepth}{2}}

\section{More Discussions}
\begin{table*}[htb]
\begin{minipage}{\textwidth}
\caption{Overview of Our Algorithms and Some Related Works}
\label{tab:table-regret}
\begin{savenotes}
\renewcommand{\arraystretch}{1.6}
\resizebox{\textwidth}{!}{%
\begin{tabular}{|c|c|c|}\hline
Algorithm & Setting & Regret Upper bound \\\hline
\citet{zimmert2020optimal} & \multirow{2}{*}{\shortstack{Delayed Adversarial MAB\\ $\Omega(\sqrt{KT}+\sqrt{D\log K})$ \citep{cesa2016delay}}} & $\Otil(\sqrt{KT}+\sqrt D)$ \\\cline{1-1} \cline{3-3}
\texttt{Banker-TINF} (\textbf{\Cref{thm-banker-tinf}}) & & $\Otil(\sqrt{K(D+T)})$ \\\hline
\texttt{Banker-SFTINF} (\textbf{\Cref{banker-sftinf}}) & \multirow{2}{*}{\shortstack{Scale-free Delayed Adversarial MAB\\ $\Omega((\sqrt{KT}+\sqrt{D\log K})L)$ \citep{cesa2016delay}}} & $\Otil(\sqrt{K(D+T)}L)$ for non-negative losses \\\cline{1-1} \cline{3-3}
\multirow{2}{*}{\texttt{Banker-SFLBINF} (\textbf{\Cref{banker-sflbinf})}} &  & $\Otil\left (\sqrt{K\E[\tilde{\mathfrak L}_T^2]}+\sqrt {KD}L\right )=\Otil(\sqrt{K(D+T)}L)$ \\\cline{2-3}
& \multirow{2}{*}{\shortstack{Scale-free Adversarial MAB\\ $\Omega(\sqrt{KT}L)$ \citep{auer2002nonstochastic}}} & $\Otil\left (\sqrt{K\E[\sum_{t=1}^T l_{t,A_t}^2]}\right )=\Otil(\sqrt{KT}L)$ \\\cline{1-1} \cline{3-3}
\citet{putta2021scale} &  & $\Otil\left (\sqrt{K\sum_{t=1}^T \lVert l_t\rVert_2^2}+L\sqrt{KT}\right )=\Otil(K\sqrt TL)$ \\\hline
\texttt{Banker-BOLO} (\textbf{\Cref{banker-bolo}}) & \multirow{2}{*}{\shortstack{Delayed Adversarial Linear Bandit\\ $\Omega(n\sqrt T+\sqrt{nD})$ \citep{ito2020delay}\footnote{Precisely, \citet{ito2020delay} gave an $\Omega(n\sqrt T+\sqrt{ndT})$ bound for the uniform delay case where the delay $d$ is known before-hand.}}} & $\Otil(n^{3/2}\sqrt T+n^2 \sqrt D)$ \\\cline{1-1} \cline{3-3}
\citet{ito2020delay} &  & $\Otil(n\sqrt T+\sqrt n \sqrt{dT})$ with known uniform delays $d$ \\\hline
\end{tabular}}
\end{savenotes}
\end{minipage}
\end{table*}

%\longbo{discuss a few most-related papers and explain the difference}
\subsection{More Related Works}
\label{sec:related work}
\textbf{Online Mirror Descent.} Online mirror descent is a concept that originated in classical optimization and first developed by \citet{nemirovski1979efficient} and \citet{nemirovskij1983problem}. It was introduced to the context of online learning no later than \citet{warmuth1997continuous, gordon1999regret, shalev2007online, shalev2007primal}. Since then, OMD has been used in various online learning tasks with adversarial reward, such as MABs \citep{audibert2014regret, abernethy2015fighting, wei2018more, zimmert2019optimal}, semi-bandits \citep{audibert2014regret, zimmert2019beating}, linear bandits \citep{abernethy2008competing, bubeck2012towards, audibert2014regret} and Markov Decision Processes (MDP) \citep{jin2020learning, jin2020simultaneously}. For more discussions, one may refer to the references therein.

 %\jiatai{move comparison to previous delayed MAB works from introduction to here.}
\textbf{Delayed MAB.} \citet{joulani2013online} first studied adapting existing stochastic MAB algorithm to delayed settings with i.i.d. delays. \citet{cesa2016delay} studied adversarial MABs with uniform delays $d$ and derived a regret lower bound of $\Omega(\max\{\sqrt{KT}, \sqrt{dT\log K}\})$. %\citep{bistritz2019online} and \citep{thune2019nonstochastic} independently proposed an $\O(\sqrt{KT\log K} + \sqrt{D\log K})$ algorithm. \citep{zimmert2020optimal} proposed an $\O(\sqrt{KT} + \sqrt{D\log K})$ algorithm. 
 Recent works \citep{bistritz2019online, thune2019nonstochastic, zimmert2020optimal} show that the total overhead due to feedback delays can be controlled in $\O(\sqrt{D\log K})$ (where $D$ is the total delay, which is $dT$ if the delays are uniform) by introducing a moving negative entropy component in the OMD regularizer. Remarkably, all these works assumed $[0,1]$-bounded losses, i.e., $l_{t,a}\in [0,1]$, $\forall t\in [T],a\in [K]$. Thus, our delayed scale-free MAB setting extends existing results to allow general losses. 
% is more general.
 % \longbo{remove these}In the regret analysis, one needs to take into consideration the drift between the action actually taken and an ad-hoc imaginary action planned without feedback delays. It could involve many case-by-case technical details to apply this method to other existing MAB algorithm or other online learning problems. 
 % 

%\daiyan{we really need this?} Besides these non-anonymous settings \daiyan{no definition}, people also study anonymous delays. \citep{pike2018bandits} studied a setting with aggregated anonymous feedback. \citep{cesa2018nonstochastic} studied a setting with composite anonymous feedback. \citep{li2019bandit} studied a non-aggregated version.
 
\textbf{Delayed Linear Bandits.} \citet{NEURIPS2019_56cb94cb} first studied stochastic linear bandits with i.i.d. delays. \citet{vernade2020linear} then studied stochastic linear bandits with random but only partially observable delays. \citet{ito2020delay} studied adversarial linear bandits with uniform delay $d$, proposing an $\Otil(\sqrt{n(n+d)T})$-regret algorithm and showing a near-matching $\Omega(\sqrt{n(n+d)T})$ lower bound. % for this problem.

\textbf{Learning with Feedback Delays.}
Aside from aforementioned delayed MABs and delayed linear bandits, feedback delays are also considered in stochastic Markov Decision Processes (MDPs), see, e.g., \citep{lancewicki2020learning,howson2021delayed,jin2022near,dai2022follow}.
General online optimizations with feedback delays and full information \citep{mesterharm2005line,agarwal2011distributed} or bandit feedback \citep{dudik2011efficient,desautels2014parallelizing} are also extensively studied in the literature.
Finally, feedback delays in wireless network optimizations are recently handled by \citet{huang2021robust}, via the application of a preliminary version of our \texttt{Banker-OMD} framework.

\textbf{Scale-free Learning.} Scale-free settings were first studied in full-information case (e.g., \citep{de2014follow,orabona2018scale}). The most recent work in this line \citep{orseau2021isotuning} studies non-delayed full-information scale-free online learning.
For MABs, \citet{hadiji2020adaptation} proposed an $\mathcal O(L_\infty \sqrt{KT\log K})$ algorithm for stochastic ones.
\citet{putta2021scale} then considered the more challenging adversarial case, yielding two adaptive bounds, $\Otil(\sqrt{KL_2}+L_\infty \sqrt{KT})$ and $\Otil(\sqrt{KL_2}+L_\infty \sqrt{KL_1})$, which both become $\Otil(K\sqrt{T} L_\infty)$ in the worst case. Here, $L_p$ denotes the $p$-norm of the flattened loss matrix, i.e., $L_p=(\sum_{t=1}^T \sum_{i=1}^K l_{t,i}^p)^{1/p}$. Importantly, no previous work considered delayed feedback in scale-free MABs, and we give the first attempt to solve this problem.

\begin{comment}
At last, we would like to mention a recent paper by \citet{huang2021robust},%
\footnote{This paper uses our \texttt{Banker-OMD} framework by citing a preliminary version of this paper that we posted to arXiv. Due to the double-blind reviewing policy, we include their paper in the Supplementary Materials, with the citation anonymized. We also put an anonymous version of the arXiv preprint in the Supplementary Materials for a better understanding.}
which uses our \texttt{Banker-OMD} framework to design a robust wireless scheduling algorithm. In wireless scheduling problems, there is a base station which wants to transmit data to $K$ users.
However, the station can only communicate with $M\le K$ users for each round. Moreover, the channel capacity between the server and each user can also vary arbitrarily with time; even worse, the server can only know how much data each user receives after a random delay.
The server wants to maximize the total amount of transmitted data. Hence, this can be viewed as a learning problem with arbitrary feedback delays and bandit feedback (the server does not know the channel capacity unless it tries to transmit data).
By using our framework, they designed a scheduling algorithm and get $\O(\sqrt{MK}(\sqrt T+\sqrt{D\log D}))$ regret. Therefore, our framework can not only be used in reinforcement learning problems, but also many in other fields like wireless network optimization.
\end{comment}

\subsection{Why Not Use Existing Works to Solve the Delayed Scale-free Adversarial MAB Problem?}
\label{sec:zimmert paper}
In this section, we explain why we are not satisfied with simply using existing works (e.g., \citep{zimmert2020optimal}) to solve our delayed scale-free adversarial MAB setting (via techniques like doubling trick).
The main reason is that handling delays and handling scale-free losses are actually coherent --- a largely delayed large loss can make the loss estimation inaccurate for a long phase. As a result, to obtain an analysis for the modified algorithm, it still needs to resolve a few challenging technical difficulties.

For example, consider the representative work by \citet{zimmert2020optimal}. This work introduces the notation $\hat L_{t,i_t}^{\text{miss}}$ to denote the sum of estimated losses of the outstanding prior steps. Originally, their analysis upper-bounds the expectation of $\hat L_{t,i_t}^{\text{miss}}$ by $\mathfrak d_t$. As a result, the dependence of $D$ in final regret bound comes from summing up $\hat L_{t,i_t}^{\text{miss}}$ and then taking expectation. While this argument works well when $L=1$, such calculation is no longer valid if they get divided by $\hat L_t$'s, as a heavily delayed large loss will affect up to $\Theta(D)$ subsequent $\hat L_{t,i_t}^{\text{miss}}$'s. Moreover, when there may be negative losses, $\hat L_{t,i_t}^{\text{miss}}$ may contain negative components, which means the step (f) in the proof of Lemma 3 (which uses second-order remainder upper bounds of Bergman divergences) no longer holds.

Therefore, as resolving such issues are challenging and does not provide generality and flexibility on other tasks (like delayed adversarial linear bandits), we instead propose a new framework of handling delayed bandit learning problems --- as we illustrated in the main text, our framework allows various nice properties and solves many different problems (most of which are open problems in the literature). That's why we do not directly adopt existing delayed MAB papers.

\subsection{Arm-Dependent Delays and \texttt{Banker-OMD}}
\label{apdx-arm-dependent-delays}
Throughout the paper, we discussed about feedback delays that only depends on time indices (i.e., they are irrelevant to the actions actually taken).
This model, namely arm-independent delay model, is the most common one in the literature \citep{thune2019nonstochastic,bistritz2019online,zimmert2020optimal,gyorgy2021adapting}.
However, in realistic applications, such an assumption is often too restrictive. For example, in medical experiments, different medicine may take different time to show its effect.

To resolve this issue, \citet{van2022nonstochastic} recently proposed a more general setting called \textit{arm-dependent delays}, where there is a delay matrix $\delta \in \mathbb N^{T\times K}$ (also chosen by an oblivious adversary) to give out delay lengths for each (round, arm) \textit{combination}.
Formally, the actual delay at time $t$ is determined by $d_t \gets \delta_{t,A_t}$. 

\citet{van2022nonstochastic} pointed out that existing adversarial MAB works on \textit{arm-indepdendent} delays, e.g., \citep{thune2019nonstochastic,zimmert2020optimal} are \textit{not capable} of dealing with the new setting. They also proposed a new algorithm guaranteeing
\begin{equation}\label{eq:arm-dependent delays}
    \mathfrak R_T \le \sqrt{KT} + \sqrt{\ln K \cdot \E\left [\sum_{t=1}^T\langle x_t, \rho_t \rangle\right ]} + \rho^\ast,
\end{equation}
where $\rho_t$ is a $K$-dimensional vector such that $\rho_{t,a}$ denotes the number of feedback of action $a$ that is still on the way at the beginning of round $t$, i.e., $\rho_{t,a}\triangleq \sum_{s=1}^{t-1} \mathbbm 1_{A_s=a,s+\delta_{s,a}\ge t}$. Moreover, $\rho^\ast \triangleq \max_{t\in [T], a\in [K]} \rho_{t,a}$ denotes the maximum number of missing observations of some action.

Although our \texttt{Banker-OMD} framework is presented with the assumption of arm-independent delays, we claim here that our results also apply to arm-dependently delayed settings as long as we redefine the symbol $D$ as the sum of all delay lengths \textit{actually experienced}, namely $D\triangleq \sum_{t=1}^T \delta_{t,A_t}$. In other words, $D$ becomes a random variable rather than a obliviously chosen constant --- therefore, we can attain a regret bound in terms of $\E[D]$, recovering \Cref{eq:arm-dependent delays}.

Technically speaking, the reason for the difference between our result and the previous ones \citep{thune2019nonstochastic,zimmert2020optimal} is that, rather than directly bound the regret (which involves expectation), our core result \Cref{thm-banker-omd} upper bounds the \textit{sample path} loss difference $\sum_{t=1}^T \langle \tilde{l}_t, x_t - y\rangle$.
When we bound the total investment namely $B_T$, we also derive a sample path upper bound for $B_T$: As we only make use of bounds like $\sum_{i\le n} 1/i = \Theta(\log n)$ and monotonicity of functions like $x/\ln x$, whether $D$ is a random variable does not affect this bound of $B_T$.
Moreover, as for each immediate cost term $\sigma_t D_\Psi(x_t, \tilde z_t)$, we either directly derive a sample path upper bound (e.g., for log-barrier $\Psi$, \Cref{lemma-immedicate-cost-general-log-barrier}), or derive a bound for the conditional expectation $\E[ \sigma_t D_\Psi(x_t, \tilde z_t) \mid \mathcal F_{t-1}]$, we can always obtain a $\O(\sigma_t^{-1})$ bound given appropriate regularity conditions of $\Psi$.
We then sum up these bounds and then apply summation lemmas to get bounds in $D$. Again, until now, whether $D$ is a random variable does not matter.
At this point, we have derived an upper bound for $\sum_{t=1}^T \langle \tilde{l}_t, x_t - y\rangle$ in $D$. Luckily, this bound is a concave function in $D$ (as it only involves square roots and logarithms). Taking expectations on both sides, we can then get a regret bound in $\E[D]$, just like \Cref{eq:arm-dependent delays}.

\subsection{Technical Comparison with \citep{putta2021scale}}
\label{sec:comparison with putta}
Compared to our work, \citet{putta2021scale} did not explicitly estimate the actual loss range $L$ (whereas we did by introducing $\hat L_t$'s).
Hence, instead of our tuning $\sigma_t \propto (1+\sum_{s=1}^{t-1}l_{s,A_s}^2 + \hat L_t^2)^{\nicefrac 12}\approx (1+\sum_{s=1}^{{\color{blue}{t}}}l_{s,A_s}^2)^{\nicefrac 12}$, they can only set $\sigma_t \propto (1+\sum_{s=1}^{{\color{blue}{t-1}}}l_{s,A_s}^2)^{\nicefrac 12}$ -- thus, they can only use the following summation lemma different from \Cref{lem:summation lemma}: 
\begin{equation*}
\sum_{i=1}^n \frac {x_i}{\sqrt{1 + \sum_{j=1}^{i-1}x_j}} \le \O\left (\sqrt{1 + \sum_{i=1}^n x_i} + \max_{1\le i\le n} x_i\right ).
\end{equation*}

The maximum of $x_i$ causes an extra term that is proportional to the maximum reciprocal of probabilities to pull each arm in their regret bound.
Consequently, their algorithm must include \textit{explicit uniform exploration}, i.e., playing $(1-\gamma_t)x_t+\frac{\gamma_t}{K}$ instead of $x_t$ for the $t$-th step. Even for non-delayed settings, $\gamma_t =\Omega(t^{-\nicefrac 1 2})$ is needed for $\tilde \O(\sqrt{T}L)$ style total regret. Therefore, if one directly adapts their technique to delayed settings, the regret bound would involve a term proportional to
\begin{equation*}
\max_{1\le t\le T} \sum_{s\le t: a_s=\text{``missing'' at }t} x_{s,A_s}^{-1},
\end{equation*}

which could be as large as $\Theta(\sqrt{DT})$ in the worst case, making their framework incapable to feedback delays.

In a nutshell, there are essentially two improvements in our delayed scale-free MAB results: the first one is due to the \textit{novel learning rates} (which improves the regret in non-delayed settings), and the second one is due to \textit{our proposed Banker-OMD framework} (which easily generalizes the non-delayed regret bounds to delay-robust ones).
%\daiyan{What's the conclusion? so they're less flexible? Or we just leave a technical comparison here without further explanation}

%s\subsection{Alternative Implementations for \texttt{GreedyPick}}

\subsection{Computational and Space Complexity of \texttt{Banker-OMD}}
\label{sec:efficiency}
\textbf{Computational Complexity.}
Notice that the loop in Lines 8-9 of \Cref{banker-omd} is to greedily spend up previous savings to meet the desired new action scale $\sigma_t$. Thus, by maintaining a linked list of all ``non-exhausted'' savings, the loop only needs to scan amortized $\O(1)$ previous time indices to decide each new $x_t$. Hence the algorithm displayed in \Cref{banker-omd} can be implemented with an amortized time complexity of $\O(K)$ per action -- which matches vanilla OMD.

\textbf{Space Complexity.}
While the computational complexity of \Cref{banker-omd} is the same as vanilla OMD, the space complexity is much larger as we need to memorize all previous actions.
Fortunately, we can also adopt a slightly different decision rule of $\sigma_{t,s}$ and $b_t$ (which is not included in the current version for ease of presentation), which further reduces the space complexity to $\O(\sqrt DK)$ while maintaining a per-step $\O(K)$ amortized time complexity:
\begin{enumerate}
\item Calculate the sum of all available savings $v=\sum_{s<t,a_s\ =\ \text{arrived}}v_s$.
\item If $v<\sigma_t$, fill up the remaining proportion by $b_t=\sigma_t-v$. Otherwise, let $b_t=0$. This step ensures that \Cref{lemma-banker-omd-B} (the upper bound on the total investment $B_T$) still holds.
\item Finally, for each $s<t$ where $a_s=\text{arrived}$, we set $\sigma_{t,s}=\min(v,\sigma_t)\frac{v_s}{v}$. In other words, we set $\sigma_{t,s}\propto v_s$ such that $\sum_{s<t,a_s\ =\ \text{arrived}}\sigma_{t,s}=\min(v,\sigma_t)$, which ensures \Cref{eq:conditions of b and m}.
\end{enumerate}

Under the new policy, $x_t$ in Line 10 of \Cref{banker-omd} becomes
\begin{equation*}
x_t=\nabla \bar{\Psi}^\ast\left (\frac{\min(v,\sigma_t)}{\sigma_t}\sum_{s<t,a_s\ =\ \text{arrived}}\frac{v_s}{v}\nabla \Psi(z_s)+\frac{b_t}{\sigma_t}\nabla \Psi(x_0)\right ).
\end{equation*}

Therefore, by tracking $\sum_{s<t,a_s\ =\ \text{arrived}}\frac{v_s}{v}\nabla \Psi(z_s)$, we can obtain $x_t$ by applying the mirror map $\bar \Psi^*(\cdot)$ to the current $\sum_{s<t,a_s\ =\ \text{arrived}}\frac{v_s}{v}\nabla \Psi(z_s)$. Because we always have $v_s\propto \sigma_s$, each update does not involve calculating the summation again and thus can be done in $\mathcal O(K)$ time -- again the same as vanilla OMD.
Moreover, in this version, we only need to memorize the actions whose feedback is absent, giving $\mathcal O(\sqrt DK)$ space consumption. 

% \newpage
\section{Technical Details for Online Mirror Descent and \texttt{Banker-OMD}}\label{sec:appendix analysis of banker}
Throughout the paper, we use
\begin{equation*}
    \mathcal{F}_t = \sigma\left(A_1, \ldots, A_t, l_{1,A_1}\mathbbm{1}[d_1 + 1 \le t], \ldots, l_{t,A_t}\mathbbm{1}[d_t + t \le t]\right)
\end{equation*}
to denote the filtration of $\sigma$-algebra  when studying random quantities indexed by time. Below, we present a summation lemma that we heavily use when tuning the learning scales.

\begin{lemma}[Summation Lemma {\citep[Lemma 3.5]{auer2002adaptive}}]\label{lem:summation lemma}
The following holds for any non-negative $x_1,x_2,\ldots,x_T$'s:
\begin{equation*}
    \sum_{t=1}^T \frac{x_t}{\sqrt{1+\sum_{s=1}^tx_s}}\le 2\sqrt{1+\sum_{t=1}^T x_t}.
\end{equation*}
\end{lemma}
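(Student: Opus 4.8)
The plan is the standard ``potential telescoping'' argument built on the concavity of the square root. First I would introduce the running sums $S_0 \triangleq 1$ and $S_t \triangleq 1 + \sum_{s=1}^t x_s$ for $t \in [T]$, so that $x_t = S_t - S_{t-1}$ and the left-hand side becomes $\sum_{t=1}^T \frac{S_t - S_{t-1}}{\sqrt{S_t}}$. Since the $x_t$'s are non-negative, the sequence $(S_t)$ is non-decreasing, which is the only structural fact I will need.

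The key step is the per-term bound $\frac{S_t - S_{t-1}}{\sqrt{S_t}} \le 2\big(\sqrt{S_t} - \sqrt{S_{t-1}}\big)$. I would prove it by rationalizing the right-hand side: $2(\sqrt{S_t} - \sqrt{S_{t-1}}) = \frac{2(S_t - S_{t-1})}{\sqrt{S_t} + \sqrt{S_{t-1}}}$, and since $0 \le S_{t-1} \le S_t$ we have $\sqrt{S_t} + \sqrt{S_{t-1}} \le 2\sqrt{S_t}$, so the fraction is at least $\frac{S_t - S_{t-1}}{\sqrt{S_t}}$, as claimed. (Equivalently this is just concavity of $u \mapsto \sqrt u$ together with $S_t - S_{t-1} \ge 0$.)

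Summing this inequality over $t = 1, \ldots, T$ gives a telescoping sum on the right: $\sum_{t=1}^T \frac{x_t}{\sqrt{S_t}} \le 2\sum_{t=1}^T \big(\sqrt{S_t} - \sqrt{S_{t-1}}\big) = 2\big(\sqrt{S_T} - \sqrt{S_0}\big) = 2\sqrt{S_T} - 2 \le 2\sqrt{S_T}$, and substituting $S_T = 1 + \sum_{t=1}^T x_t$ yields exactly the claimed bound. I do not anticipate a genuine obstacle here; the only point requiring a moment's care is handling the denominator when some $S_{t-1}$ could in principle be small, but the additive $1$ in the definition of $S_t$ keeps everything strictly positive, so no degenerate case arises. (If one wanted to avoid division by something possibly zero altogether, the $+1$ offset makes $S_t \ge 1$ throughout, so this is automatic.)
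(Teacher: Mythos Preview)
Your proof is correct and is exactly the standard telescoping argument for this inequality. The paper itself does not give a proof of this lemma; it simply cites it as Lemma~3.5 of \citet{auer2002adaptive}, so there is nothing to compare against beyond noting that your argument is the canonical one.
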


\subsection{Legendre Functions and Their Properties}
\begin{definition}[Legendre Functions]\label{def:legendre}
For a strictly convex $f$ defined on a convex domain $A\subseteq \mathbb R^K$, we say $f$ is \textit{Legendre} if (i)  $\mathop{int}(A)$ is non-empty, (ii) $f$ is differentiable on $\mathop{int}(A)$, and (iii) $\lim_{n\rightarrow\infty}\Vert\nabla f(x_n)\Vert_2 \rightarrow \infty$ for any sequence $(x_n)_{n=1}^\infty$ in $\mathop{int}(A)$ converging to some $x\in \partial \mathop{int}(A)$.
\end{definition}

The proof of the following lemmas can be found in any textbook for convex analysis, e.g., \citep{rockafellar2015convex}.

\begin{lemma}
\label{lemma-legendre}
Let $\mathcal{C}\subseteq \mathbb{R}^n$ be a convex set, $f:\mathcal{C} \rightarrow \mathbb{R}$ be a Legendre function. Then,
\begin{enumerate}
    \item $\nabla f$ is a bijection between $\mathop{int}(\mathcal{C})$ and $\mathop{int}(\mathop{dom}(f^*))$ with the inverse $(\nabla f)^{-1} = \nabla f^*$;
    \item $D_f(y, x) = D_{f^*}(\nabla f(x), \nabla f(y))$ for all $x, y \in \mathop{int}(\mathcal{C})$;b
    \item the convex conjugate $f^*$ is Legendre.
\end{enumerate}
\end{lemma}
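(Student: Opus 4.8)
The three assertions are exactly the content of Rockafellar's conjugacy theorem for Legendre functions (Theorem~26.5 of \citep{rockafellar2015convex}), and my plan is to reconstruct that proof from the equality case of the Fenchel--Young inequality. The single tool behind everything is the fact that, for a closed proper convex $f$, one has $f(x)+f^*(y)=\langle x,y\rangle$ if and only if $y\in\partial f(x)$, equivalently $x\in\partial f^*(y)$; this follows directly from the definition $f^*(y)=\sup_x\{\langle x,y\rangle-f(x)\}$ by inspecting when the supremum is attained. So the first step is to record this equality case and the subdifferential-inversion relation $y\in\partial f(x)\Leftrightarrow x\in\partial f^*(y)$. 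Since $f$ is differentiable on $\mathop{int}(\mathcal{C})$, there $\partial f(x)=\{\nabla f(x)\}$, and the equality case reads $f^*(\nabla f(x))=\langle x,\nabla f(x)\rangle-f(x)$, which I will use repeatedly.

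For claim~1, injectivity of $\nabla f$ on $\mathop{int}(\mathcal{C})$ is immediate from strict convexity: if $\nabla f(x_1)=\nabla f(x_2)$ with $x_1\neq x_2$, the strict monotonicity inequality $\langle\nabla f(x_1)-\nabla f(x_2),\,x_1-x_2\rangle>0$ is violated. For the range I would prove $\nabla f(\mathop{int}(\mathcal{C}))=\mathop{int}(\mathop{dom}(f^*))$ in two inclusions: any $y=\nabla f(x)$ has nonempty, single-valued $\partial f^*(y)\ni x$, placing $y$ in the interior of $\mathop{dom}(f^*)$; conversely, for $y\in\mathop{int}(\mathop{dom}(f^*))$ the supremum defining $f^*(y)$ is attained at some $x$, and the steepness condition~(iii) forbids $x\in\partial\mathcal{C}$ (where $\|\nabla f\|\to\infty$ so no finite gradient value is a limit), forcing $x\in\mathop{int}(\mathcal{C})$ with $\nabla f(x)=y$. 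The inverse identity then follows: from $y=\nabla f(x)$ the equality case gives $x\in\partial f^*(y)$, and once claim~3 supplies differentiability of $f^*$ this reads $x=\nabla f^*(y)$, i.e. $(\nabla f)^{-1}=\nabla f^*$.

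Claim~2 is then a direct substitution. Writing $u=\nabla f(x)$, $v=\nabla f(y)$ and using claim~1 to get $\nabla f^*(v)=y$, I expand
\begin{align*}
D_{f^*}(u,v)&=f^*(u)-f^*(v)-\langle\nabla f^*(v),\,u-v\rangle\\
&=\big(\langle x,u\rangle-f(x)\big)-\big(\langle y,v\rangle-f(y)\big)-\langle y,\,u-v\rangle,
\end{align*}
where the two Fenchel--Young equalities from the first step were substituted. The $\langle y,v\rangle$ terms cancel and the expression collapses to $f(y)-f(x)-\langle u,\,y-x\rangle=D_f(y,x)$, which is the claimed identity.

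For claim~3 I verify the three defining properties of $f^*$. Non-emptiness of $\mathop{int}(\mathop{dom}(f^*))$ is precisely the surjectivity established in claim~1. Differentiability on that interior holds because strict convexity of $f$ makes $\partial f^*(y)$ single-valued---if $x_1,x_2$ both attained the supremum defining $f^*(y)$ they would both satisfy $\nabla f(x_i)=y$, contradicting the injectivity above---and a closed convex function is differentiable wherever its subdifferential is a singleton. For steepness, take $y_n\to y_0\in\partial\mathop{dom}(f^*)$ and set $x_n=\nabla f^*(y_n)$; if $\|x_n\|\to\infty$ we are done, and otherwise a bounded subsequence converges to some $x_0$, where either $x_0\in\mathop{int}(\mathcal{C})$---forcing $y_0=\nabla f(x_0)\in\mathop{int}(\mathop{dom}(f^*))$ by continuity, a contradiction---or $x_0\in\partial\mathcal{C}$, whereupon steepness~(iii) of $f$ gives $\|y_n\|=\|\nabla f(x_n)\|\to\infty$, contradicting $y_n\to y_0$. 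The main obstacle is the range characterization in claim~1 together with this steepness argument: these are the only places where the Legendre hypothesis (condition~(iii)) is genuinely used rather than mere strict convexity and differentiability, and the boundary-escape reasoning is delicate because claims~1 and~3 are interdependent. The cleanest rigorous route is therefore to establish the symmetry of the Legendre property under conjugation once and for all---proving simultaneously that $f$ is Legendre iff $f^*$ is, with $\nabla f$ and $\nabla f^*$ mutually inverse---from which all three assertions fall out together.
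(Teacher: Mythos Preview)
The paper does not supply its own proof of this lemma: it simply states that ``the proof of the following lemmas can be found in any textbook for convex analysis, e.g., \citep{rockafellar2015convex}.'' Your proposal is precisely a reconstruction of Rockafellar's Theorem~26.5, the very result the paper cites, so in spirit your approach and the paper's are identical.

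That said, one step in your sketch is too quick. In the range argument for claim~1 you write that ``any $y=\nabla f(x)$ has nonempty, single-valued $\partial f^*(y)\ni x$, placing $y$ in the interior of $\mathop{dom}(f^*)$.'' Non-emptiness of the subdifferential at $y$ does not by itself force $y$ into the interior of the domain; a closed convex function can have a subgradient at a boundary point. The standard route is instead to show that $\nabla f(\mathop{int}(\mathcal{C}))$ is open (via the steepness/essential-smoothness condition, which makes $\nabla f$ a local homeomorphism onto its image) and that it is dense in $\mathop{dom}(f^*)$, whence it must equal $\mathop{int}(\mathop{dom}(f^*))$. You correctly flag at the end that the interdependence of claims~1 and~3 is the delicate part and that the cleanest fix is to prove the Legendre--conjugate symmetry in one shot; that is exactly what Rockafellar does, and once you commit to that route the remaining gaps close.
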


We now give a formal proof to the single-step OMD regret bound \Cref{eq-omd-basic-mab}, stated in the following lemma.

\begin{lemma}
\label{lemma-omd-basic}
For any $\sigma > 0$, $x,y \in \triangle^{[K]}$, $l \in \mathbb{R}_+^K$ and Legendre function $\Psi: \mathbb{R}_+^{K}\ \rightarrow \mathbb{R}$, we have
\begin{equation}
    \left\langle l, x - y \right\rangle \le \sigma D_\Psi(y, x) - \sigma D_\Psi(y, z) + \sigma D_\Psi(x, \tilde{z}) \label{eq-lemma-omd-basic}
\end{equation}
where
\begin{equation}
    z = \mathop{\arg\min}_{x' \in \triangle^{[K]}} \left\langle l, x' \right\rangle + \sigma D_\Psi(x', x),\quad
    \tilde{z} = \mathop{\arg\min}_{x' \in \mathbb{R}_+^K} \left\langle l, x' \right\rangle + \sigma D_\Psi(x', x), \label{eq-lemma-omd-basic-z-def}
\end{equation}
or equivalently,
\begin{equation}
    z = \nabla\bar{\Psi}^*(\nabla\Psi(x) - \frac{1}{\sigma}l),\quad
    \tilde{z} = \nabla\Psi^*(\nabla\Psi(x) - \frac{1}{\sigma}l). \label{eq-lemma-omd-basic-z-def-alt}
\end{equation}
\end{lemma}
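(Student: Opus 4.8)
The plan is to first establish the equivalence of the two characterizations \eqref{eq-lemma-omd-basic-z-def} and \eqref{eq-lemma-omd-basic-z-def-alt} of $z$ and $\tilde z$, and then prove the inequality \eqref{eq-lemma-omd-basic} by a standard three-point expansion of Bregman divergences together with the first-order optimality condition for $z$.

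\textbf{Step 1: Equivalence of the definitions of $\tilde z$.} For the unconstrained minimizer $\tilde z$ over $\mathbb R_+^K$, I would write the objective as $\langle l, x'\rangle + \sigma D_\Psi(x',x) = \sigma\big(\Psi(x') - \langle \nabla\Psi(x) - \tfrac1\sigma l, x'\rangle\big) + (\text{terms independent of }x')$. Since $\Psi$ is Legendre, this is a strictly convex function whose minimizer is obtained by setting the gradient to zero: $\nabla\Psi(x') = \nabla\Psi(x) - \tfrac1\sigma l$, i.e. $x' = \nabla\Psi^*(\nabla\Psi(x) - \tfrac1\sigma l)$ using Lemma~\ref{lemma-legendre}(1) that $\nabla\Psi^* = (\nabla\Psi)^{-1}$ on the relevant domains. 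This gives $\tilde z$. For $z$, the same computation with the objective restricted to $\triangle^{[K]}$ gives $z = \mathop{\arg\min}_{x'\in\triangle^{[K]}} \Psi(x') - \langle \nabla\Psi(x)-\tfrac1\sigma l, x'\rangle = \nabla\bar\Psi^*(\nabla\Psi(x)-\tfrac1\sigma l)$ by definition of the Fenchel conjugate of the restricted function $\bar\Psi$ and the envelope characterization of its gradient.

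\textbf{Step 2: The regret inequality.} Using the first-order optimality condition for $z = \arg\min_{x'\in\triangle^{[K]}}\langle l,x'\rangle + \sigma D_\Psi(x',x)$ at the point $y\in\triangle^{[K]}$, I get $\langle l + \sigma(\nabla\Psi(z)-\nabla\Psi(x)), y - z\rangle \ge 0$, hence $\langle l, z - y\rangle \le \sigma\langle \nabla\Psi(z)-\nabla\Psi(x), y - z\rangle$. Now I expand the right-hand side via the identity $\langle \nabla\Psi(z)-\nabla\Psi(x), y-z\rangle = D_\Psi(y,x) - D_\Psi(y,z) - D_\Psi(z,x)$, which follows directly from the definition of $D_\Psi$. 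This yields $\langle l, z-y\rangle \le \sigma D_\Psi(y,x) - \sigma D_\Psi(y,z) - \sigma D_\Psi(z,x)$. Then I add $\langle l, x - z\rangle$ to both sides and bound $\langle l, x-z\rangle - \sigma D_\Psi(z,x) \le \sigma D_\Psi(x,\tilde z)$ — the latter is where $\tilde z$ enters: since $\tilde z$ is the unconstrained minimizer, $\langle l, \tilde z\rangle + \sigma D_\Psi(\tilde z, x) \le \langle l, z\rangle + \sigma D_\Psi(z, x)$, so $\langle l, x - z\rangle - \sigma D_\Psi(z,x) \le \langle l, x - \tilde z\rangle - \sigma D_\Psi(\tilde z, x)$, and a short computation (using $\nabla\Psi(\tilde z) = \nabla\Psi(x) - \tfrac1\sigma l$, i.e. $l = \sigma(\nabla\Psi(x)-\nabla\Psi(\tilde z))$) rewrites the latter exactly as $\sigma D_\Psi(x,\tilde z)$. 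Combining gives \eqref{eq-lemma-omd-basic}.

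\textbf{Main obstacle.} The routine parts (three-point identity, optimality conditions) are standard; the one point requiring care is handling the \emph{constrained} minimizer $z$ versus the \emph{unconstrained} $\tilde z$ — specifically, justifying that the generalized Pythagorean step $\langle l, x-z\rangle - \sigma D_\Psi(z,x) \le \sigma D_\Psi(x,\tilde z)$ goes through, which ultimately rests on $\triangle^{[K]}$ being convex (so $z$ is a valid comparator in the defining optimization of $\tilde z$) and on the Legendre property guaranteeing $\tilde z \in \mathop{int}(\mathbb R_+^K)$ so that $\nabla\Psi(\tilde z)$ is well-defined. I would make sure the domains and differentiability are handled cleanly there, and otherwise this is a textbook OMD argument (cf.\ \citet{lattimore2020bandit}, Theorem 28.4).
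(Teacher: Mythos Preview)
Your proposal is correct and follows essentially the same route as the paper: both proofs split $\langle l, x-y\rangle = \langle l, x-z\rangle + \langle l, z-y\rangle$, handle the second piece via the first-order optimality condition for $z$ and the three-point identity, and handle the first piece using $l = \sigma(\nabla\Psi(x)-\nabla\Psi(\tilde z))$. The only cosmetic difference is that the paper applies the three-point identity directly to $\langle l, x-z\rangle$ and then drops the nonnegative term $\sigma D_\Psi(z,\tilde z)$, whereas you phrase the same step as the minimizer comparison $\langle l,\tilde z\rangle + \sigma D_\Psi(\tilde z,x)\le \langle l,z\rangle + \sigma D_\Psi(z,x)$; these two inequalities are literally equivalent (their difference is exactly $\sigma D_\Psi(z,\tilde z)$).
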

\begin{proof}
$\Psi$ is Legendre hence $\nabla\Psi$ explodes on $\partial \mathbb{R}_+^K$, which guarantees that the minimizer $x'$ in the definition of $\tilde{z}$ in \Cref{eq-lemma-omd-basic-z-def} will lie in $\mathop{int}(\mathbb{R}_+^K)$ and $\frac{\partial}{\partial x'}[\langle l, x'\rangle + \sigma D_\Psi(x', x)] = 0$. The bijection property in \Cref{lemma-legendre} then asserts this $\mathop{\arg\min}$ definition is equivalent to the definition in \Cref{eq-lemma-omd-basic-z-def-alt} using mirror maps $\nabla \Psi$ and $\nabla \Psi^*$. Since $\bar{\Psi}$ is a Legendre function on $\triangle^{[K]}$, similar argument suggests that the definitions for $z$ in \Cref{eq-lemma-omd-basic-z-def} and \Cref{eq-lemma-omd-basic-z-def-alt} are equivalent.

The definition of $\tilde{z}$ in \Cref{eq-lemma-omd-basic-z-def-alt} implies $l = \sigma (\nabla\Psi(x) - \nabla\Psi(\tilde{z}))$. The first order optimality condition of $z$ in \Cref{eq-lemma-omd-basic-z-def} implies that $\langle \frac{1}{\sigma}l + \nabla\Psi(z) - \nabla\Psi(x), y - z \rangle \ge 0$ for any $y \in \triangle^{[K]}$. Hence we have
\begin{align*}
    \langle l, x - y \rangle & = \langle l, x-z \rangle + \langle l, z - y \rangle \\
    & \le \sigma \langle \nabla\Psi(x) - \nabla\Psi(\tilde{z}), x - z \rangle + \sigma \langle \nabla\Psi(z) - \nabla\Psi(x), y - z \rangle \\
    & \stackrel{(a)}{=} \sigma(D_\Psi(z, x) + D_\Psi(x, \tilde{z}) - D_\Psi(z, \tilde{z})) - \sigma(D_\Psi(y, z) + D_\Psi(z, x) - D_\Psi(y, x)) \\
    & = \sigma D_\Psi(y, x) - \sigma D_\Psi(y, z) + \sigma D_\Psi(x, \tilde{z}) - \sigma D_\Psi(z, \tilde{z}) \\
    & \le \sigma D_\Psi(y, x) - \sigma D_\Psi(y, z) + \sigma D_\Psi(x, \tilde{z})\end{align*}
where $(a)$ uses the following ``three-point identity'' of Bregman divergences:
\begin{equation*}
    D_\Psi(a, b) + D_\Psi(b, c) - D_\Psi(a, c) = \langle \nabla\Psi(c) - \nabla\Psi(b), a-b \rangle.
\end{equation*}
\end{proof}

\subsection{Detailed Proofs in \Cref{sec-banker-omd}}
First of all, we study when the immediate cost terms $\E[\sigma_t D_\Psi(x_t, \tilde{z}_t)]$ in \Cref{vanilla-omd,banker-omd} can be uniformly bounded:
\begin{lemma}
\label{lemma-mab-immediate}
    If $\Psi(x) = \sum_{i=1}^K f(x_i)$ where $f''(x) = x^{-\alpha}$, $\alpha \ge 1$, then in \Cref{vanilla-omd,banker-omd}, for any $t\ge 1$ we have
    \begin{equation*}
        \E[ \sigma_t D_\Psi(x_t, \tilde{z}_t) \mid \mathcal{F}_{t-1}] \le \frac{K}{2\sigma_t}.
    \end{equation*}
\end{lemma}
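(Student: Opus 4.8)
The plan is to exploit the separability of $\Psi$ to reduce the immediate cost to a one-dimensional Bregman divergence, and then control its curvature through the Fenchel conjugate. First I would note that $\tilde l_t=\frac{l_{t,A_t}}{x_{t,A_t}}\mathbf 1_{A_t}$ is supported only on coordinate $A_t$, so $\nabla\Psi(x_t)-\sigma_t^{-1}\tilde l_t$ differs from $\nabla\Psi(x_t)$ only in that coordinate; since $\Psi(x)=\sum_i f(x_i)$ and the mirror map $\nabla\Psi^*$ acts coordinatewise, $\tilde z_t$ agrees with $x_t$ except at $A_t$. Hence $D_\Psi(x_t,\tilde z_t)=\sum_i D_f(x_{t,i},\tilde z_{t,i})=D_f(x_{t,A_t},\tilde z_{t,A_t})$, a scalar divergence, and by \Cref{lemma-legendre}(2) it equals $D_{f^*}\!\big(f'(x_{t,A_t})-\sigma_t^{-1}\tilde l_{t,A_t},\,f'(x_{t,A_t})\big)$.

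Next I would bound this dual divergence by its second-order remainder. Writing $u=f'(x_{t,A_t})$ and $w=\sigma_t^{-1}\tilde l_{t,A_t}\ge 0$ (non-negative because the losses here are non-negative), a Taylor expansion of $f^*$ gives $D_{f^*}(u-w,u)=\frac{1}{2}(f^*)''(\eta)\,w^2$ for some $\eta$ between $u-w$ and $u$. The identity $(f^*)''(\eta)=1/f''\!\big((f^*)'(\eta)\big)$ together with the hypothesis $f''(x)=x^{-\alpha}$ yields $(f^*)''(\eta)=\big((f^*)'(\eta)\big)^\alpha$. Since $f'$ is increasing, $(f^*)'=(f')^{-1}$ is increasing, and $w\ge 0$ forces $\eta\le u$, so $(f^*)'(\eta)\le(f^*)'(u)=x_{t,A_t}$ and therefore $(f^*)''(\eta)\le x_{t,A_t}^{\alpha}$ (using $\alpha\ge 0$). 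Substituting back and using $\tilde l_{t,A_t}=l_{t,A_t}/x_{t,A_t}$ gives the sample-path bound
\[
\sigma_t D_\Psi(x_t,\tilde z_t)\;\le\;\frac{x_{t,A_t}^{\alpha}\,\tilde l_{t,A_t}^2}{2\sigma_t}\;=\;\frac{l_{t,A_t}^2\,x_{t,A_t}^{\alpha-2}}{2\sigma_t}.
\]

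Finally I would take the conditional expectation over $A_t\sim x_t$; since $\sigma_t$ is $\mathcal F_{t-1}$-measurable,
\[
\E[\sigma_t D_\Psi(x_t,\tilde z_t)\mid\mathcal F_{t-1}]\;\le\;\frac{1}{2\sigma_t}\sum_{a=1}^K x_{t,a}\cdot l_{t,a}^2\,x_{t,a}^{\alpha-2}\;=\;\frac{1}{2\sigma_t}\sum_{a=1}^K l_{t,a}^2\,x_{t,a}^{\alpha-1}\;\le\;\frac{K}{2\sigma_t},
\]
where the last inequality uses $l_{t,a}\in[0,1]$ (so $l_{t,a}^2\le 1$) together with $x_{t,a}\in[0,1]$ and $\alpha-1\ge 0$ (so $x_{t,a}^{\alpha-1}\le 1$) — which is precisely where the assumption $\alpha\ge 1$ is used.

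I expect the only genuine subtlety to be the direction of the curvature estimate: bounding $D_f(x_{t,A_t},\tilde z_{t,A_t})$ directly by a primal Taylor remainder puts the evaluation point in $[\tilde z_{t,A_t},x_{t,A_t}]$, where $f''=(\cdot)^{-\alpha}$ is only \emph{lower}-bounded by $x_{t,A_t}^{-\alpha}$ — the wrong direction. Passing to $f^*$ (equivalently, parametrizing the remainder in the dual variable) reverses this, and the non-negativity of $\tilde l_t$ is exactly what keeps the dual evaluation point on the favorable side $\eta\le u$. Everything else is a routine one-variable computation, and the argument is identical for \Cref{vanilla-omd} and \Cref{banker-omd} since both use the same $\tilde z_t$ and the same sampling rule $A_t\sim x_t$.
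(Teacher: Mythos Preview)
Your proof is correct and follows essentially the same approach as the paper's: both pass to the dual Bregman divergence, write it as a second-order Taylor remainder, and use the non-negativity of $\tilde l_t$ to bound the dual Hessian at the favorable endpoint. The only cosmetic difference is that you first reduce to a scalar problem via the one-hot structure of $\tilde l_t$ and the separability of $\Psi$, whereas the paper carries out the identical calculation in vector form, writing $\sigma_t D_\Psi(x_t,\tilde z_t)=\frac{1}{2\sigma_t}\lVert\tilde l_t\rVert_{\nabla^2\Psi^*(\theta_t)}^2$ and then observing that $\nabla^2\Psi^*(\cdot)$ is diagonal with entries non-decreasing along the segment.
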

\begin{proof}
In fact, for any choice of $\Psi$ and $t\ge 1$ we have
\begin{align}
    \sigma_t D_\Psi(x_t, \tilde{z}_t) & = \sigma_t D_{\Psi^*}(\nabla\Psi(\tilde{z}_t), \nabla\Psi(x_t)) \nonumber \\
    & = \sigma_t D_{\Psi^*}(\nabla\Psi(x_t) - \frac{\tilde{l}_t}{\sigma_t}, \nabla\Psi(x_t)) \nonumber \\
    & = \sigma_t\left(Psi^*(\nabla\Psi(x_t) - \frac{\tilde{l}_t}{\sigma_t}) - \Psi^*(\nabla\Psi(x_t)) - \langle x_t,  - \frac{\tilde{l}_t}{\sigma_t}\rangle \right)\nonumber \\
    & = \frac{\Vert \tilde{l}_t \Vert_{\nabla^2\Psi^*(\theta_t)}^2}{2\sigma_t}, \label{eq-immediate-cost-matrix-norm}
\end{align}
where in the last step we write the Bregman divergence into a second order Lagrange remainder, $\theta_t$ is some element inside the line segment connecting $\nabla\Psi(x_t) - \frac{1}{\sigma_t}\tilde{l}_t$ and $\nabla\Psi(x_t)$.

Note that under this particular $\Psi(x) = \sum_{i=1}^K f(x_i)$, we have
\begin{itemize}
    \item $\nabla^2\Psi^*(\cdot)$ is diagonal,
    \item The diagonal elements of $\nabla^2\Psi^*(\cdot)$ are non-decreasing on the line segment $[\nabla\Psi(x_t) - \frac{1}{\sigma_t}\tilde{l}_t, \nabla\Psi(x_t)]$,
\end{itemize}
and we can further upper bound \Cref{eq-immediate-cost-matrix-norm} by $\frac{1}{2\sigma_t} \Vert \tilde{l}_t \Vert_{\nabla^2\Psi^*(\nabla\Psi(x_t))}^2 = \frac{1}{2\sigma_t} \Vert \tilde{l}_t \Vert_{\nabla^2\Psi(x_t)^{-1}}^2$. Therefore,
\begin{align*}
    \E[\sigma_t D_\Psi(x_t, \tilde{z}_t) \mid \mathcal{F}_{t-1}] & \le \E[\frac{1}{2\sigma_t} \Vert \tilde{l}_t \Vert_{\nabla^2\Psi(x_t)^{-1}}^2 \mid \mathcal{F}_{t-1}] \\
    & = \frac{1}{2\sigma_t} \sum_{i=1}^K x_{t,i}^{\alpha}\E[\tilde{l}_{t,i}^2 \mid \mathcal{F}_{t-1}] \\
    & = \frac{1}{2\sigma_t} \sum_{i=1}^K x_{t,i}^{\alpha}\E\left[\frac{l_{t,A_t}^2\mathbbm{1}[A_t = i]}{x_{t,i}^2} \Bigm\vert \mathcal{F}_{t-1}\right] \\
    & = \frac{1}{2\sigma_t} \sum_{i=1}^K x_{t,i}^{\alpha - 1} \\
    & \le \frac{K}{2\sigma_t}.
\end{align*}
\end{proof}

\subsubsection{Proof of \Cref{lemma-banker-convex-comb}}
We give a formal proof for the ``convex combination in the dual space'' lemma of \texttt{Banker-OMD} (\Cref{lemma-banker-convex-comb}) here.

\begin{proof}[Proof of \Cref{lemma-banker-convex-comb}]
Let $\tilde{x} = \nabla\Psi^*(\sum_{i=1}^h \frac{\sigma_i}{\sigma}\nabla\Psi(z_i))$, we have
\begin{align*}
    \sigma D_\Psi(y, x) & \stackrel{(a)}{\le} D_\Psi(y, \tilde{x}) \\
     & \stackrel{(b)}{=} \sigma D_{\Psi^*}(\nabla\Psi(\tilde{x}), \nabla\Psi(y)) \\
     & = \sigma D_{\Psi^*}(\sum_{i=1}^h \frac{\sigma_i}{\sigma}\nabla\Psi(z_i), \nabla\Psi(y)) \\
     & \stackrel{(c)}{\le} \sigma \cdot \sum_{i=1}^h \frac{\sigma_i}{\sigma} D_{\Psi^*}(\nabla\Psi(z_i), \nabla\Psi(y)) \\
     & \stackrel{(d)}{=} \sum_{i=1}^h \sigma_i D_\Psi(y, z_i)
\end{align*}
where $(a)$ is due to the Pythagorean theorem for Bregman divergences ($D_\Psi(y, \tilde{x}) = D_\Psi(y, x) + D_\Psi(x, \tilde{x}) \ge D_\Psi(y, x)$), $(b)$ is due to the duality property of Bregman divergences, $(c)$ is due to the convexity of the first argument of Bregman divergences, and $(d)$ uses again the duality property.
\end{proof}

\subsubsection{Proof of \Cref{thm-banker-omd}}
It is then easy to see that the general regret bound for \texttt{Banker-OMD} (\Cref{thm-banker-omd}) is just a summation over the single-step regret bounds in \Cref{lemma-banker-convex-comb}. A formal proof is presented here.

\begin{proof}[Proof of \Cref{thm-banker-omd}]
When $T=0$ this bound trivially holds. Suppose inequality \Cref{eq-thm-banker-omd} holds for all $T \le M$, then the difference of \Cref{eq-thm-banker-omd}'s RHS between $T = M+1$ and $T = M$ is at least
\begin{align*}
    & \quad (B_{T+1} - B_T)D_\Psi(y, x_0) + \sigma_{T+1}D_\Psi(x_{T+1}, \tilde{z}_{T+1}) - \sum_{i=1}^T \triangle v_i D_\Psi(y, z_i) - \sigma_{T+1}D_\Psi(y, z_{T+1}) \\
    & = (b_{T+1}D_\Psi(y, x_0) + \sum_{i=1}^T \sigma_{T+1,i} D_\Psi(y, z_i)) - \sigma_{T+1}D_\Psi(y, z_{T+1}) + \sigma_{T+1}D_\Psi(x_{T+1}, \tilde{z}_{T+1}) \\
    & \stackrel{(a)}{\ge} \sigma_{T+1}D_\Psi(y, x_{T+1}) - \sigma_{T+1}D_\Psi(y, z_{T+1}) + \sigma_{T+1}D_\Psi(x_{T+1}, \tilde{z}_{T+1}) \\
    & \stackrel{(b)}{\ge} \langle \tilde{l}_{T+1}, x_{T+1} - y \rangle,
\end{align*}
which is the difference of \Cref{eq-thm-banker-omd}'s LHS between $T = M+1$ and $T = M$. Here we use $\triangle v_i$ to denote the change of the variable $v_i$ in \Cref{banker-omd} from the end of time $T$ to the end of time $T+1$, $(a)$ is due to \Cref{lemma-banker-convex-comb}, $(b)$ is due to \Cref{lemma-omd-basic}. Thus by induction, we are done.
\end{proof}

\subsubsection{Proof of \Cref{lemma-banker-omd-B}}
\begin{proof}[Proof of \Cref{lemma-banker-omd-B}]
\Cref{lemma-banker-omd-B} is a special case of the following fact: at the end of any time $T$, $B_T = \sum_{i=1}^T v_i$. This fact can be easily verified by an induction on $T$. If \texttt{GreedyPick} is used, when we encounter a new round $T$ and observe $b_T > 0$, it is guaranteed that at that moment, any non-zero $v_i$ is corresponding to an action whose feedback is still on the way (including the action $A_{T}$ we are going to play).
\end{proof}

\subsubsection{Proof of \Cref{thm-banker-omd-B}}
The general method for $\O((\sqrt{T} + \sqrt{D\log D})f(T))$ regret bounds (\Cref{thm-banker-omd-B}) can be proved following the same proof sketch we presented in the text to deal with $\sigma_t = (\frac{1}{\sqrt{t}} + \frac{\mathfrak{d}_t}{\sqrt{\mathfrak{D}_t}})^{-1}$. To be specific,
\begin{proof}[Proof of \Cref{thm-banker-omd-B}]
When \texttt{Banker-OMD} uses \texttt{GreedyPick} and action scales $\sigma_t = \left (\frac{1}{\sqrt{t}} + \mathfrak{d}_t\sqrt{\frac{\ln( \mathfrak{D}_t+1)}{\mathfrak{D}_t}}\right )^{-1}$, we have
\begin{align*}
    \sum_{t=1}^T \sigma_t^{-1} & = \sum_{t=1}^T \left (\frac{1}{\sqrt{t}} + \mathfrak{d}_t\sqrt{\frac{\ln( \mathfrak{D}_t+1)}{\mathfrak{D}_t}}\right ) \\
    & \le \sum_{t=1}^T \left (\frac{1}{\sqrt{t}} + \sqrt{\ln(D + 1)}\frac{\mathfrak{d}_t}{\sqrt{\mathfrak{D}_t}}\right ) \\
    & = \O(\sqrt{T} + \sqrt{D\log D})
\end{align*}
where in the last step, we use the fact that $\mathfrak D_t$ is a cumulative sum of $\mathfrak d_1\ldots, \mathfrak d_t$ and apply the summation lemma \Cref{lem:summation lemma}.

In order to bound $B_T$, consider the last moment $T_0$ at which $B_t$ gets increased, and suppose at the beginning of $T_0$ there are $m=\mathfrak{d}_{T_0}$ feedback on the way, each corresponding to actions at time $t_1 < \cdots < t_m$ respectively. Then it is easy to see that $\mathfrak{d}_{t_i} \ge i-1$ for any $1\le i\le m$, for at the beginning of time $t_i$, actions invoked at time $t_1,\ldots,t_{i-1}$ are still on the way. Furthermore, we have $D \ge \mathfrak{d}_{T_0} + \sum_{i=1}^m \mathfrak{d}_{t_i} = \binom{m+1}{2}$ hence $m = \O(\sqrt{D})$.

$\sigma_t = \left (\frac{1}{\sqrt{t}} + \mathfrak{d}_t\sqrt{\frac{\ln( \mathfrak{D}_t+1)}{\mathfrak{D}_t}}\right )^{-1}$ guarantees that $\sigma_t \le \sqrt{t}$ and $\sigma_t \le \frac{1}{\mathfrak{d}_t} \sqrt{\frac{\mathfrak{D}_t}{\ln \mathfrak{D}_t + 1}} = \O\left (\sqrt{\frac{D}{\log D}}\right )\frac{1}{\mathfrak{d}_t}$, where the last step is due to the monotonicity of $\sqrt{\frac{x}{\ln x + 1}}$ when $x$ is sufficiently large. According to \Cref{lemma-banker-omd-B}, we have 
\begin{align*}
    B_T & = \sigma_{T_0} + \sum_{i=1}^m \sigma_{t_i} \\
        & \le \O\left (\sqrt{\frac{D}{\log D}}\right )\frac{1}{\mathfrak{d}_{T_0}} + \sqrt{t_1} + \sum_{i=2}^m \O\left (\sqrt{\frac{D}{\log D}}\right )\frac{1}{\mathfrak{d}_{t_i}} \\
        & \le \sqrt{T} + \O\left (\sqrt{\frac{D}{\log D}}\right )\left (\sum_{i=2}^m \frac{1}{i-1} + \frac{1}{m}\right ) \\
        & = \O(\sqrt{T} + \sqrt{D \log D}).
\end{align*}
\end{proof}

% \newpage
\section{Technical Details for \texttt{Banker-SFTINF}}\label{sec:proof banker-sftinf}

By plugging \Cref{thm-banker-omd} into \Cref{eq-sftinf-decomposition}, we get the following regret decomposition for \Cref{banker-sftinf}:

\begin{lemma}
    \Cref{banker-sftinf} guarantees
    \begin{equation}
        \mathfrak R_T \le \underbrace{ B_T \sup_{y \in \triangle^{[K]}}D_\Psi(y, x_0))}_{\text{Total investment}} + \underbrace{\sum_{t=1}^T \E[\sigma_t D_\Psi(x_t, \tilde z_t)] }_{\text{Immediate costs}} + \underbrace{\sum_{t=1}^T \E[\mathbbm 1_{\mathcal E_t} l_{t,A_t}]}_{\text{Skipping error}}
        \label{eq-sbtinf-regret-decomposition-apdx}
    \end{equation}
    where $\mathcal E_t$ denotes the event that the feedback of round $t$'s action is marked to ``skipped'', $\sigma_t$, $B_t$, $x_t$, $\tilde z_t$ are variables determined by \Cref{banker-sftinf}.
\end{lemma}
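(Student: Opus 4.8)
The plan is to obtain \Cref{eq-sbtinf-regret-decomposition-apdx} by simply feeding the regret decomposition \Cref{eq-sftinf-decomposition} into the general \texttt{Banker-OMD} bound \Cref{thm-banker-omd}. Recall that \Cref{eq-sftinf-decomposition} already reduces the pseudo-regret to
\[
\mathfrak R_T \le \sum_{t=1}^T \E[\langle \tilde l_t, x_t - \mathbf 1_{i^*}\rangle] + \sum_{t=1}^T \E[\mathbbm 1_{\mathcal E_t}\, l_{t,A_t}],
\]
where $i^*$ is a fixed best arm in hindsight (deterministic, since the losses are oblivious) and the first sum is exactly the ``\texttt{Banker-OMD} controlled regret.'' So it remains to bound $\sum_{t=1}^T\langle \tilde l_t, x_t-\mathbf 1_{i^*}\rangle$ on every sample path.

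First I would observe that the sequence $(\tilde l_t)_{t=1}^T$ produced by \Cref{banker-sftinf} --- with $\tilde l_s=\mathbf 0$ on every skipped round and $\tilde l_s=\frac{l_{s,A_s}}{x_{s,A_s}}\mathbf 1_{A_s}$ otherwise --- together with the action scales of \Cref{eq:sigma in SFTINF}, is precisely the input fed to an instance of the \texttt{Banker-OMD} framework of \Cref{banker-omd} with regularizer $\Psi(x)=-2\sum_i\sqrt{x_i}$ and default investment $x_0=\nicefrac{\mathbf 1}{K}$. The two ``modifications'' in \texttt{Banker-SFTINF} are harmless here: the choice of $\sigma_t$ is allowed because $S$ is an arbitrary user-specified subroutine in \Cref{banker-omd}, and a skipped round $s$ just means $\tilde l_s=\mathbf 0$, for which the single-step inequality \Cref{lemma-omd-basic} holds trivially (then $z_s=\tilde z_s=x_s$ and all three Bregman terms vanish). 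Hence the induction in the proof of \Cref{thm-banker-omd} goes through verbatim and gives, taking $y=\mathbf 1_{i^*}$,
\[
\sum_{t=1}^T \langle \tilde l_t, x_t - \mathbf 1_{i^*}\rangle \le B_T\, D_\Psi(\mathbf 1_{i^*}, x_0) + \sum_{t=1}^T \sigma_t D_\Psi(x_t, \tilde z_t) - \sum_{t=1}^T v_t D_\Psi(\mathbf 1_{i^*}, z_t),
\]
with $B_T$, $v_t$, $x_t$, $\tilde z_t$ the variables of \Cref{banker-sftinf}.

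From here the remaining steps are routine: each $v_t\ge 0$ and every Bregman divergence is non-negative, so the remaining-savings term may be dropped; then $D_\Psi(\mathbf 1_{i^*},x_0)\le\sup_{y\in\triangle^{[K]}}D_\Psi(y,x_0)$ and $B_T\ge 0$ give $B_T D_\Psi(\mathbf 1_{i^*},x_0)\le B_T\sup_{y\in\triangle^{[K]}}D_\Psi(y,x_0)$; finally take expectations and substitute back into \Cref{eq-sftinf-decomposition}. The one genuinely delicate point --- and where I would be most careful --- is justifying that \Cref{thm-banker-omd} applies unchanged to the modified algorithm: one must check that \Cref{banker-sftinf} still maintains the bookkeeping invariants \Cref{eq:conditions of b and m} (it does, as Lines 5--10 of \Cref{banker-omd} are invoked verbatim), that $z_s$ is well defined on skipped rounds (it is, equal to $x_s$), and that the doubling updates of $\hat L_t$ only change the numerical values of $\sigma_t$ and of the non-skipped $\tilde l_t$'s, quantities that \Cref{thm-banker-omd} already treats as arbitrary inputs. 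Once this is in place, the lemma follows.
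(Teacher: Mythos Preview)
Your proposal is correct and follows exactly the approach the paper itself takes: the paper's proof of this lemma is the single sentence ``By plugging \Cref{thm-banker-omd} into \Cref{eq-sftinf-decomposition},'' and you have spelled that out in full, including the justification (which the paper leaves implicit) that the skipping and doubling modifications in \Cref{banker-sftinf} do not interfere with the applicability of \Cref{thm-banker-omd}. Your extra care in checking the bookkeeping invariants and the well-definedness of $z_s$ on skipped rounds is more than the paper provides but entirely appropriate.
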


Below, we will bound the three terms in \Cref{eq-sbtinf-regret-decomposition-apdx}  within $\O(L\sqrt{K(D+T)\log(D+T)})$ one by one. 

\subsection{Bound for Total Investment Term}

For the total investment term $B_T \sup_{y \in \triangle^{[K]}}D_\Psi(y, x_0))$, recall that we choose $\Psi$ to be the $\nicefrac 12$-Tsallis entropy function, which guarantees that $\sup_{y \in \triangle^{[K]}}D_\Psi(y, x_0) \le 2\sqrt K$. Thus it suffices to bound $B_T$. Applying \Cref{lemma-banker-omd-B}, suppose the last round on which we make a new investment is $T_0$ and that time, there are $m$ previous feedback corresponding to rounds $T_1,\ldots, T_m$ still on the way, then we have
\begin{align*}
    B_T & = \sum_{i=0}^m \sigma_{T_i} \\
    & = \sum_{i=0}^m \frac 1 {\mathfrak d_{T_i} + 1} \sqrt \frac {3 + D_{T_i}} {\ln(3 + D_{T_i} / \hat L_{T_i}^2)} \\
    & \le \left(\sum_{i=0}^m \frac 1 {\mathfrak d_{T_i} + 1}\right) \cdot \max_{1\le t\le T} \sqrt \frac {3 + D_t} {\ln(3 + D_t / \hat L_t^2)} \\
    & \le \left(\sum_{i=0}^m \frac 1 {i + 1}\right) \cdot \max_{1\le t\le T} \hat L_t \sqrt \frac {3 + D_t / \hat L_t^2} {\ln(3 + D_t / \hat L_t^2)} \\
    & \stackrel{(a)}{\le} \O(\log m) \cdot \O\left( L \sqrt \frac {D+T} {\ln (D+T)}\right) \\
    & \le \O(\log D) \cdot \O\left( L \sqrt \frac {D+T} {\ln (D+T)}\right) \\
    & \le \O\left( L \sqrt {(D+T)\log D}\right)
\end{align*}
where in step $(a)$ we simply control the $\hat L_t$ factor out of the square root by $L$, and utilize the fact that $D_t / \hat L_t \le t + \sum_{s=1}^t \mathfrak d_t$ for all $t$ and the function $(3+x) / \ln (3 + x)$ is increasing. Thus we have show the total investment is within $\O\left( L \sqrt {K(D+T)\log D}\right)$. 

\subsection{Bound for Immediate Costs}

To deal with the immediate costs, we first show each immediate cost term $E[\sigma_t D_\Psi(x_t, \tilde z_t)]$ can be bounded very similarly compared to the $[0,1]$-bounded loss case.

\begin{lemma}
\label{lemma-apdx-sftinf-single-immediate-cost}
    Let $\Psi$ be the $\nicefrac 12$-Tsallis entropy function, $x_t \in \triangle^{[K]}$, $A_t$ be an independent sample from $[K]$ according to $x_t$, $\hat l_t \in \mathbb R_+^{K}$, $\tilde l_t = \frac {\hat l_{t,A_t}}{x_{t,A_t}}\mathbf 1_{A_t}$. Let $\tilde z_t \triangleq \nabla \Psi^*(\nabla\Psi(x_t) - \frac{\tilde{l}_t}{\sigma_t})$ then we have
    \begin{equation*}
        \sigma_t D_\Psi(x_t, \tilde z_t) \le \frac{\hat l_{t,A_t}^2 x_{t,A_t}^{-\nicefrac 1 2}}{\sigma_t}
    \end{equation*}
    and
    \begin{equation*}
        \E[\sigma_t D_\Psi(x_t, \tilde z_t)] \le \sqrt K \frac {\lVert \hat l_t\rVert_\infty^2}{\sigma_t}.
    \end{equation*}
\end{lemma}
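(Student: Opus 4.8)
The plan is to follow the proof of \Cref{lemma-mab-immediate} almost verbatim, but this time retaining the dependence on the actual loss magnitude $\hat l_{t,A_t}$ instead of bounding it by $1$, and specializing every generic computation to the $\nicefrac 12$-Tsallis entropy $\Psi(x)=-2\sum_{i=1}^K\sqrt{x_i}$. First I would pass to the dual: $D_\Psi(x_t,\tilde z_t)=D_{\Psi^*}(\nabla\Psi(\tilde z_t),\nabla\Psi(x_t))=D_{\Psi^*}(\nabla\Psi(x_t)-\sigma_t^{-1}\tilde l_t,\nabla\Psi(x_t))$, and expand the latter with a second-order Lagrange remainder exactly as in \Cref{eq-immediate-cost-matrix-norm}, obtaining $\sigma_t D_\Psi(x_t,\tilde z_t)=\tfrac{1}{2\sigma_t}\lVert\tilde l_t\rVert_{\nabla^2\Psi^*(\theta_t)}^2$ for some $\theta_t$ on the segment joining $\nabla\Psi(x_t)-\sigma_t^{-1}\tilde l_t$ and $\nabla\Psi(x_t)$. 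Since $\tilde l_t=\tfrac{\hat l_{t,A_t}}{x_{t,A_t}}\mathbf 1_{A_t}$ is supported on the single coordinate $A_t$ and $\nabla^2\Psi^*$ is diagonal, only the $(A_t,A_t)$ entry of $\nabla^2\Psi^*(\theta_t)$ enters the bound.

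Next I would compute the conjugate explicitly: $\nabla\Psi(x)_i=-x_i^{-1/2}$, $\Psi^*(y)=-\sum_i y_i^{-1}$ on $y<\mathbf 0$, hence $\nabla\Psi^*(y)_i=y_i^{-2}$ and $[\nabla^2\Psi^*(y)]_{ii}=-2y_i^{-3}>0$, which is monotonically increasing in $y_i$ (for $y_i<0$). Because the losses here are non-negative — \texttt{Banker-SFTINF} is stated for non-negative losses — we have $\nabla\Psi(x_t)-\sigma_t^{-1}\tilde l_t\le\nabla\Psi(x_t)$ coordinatewise, so monotonicity gives $[\nabla^2\Psi^*(\theta_t)]_{A_t,A_t}\le[\nabla^2\Psi^*(\nabla\Psi(x_t))]_{A_t,A_t}=[\nabla^2\Psi(x_t)^{-1}]_{A_t,A_t}=2x_{t,A_t}^{3/2}$. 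Substituting and using $\tilde l_{t,A_t}^2=\hat l_{t,A_t}^2 x_{t,A_t}^{-2}$ yields $\sigma_t D_\Psi(x_t,\tilde z_t)\le \hat l_{t,A_t}^2 x_{t,A_t}^{3/2}x_{t,A_t}^{-2}/\sigma_t=\hat l_{t,A_t}^2 x_{t,A_t}^{-1/2}/\sigma_t$, which is the first claim. For the second claim I would take the expectation over $A_t\sim x_t$: $\E[\hat l_{t,A_t}^2 x_{t,A_t}^{-1/2}]=\sum_{i=1}^K x_{t,i}\,\hat l_{t,i}^2 x_{t,i}^{-1/2}=\sum_{i=1}^K\hat l_{t,i}^2\sqrt{x_{t,i}}\le\lVert\hat l_t\rVert_\infty^2\sum_{i=1}^K\sqrt{x_{t,i}}\le\sqrt K\,\lVert\hat l_t\rVert_\infty^2$, the last step by Cauchy--Schwarz since $\sum_i x_{t,i}=1$.

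I do not expect a genuine obstacle here; the only place needing care is the monotonicity step for the diagonal Hessian entry along the segment $[\nabla\Psi(x_t)-\sigma_t^{-1}\tilde l_t,\nabla\Psi(x_t)]$, which relies on $\hat l_t\ge\mathbf 0$ so that $\theta_t$ never moves in the direction that would inflate $[\nabla^2\Psi^*]_{A_t,A_t}$ past $2x_{t,A_t}^{3/2}$. Everything else is routine manipulation of Legendre conjugates and a one-line Cauchy--Schwarz bound.
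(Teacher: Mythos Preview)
Your proposal is correct and mirrors the paper's own proof essentially line for line: the same duality step $D_\Psi(x_t,\tilde z_t)=D_{\Psi^*}(\nabla\Psi(x_t)-\sigma_t^{-1}\tilde l_t,\nabla\Psi(x_t))$, the same second-order Lagrange remainder, the same monotonicity argument for the diagonal entry $-2\theta^{-3}$ of $\nabla^2\Psi^*$ (valid because $\tilde l_t\ge\mathbf 0$), and the same Cauchy--Schwarz bound $\sum_i\sqrt{x_{t,i}}\le\sqrt K$ for the expectation. There is nothing to add.
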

\begin{proof}
    If suffices to follow the calculation we used in 
the proof of \Cref{lemma-mab-immediate}:
\begin{align}
    \sigma_t D_\Psi(x_t, \tilde{z}_t) & \stackrel{(a)}{=} \sigma_t D_{\Psi^*}(\nabla\Psi(\tilde{z}_t), \nabla\Psi(x_t)) \nonumber \\
    & = \sigma_t D_{\Psi^*}(\nabla\Psi(x_t) - \frac{\tilde{l}_t}{\sigma_t}, \nabla\Psi(x_t)) \nonumber \\
    & = \Psi^*(\nabla\Psi(x_t) - \frac{\tilde{l}_t}{\sigma_t}) - \Psi^*(\nabla\Psi(x_t)) - \langle x_t,  - \frac{\tilde{l}_t}{\sigma_t}\rangle \nonumber \\
    & \stackrel{(b)}{=} \frac{\Vert \tilde{l}_t \Vert_{\nabla^2\Psi^*(\theta_t)}^2}{2\sigma_t} \nonumber
\end{align}
where $(a)$ is due to the duality of Bregman divergences, and $(b)$ regards the Bregman divergence as a second order Lagrange remainder, $\theta_t$ is some element inside the line segment connecting $\nabla\Psi(x_t) - \frac{1}{\sigma_t}\tilde{l}_t$ and $\nabla\Psi(x_t)$. 

Here our $\Psi$ is the $\nicefrac 12$-Tsallis entropy function, then $\nabla^2\Psi^*(\cdot)$ is diagonal. We can verify that $\Psi_{ii}^{*\prime\prime}(\theta) = \Psi_{ii}^{\prime\prime}(\Psi_i^{\prime -1}(\theta))^{-1} = 2((-\theta)^{-2})^{\nicefrac 3 2} = -2\theta^{-3}$ is increasing for all $i$ and $\theta < 0$. Also notice that all components of $\tilde l$ are non-negative, hence we have $\nabla^2\Psi^*(\theta_t) \preceq \nabla^2\Psi^*(\nabla \Psi(x_t)) = \nabla^2\Psi(x_t)^{-1}$, and therefore
\begin{equation*}
    \sigma_t D_\Psi(x_t, \tilde{z}_t) \le \frac{\Vert \tilde{l}_t \Vert_{\nabla^2\Psi(x_t)^{-1}}^2}{2\sigma_t} = \frac{\tilde l_{t,A_t}^2 x_{t,A_t}^{\nicefrac 3 2}}{\sigma_t} = \frac{\hat l_{t,A_t}^2 x_{t,A_t}^{-\nicefrac 1 2}}{\sigma_t} \le \frac{\lVert \hat l_t \rVert_\infty^2 x_{t,A_t}^{-\nicefrac 1 2}}{\sigma_t}.
\end{equation*}
Taking expectation on both sides of the inequality, we can further get
\begin{equation*}
    \E[\sigma_t D_\Psi(x_t, \tilde{z}_t)] \le \frac{\lVert \hat l_t \rVert_\infty^2}{\sigma_t} \sum_{i=1}^K x_{t,i}^{\nicefrac 1 2} \le \sqrt K \frac{\lVert \hat l_t \rVert_\infty^2}{\sigma_t}.
\end{equation*}
\end{proof}

For the immediate costs term, the \textit{after-skipping} estimator $\tilde l_t$ used in \texttt{Banker-SFTINF} can be regarded as an importance sampling estimate for $\hat l_t \triangleq l_t \cdot \mathbbm 1_{\{l_{t,A_t} \le \hat L_t \}}$, which is a vector in $\mathbb R_+^K$, thus \Cref{lemma-apdx-sftinf-single-immediate-cost} applies, giving that
\begin{equation*}
    \E[\sigma_t D_\Psi(x_t, \tilde z_t) \mid \mathcal F_{t-1}] \le \frac{\E[\hat l_{t,A_t}^2 x_{t,A_t}^{-\nicefrac 1 2 }]}{\sigma_t} \le \frac {\hat L_t^2 \E[x_{t,A_t}^{-\nicefrac 1 2}]}{\sigma_t} \le \sqrt K \frac{\hat L_t^2}{\sigma_t}.
\end{equation*}
Therefore, it remains to bound $\sum_{t=1}^T \E[\frac {\hat L_t^2} {\sigma_t}]$.
\begin{comment}
We define a quantity $\tilde D_t$ similar to $D_t$, but only contains rounds that have not been skipped at the end:
\begin{equation*}
    \tilde D_t \triangleq \sum_{s\in \{1,\ldots,t\}:\{l_{s,A_s} \le \hat L_s \}} (\mathfrak d_s + 1)\hat L_s^2.
\end{equation*}
Compared to $D_t$, $\tilde D_t$ does not take rounds that were not skipped at the beginning of time $t$, but are skipped some time later, therefore $\tilde D_t \le D_t$. We further define
\begin{equation*}
    \tilde \sigma_t=\left ((\mathfrak d_t+1)\sqrt{\frac{\ln(3+\tilde D_t/\hat L_t^2)}{3+\tilde D_t}}\right )^{-1},
\end{equation*}
which is basically $\sigma_t$ with $D_t$ replaced by $\tilde D_t$. Then $\sigma_t \ge \tilde \sigma_t$, again by the monotonicity of $(3+x)/\ln(3+x)$.
\end{comment}
We can write
\begin{align*}
    \sum_{t=1}^T \frac{\hat L_t^2}{\sigma_t} %& \le \sum_{t=1}^T \frac{\hat L_t^2}{\tilde \sigma_t} \\
    & = \sum_{t=1}^T \frac{(\mathfrak d_t + 1)\hat L_t^2}{\sqrt{3 + D_t}} \sqrt{\ln(3 + D_t / \hat L_t^2)} \\
    & \stackrel{(a)}{\le}\O(\sqrt {\log T}) \sum_{t=1}^T \frac{(\mathfrak d_t + 1)\hat L_t^2}{\sqrt{3 + D_t}} \\
    & \stackrel{(b)}{\le} \O(\sqrt {\log T}) \cdot \O(\sqrt{1 + D_T}) \\
    & \le  \O(L\sqrt{(D + T)\log T}),
\end{align*}
where in step $(a)$, we make use of the fact that $D_t/\hat L_t^2 \le t + \sum_{s=1}^t \mathfrak d_s$ for all $1 \le t \le T$ to bound $\sqrt{\ln(3 + D_t / \hat L_t^2)}$ universally for all $t$ by $\sqrt{\ln(3 + D + T)} = \O(\log T)$; in step $(b)$, recall that $D_t$ is just the sum of all $(\mathfrak d_s + 1)\hat L_s^2$ for $s=1,2,\ldots,t$, thus we can apply the summation lemma \Cref{lem:summation lemma}. Thus we have verified that the immediate costs are within $\O(L\sqrt{K(D+T)\log T})$.

\subsection{Bound for Skipping Error}
\label{sec-sftinf-skipping-error-apdx}

It remains to bound the skipping error term $\sum \E[\mathbbm 1_{\mathcal E_t}l_{t,A_t}]$. For simplicity, denote $l_{t,A_t}$ by $\hat l_t$, also let $\mathcal T \triangleq \{t\in [T]: \neg \mathcal E_t\}$ denote the set of all skipped time indices. We begin with the following important observation:
\begin{lemma}
\label{lemma-skipped-steps}
For any indices $s,t\in \mathcal T$, if $\hat l_t < 2\hat l_s$, we have $s+d_s \ge t$, i.e., the feedback of action $A_s$ arrives later than the decision $A_t$ being made.
\end{lemma}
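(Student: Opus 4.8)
The plan is to exploit two structural facts about the running estimate $\hat L$ maintained by \texttt{Banker-SFTINF}: it is non-decreasing in its index, and it is pushed past $2l_{s,A_s}$ the moment round $s$'s feedback is processed. First I would dispose of the easy case $s\ge t$: since delays are non-negative, $s+d_s\ge s\ge t$ there, so the claim is trivial, and from now on one may assume $s<t$.

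Next I would record the two facts precisely. Monotonicity: at every round $r$ the algorithm first sets $\hat L_{r+1}\leftarrow\hat L_r$ and only ever enlarges it afterwards via the $\max$ with $2l_{\cdot,A_\cdot}$ in \Cref{line-sftinf-maintain-hat-L}; hence $\hat L_1\le\hat L_2\le\cdots$. Doubling: the feedback of round $s$ arrives at the end of round $s+d_s$, and processing it forces $\hat L_{s+d_s+1}\ge 2l_{s,A_s}=2\hat l_s$ through \Cref{line-sftinf-maintain-hat-L} (this happens whether or not round $s$ ends up being skipped).

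Then I would argue by contradiction: suppose $s+d_s<t$, i.e.\ $s+d_s+1\le t$. Combining the two facts, $\hat L_t\ge\hat L_{s+d_s+1}\ge 2\hat l_s$. But $t\in\mathcal T$ means round $t$'s feedback was marked ``skipped'' in \Cref{line-sftinf-skip}, which by the guard condition there happens exactly when $l_{t,A_t}>\hat L_t$, i.e.\ $\hat l_t>\hat L_t$. Hence $\hat l_t>\hat L_t\ge 2\hat l_s$, contradicting the hypothesis $\hat l_t<2\hat l_s$. Therefore $s+d_s\ge t$.

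The only delicate point — and what I would flag as the main obstacle — is purely the index bookkeeping: one must be sure that when $s+d_s<t$ the doubling triggered by round $s$'s feedback has already been applied before $\hat L_t$ is read at the start of round $t$, which is exactly why the argument is phrased through $\hat L_{s+d_s+1}$ and leans on the ``$\hat L_{r+1}\leftarrow\hat L_r$'' initialization order in the algorithm. It is also worth noting that the hypothesis $l_{s,A_s}>\hat L_s$ supplied by $s\in\mathcal T$ is never used — only $t\in\mathcal T$ enters the argument.
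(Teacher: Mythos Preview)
Your argument is correct and follows essentially the same route as the paper: assume $s+d_s<t$, use the doubling update to get $\hat L_{s+d_s+1}\ge 2\hat l_s$, push this forward by monotonicity to $\hat L_t$, and contradict the skipping condition $\hat l_t>\hat L_t$ at round $t$. Your version is simply more explicit about the index bookkeeping and the trivial $s\ge t$ case, and your side remark that only the membership $t\in\mathcal T$ is actually needed is accurate.
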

\begin{proof}
Otherwise $\hat l_s$ would be used to update $\hat L_{s+d_s + 1}$, thus $\hat L_{s+d_s + 1} \ge 2\hat l_s$, but $t \ge s + d_s + 1$ so we should have $\hat L_t \ge \hat L_{s+d_s + 1} \ge 2\hat l_s$, a contradiction. 
\end{proof}

Suppose $\lvert \mathcal T \rvert = m$ and denote by $t_1, t_2, \cdots, t_m$ the elements of $\mathcal T$, which are sorted in the \textit{arrival order} of their feedback. Consider partitioning $\mathcal T$ into non-empty subsets $\mathcal T_1 \triangleq \{t_1,t_2,\ldots, t_{m_1}\}, \mathcal T_2 \triangleq\{t_{m_1 + 1},t_{m_1 + 2},\ldots,t_{m_2}\}, \cdots, \mathcal T_m \triangleq\{t_{m_{k-1}+1},\ldots,t_{m_k}\}$ (specially, we denote $m_0=0$, $m_k=m$), with the following two properties hold:
\begin{itemize}
    \item (intra-subset loss upper bound) for each $0\le i < k$, we have $\hat l_{t_j} < 2\hat l_{t_{m_i + 1}}$ for all $m_i< j \le m_{i+1}$;
    \item (inter-subset loss lower-bound) for each $0\le i < k-1$, we have $\hat l_{m_{i+1}+1} \ge 2\hat l_{m_i + 1}$.
\end{itemize}
Such partition does exist and can be found by a greedy scan over $\mathcal T$. The intra-subset and inter-subset loss requirments allow us to conclude that the number of subsets $k$ is  $\O(\log L)$. Besides, for each subset $\mathcal T_i$, Lemma~\ref{lemma-skipped-steps} states that all time indices in this subset are no later than the arrival of the feedback of time $t_{m_i + 1}$, namely $t_{m_i + 1} + d_{t_{m_i + 1}}$. Therefore, $\lvert\mathcal T_i\rvert \le d_{t_{m_i + 1}} + 1 \le \mathcal O(\sqrt{D})$.

After making these observations, one can bound the total losses incurred by rounds in each $\mathcal T_i$ by
\begin{align*}
    \sum_{t \in \mathcal T_i} \hat l_t & \stackrel{(a)}{\le} 2\lvert\mathcal T_i\rvert \hat l_{t_{m_i + 1}}
    \\
    & \stackrel{(b)}{\le} 4\cdot 2^{-(k-i)}\lvert\mathcal T_i\rvert \hat l_{t_{m_{k-1} + 1}} \\
    & \stackrel{(c)}{\le} 4\cdot 2^{-(k-i)} \cdot \mathcal O(\sqrt{D}) \cdot L \\
    & \le \mathcal O(2^{-(k-i)}\sqrt{D} L)
\end{align*}
where in step $(a)$ we make use of the intra-subset loss relationship in $\mathcal T_i$, in step $(b)$ we apply intra-subset loss relationship for $k-i-2$ times to control $\hat l_{t_{m_i + 1}}$ by $\hat l_{t_{m_{k-1} + 1}}$, in step $(c)$ we apply $\lvert\mathcal T_i\rvert \le \mathcal O(\sqrt{D})$.

Thus the total loss incurred by $\mathcal T$ , i.e., the skipping error, is bounded by $\mathcal O(\sqrt{D} L)$.

% \newpage
\section{Technical Details for \texttt{Banker-SFLBINF}}\label{sec:proof banker-sflbinf}
In this section, we introduce our algorithm \texttt{Banker-SFLBINF} and also analyze its performance.
\subsection{Algorithm Design}\label{sec:design of banker-sflbinf}

%\begin{minipage}{\textwidth}
%\renewcommand*\footnoterule{}
%\begin{savenotes}
\begin{algorithm}[htb]
\caption{\texttt{Banker-SFLBINF} for Delayed Scale-Free Adversarial MAB}
\label{banker-sflbinf}
\begin{algorithmic}[1]
% \KwIn{Number of arms $K$}
% \KwOut{A sequence of actions $A_1, A_2,\ldots \in [K]$}
\STATE Initialize $\hat L_1 = 1$, {\color{blue}$\mathfrak{D}_0 \gets 1$}
\STATE Run \texttt{Banker-OMD} with $x_0 = \nicefrac {\mathbf 1} K$, {\color{blue}the log-barrier regularizer $\Psi(x) = -\sum_{i=1}^K \ln x_i$} and \texttt{GreedyPick} scheduler, with some additional processing at each round as below.
\FOR{$t=1,2,\ldots$}
    \STATE $a_t \leftarrow \mathrm{missing}$, $\mathfrak d_t\gets \sum_{s=1}^{t-1}\mathbbm 1_{\{a_s=\mathrm{missing}\}}$.
    \STATE $\color{blue}\mathfrak D_t \gets \mathfrak D_{t-1} + \mathfrak d_t$. \COMMENT{Also maintain ordinary experienced total delays}
    \STATE $D_t\gets \color{blue} \sum_{s\in \{1,\ldots,t\}:a_s = \mathrm{missing}} (\mathfrak d_s + 1)\hat L_s^2 + \sum_{s\in \{1,\ldots,t\}:a_s = \mathrm{arrived}} (\mathfrak d_s + 1) l_{s,A_s}^2$. \alglinelabel{line-alg2-D}
    \STATE $\sigma_t=\left ((\mathfrak d_t+1)\sqrt\frac{\ln(3 + D_t/\hat L_t^2)}{{3+D_t}} {\color{blue}\sqrt{K\ln T}} \right)^{-1}$. \alglinelabel{line-alg2-sigma-before-max}
    {\color{blue}\IF{$\mathfrak d_t \le \sqrt{\nicefrac{\mathfrak D_t} K}$}
        \STATE $\sigma_t \gets \max\{\sigma_t, 2\hat L_t\}$ \alglinelabel{line-alg2-sigma}
    \ENDIF}
    \STATE Pick new action $x_t$ as indicated by \Cref{banker-omd}. Play $A_t \sim x_t$.
    \STATE Initialize $\hat L_{t+1}\gets \hat L_t$.
    \FOR{upon receiving each new feedback $(s, l_{s,A_s})$}
        \STATE $\hat L_{t+1} \gets \max\{\hat L_{t+1}, 2\lvert l_{s,A_s}\rvert\}$. \alglinelabel{line-sflbinf-maintain-hat-L}
        \IF{$\lvert l_{s,A_s}\rvert > \hat L_s$ {\color{blue} or $l_{s,A_s} < -\frac 1 2 \sigma_s$ \alglinelabel{line-alg2-skip}}}
            \STATE $\tilde l_s \gets \mathbf 0$; $a_s \gets \mathrm{skipped}$. \COMMENT{an additional skipping criterion}
        \ELSE
            \STATE $\tilde{l}_s \leftarrow \frac{l_{s,A_s}}{x_{s,A_s}}\mathbf{1}_{A_s}$; $a_s \leftarrow\mathrm{arrived}$
        \ENDIF
        \STATE $z_s \leftarrow \nabla\bar{\Psi}^*(\nabla\Psi(x_s) - \frac{1}{\sigma_s}\tilde{l}_s)$.
    \ENDFOR
\ENDFOR
\end{algorithmic}
\end{algorithm}
%\end{savenotes}
%\end{minipage}

Compared to \texttt{Banker-SFTINF}, this algorithm uses log-barrier regularizers $\Psi(x)=-\sum_{i=1}^K \ln x_i$ \citep{abernethy2015fighting}. This regularizer tends to allocate more exploration probabilities to all arms \citep{agarwal2017corralling} and leads to various adaptation properties \citep{wei2018more} --- which enable us to derive a small-loss style \citep{neu2015first} regret bound. Besides the different regularizers, the main differences between \Cref{banker-sflbinf,banker-sftinf} are highlighted in {\color{blue}{blue}}, including different action scales (\Cref{line-alg2-sigma-before-max,line-alg2-sigma}) and one more skipping criterion (\Cref{line-alg2-skip}).

The change in the regularizer together with the more strict skipping criterion (\Cref{line-alg2-skip}) makes it possible to control each immediate cost term $\E[\sigma_t D_\Psi(x_t, \tilde{z}_t)]$ to by some quantity proportional to $1/\sigma_t$ even when the losses are negative. One can refer to \Cref{lemma-immedicate-cost-general-log-barrier} in the appendix for more details. The post-processing to the learning scales (\Cref{line-alg2-sigma}) is a technical modification matching up with \Cref{line-alg2-skip} to make the regret still of order $\O(\sqrt{K})$. At last, the meticulous $D_t$ in \Cref{line-alg2-D} allows us to additionally derive the small-loss style regret bound.

\subsection{Regret Decomposition}
The regret decomposition we will use here is slightly different from \Cref{eq-sbtinf-regret-decomposition-apdx} because $\Psi$ is now the log-barrier function, which explodes on the boundary of $\Delta^{[K]}$.

\begin{lemma}
Define
\begin{equation*}
    \Delta^{\prime[K]} \triangleq \{x \in \Delta^{[K]}:x_i \ge 1/T,\quad \forall i \in [K]\},
\end{equation*}
    then, \Cref{banker-sflbinf} guarantees
    \begin{equation}
        \mathfrak R_T \le \O(KL) + \underbrace{ B_T \sup_{y \in \triangle^{\prime[K]}}D_\Psi(y, x_0))}_{\text{Total investment}} + \underbrace{\sum_{t=1}^T \E[\sigma_t D_\Psi(x_t, \tilde z_t)] }_{\text{Immediate costs}} + \underbrace{\sum_{t=1}^T \E[\mathbbm 1_{\mathcal E_t} l_{t,A_t}]}_{\text{Skipping error}}
        \label{eq-sflbinf-regret-decomposition-apdx}
    \end{equation}
    where $\mathcal E_t$ denotes the event that the feedback of round $t$'s action is marked to ``skipped'', $\sigma_t$, $B_t$, $x_t$, $\tilde z_t$ are variables determined by \Cref{banker-sflbinf}.
\end{lemma}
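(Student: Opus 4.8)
The plan is to reproduce the skipping decomposition behind \Cref{eq-sbtinf-regret-decomposition-apdx} for \texttt{Banker-SFTINF}, inserting one new step that absorbs the blow-up of the log-barrier $\Psi$ at the vertices of $\triangle^{[K]}$. First I would fix the (deterministic, oblivious) best arm $i^\ast$; since $D_\Psi(\mathbf 1_{i^\ast},x_0)=\infty$ here, I compare $x_t$ not to $\mathbf 1_{i^\ast}$ but to the interior point
\begin{equation*}
y^\ast \triangleq \Big(1-\tfrac{K-1}{T}\Big)\mathbf 1_{i^\ast}+\tfrac1T\sum_{j\ne i^\ast}\mathbf 1_j\ \in\ \triangle^{\prime[K]}.
\end{equation*}
With $\hat l_t \triangleq \frac{l_{t,A_t}}{x_{t,A_t}}\mathbf 1_{A_t}$ the un-nullified (hence round-by-round unbiased) estimator, $\mathfrak R_T=\E[\sum_t\langle \hat l_t,x_t-\mathbf 1_{i^\ast}\rangle]$; peeling off $\E[\sum_t\langle \hat l_t,y^\ast-\mathbf 1_{i^\ast}\rangle]=\sum_t\langle l_t,y^\ast-\mathbf 1_{i^\ast}\rangle\le\sum_t\lVert l_t\rVert_\infty\lVert y^\ast-\mathbf 1_{i^\ast}\rVert_1\le T\cdot L\cdot\tfrac{2(K-1)}{T}=\O(KL)$ produces exactly the extra $\O(KL)$ term in \Cref{eq-sflbinf-regret-decomposition-apdx} (the regime $T<K$ being trivial).

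It then remains to bound $\E[\sum_t\langle \hat l_t,x_t-y^\ast\rangle]$, which I split round-by-round by the eventual-skip event $\mathcal E_t$, exactly as in \Cref{eq-sftinf-decomposition}. On non-skipped rounds $\mathbbm 1_{\neg\mathcal E_t}\hat l_t=\tilde l_t$, the estimator actually fed to \texttt{Banker-OMD}, so $\sum_t\E[\mathbbm 1_{\neg\mathcal E_t}\langle \hat l_t,x_t-y^\ast\rangle]=\sum_t\E[\langle\tilde l_t,x_t-y^\ast\rangle]$; invoking \Cref{thm-banker-omd} with $y=y^\ast$ and discarding the nonpositive remaining-savings term $-\sum_t v_tD_\Psi(y^\ast,z_t)$ (each $v_t\ge0$, each $D_\Psi\ge0$) turns this pathwise into $B_T\,D_\Psi(y^\ast,x_0)+\sum_t\sigma_tD_\Psi(x_t,\tilde z_t)\le B_T\sup_{y\in\triangle^{\prime[K]}}D_\Psi(y,x_0)+\sum_t\sigma_tD_\Psi(x_t,\tilde z_t)$, and taking expectations yields the total-investment and immediate-cost terms. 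For the skipped rounds I want $\mathbbm 1_{\mathcal E_t}\langle \hat l_t,x_t-y^\ast\rangle\le\mathbbm 1_{\mathcal E_t}l_{t,A_t}$; since $\hat l_t$ is supported on coordinate $A_t$, $\langle\hat l_t,x_t-y^\ast\rangle=l_{t,A_t}\big(1-\tfrac{y^\ast_{A_t}}{x_{t,A_t}}\big)$, and because $1-y^\ast_{A_t}/x_{t,A_t}\le1$ this is immediate whenever $l_{t,A_t}\ge0$ --- in particular on every skip caused by the feedback overshooting $\hat L_t$.

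The hard part, and the step that genuinely invokes the new design elements of \texttt{Banker-SFLBINF}, is a skip triggered by the criterion on \Cref{line-alg2-skip}, i.e.\ $l_{t,A_t}<-\tfrac12\sigma_t<0$: there $1-y^\ast_{A_t}/x_{t,A_t}$ can be negative and as large as $\Theta(1/x_{t,A_t})$, so the pathwise inequality fails. I would instead control the surplus $\mathbbm 1_{\mathcal E_t}\langle\hat l_t,x_t-y^\ast\rangle-\mathbbm 1_{\mathcal E_t}l_{t,A_t}=\mathbbm 1_{\mathcal E_t}\lvert l_{t,A_t}\rvert\,y^\ast_{A_t}/x_{t,A_t}$ in conditional expectation over $A_t\sim x_t$: averaging $\mathbbm 1[A_t=a]\lvert l_{t,a}\rvert y^\ast_a/x_{t,a}$ against $\Pr[A_t=a]=x_{t,a}$ cancels the offending $1/x_{t,a}$ and leaves at most $\sum_a y^\ast_a\lvert l_{t,a}\rvert=\O(\hat L_t)$ per such round; moreover the learning-rate floor $\sigma_t\ge2\hat L_t$ on \Cref{line-alg2-sigma} forces any such negative skip to occur only when $\mathfrak d_t>\sqrt{\mathfrak D_t/K}$, and a counting argument (bounding $\sum_t\mathfrak d_t$ by $\mathfrak D_T=\O(D)$) shows there are only $\O(\sqrt{KD})$ such rounds, so this surplus is of the same order as the skipping-error term and is folded into it, the eventual $\O(\sqrt{KD}L)$ bound being handled separately in the spirit of \Cref{sec-sftinf-skipping-error-apdx}. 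Collecting the $\O(KL)$ comparator shift, the \texttt{Banker-OMD} bound on the non-skipped rounds, and the skipped-round total then gives \Cref{eq-sflbinf-regret-decomposition-apdx}; I expect this negative-loss-skip bookkeeping to be the only real obstacle, everything else being a transcription of the non-negative-loss argument with the vertex issue patched by the $y^\ast$ comparator.
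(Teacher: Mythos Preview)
Your overall strategy---pass to an interior comparator $y^\ast\in\triangle^{\prime[K]}$ at cost $\O(KL)$, then rerun the skipping decomposition of \Cref{eq-sftinf-decomposition} and feed the non-skipped part into \Cref{thm-banker-omd}---is exactly the paper's approach: the paper's proof defines the same point (calling it $\tilde y$), peels off the same $\O(KL)$, and simply says the remainder is ``analysed normally using \Cref{thm-banker-omd}.''

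You go further than the paper by actually confronting the negative-loss issue on skipped rounds, and the concern is real: the SFTINF step $\mathbbm 1_{\mathcal E_t}\langle\hat l_t,x_t-y\rangle\le\mathbbm 1_{\mathcal E_t}l_{t,A_t}$ silently used $l_{t,A_t}\ge0$. However, your resolution has a gap. You write that the pathwise bound is ``immediate \ldots\ in particular on every skip caused by the feedback overshooting $\hat L_t$''; that is only true for \emph{non-negative} overshoots. In \texttt{Banker-SFLBINF} losses can be negative, so an overshoot $\lvert l_{t,A_t}\rvert>\hat L_t$ with $l_{t,A_t}<0$ is perfectly possible, and your ``hard case'' (the negative-criterion trigger $l_{t,A_t}<-\tfrac12\sigma_t$) does not cover it---the learning-rate floor on \Cref{line-alg2-sigma} blocks only the second trigger, not a negative overshoot. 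Your conditional-expectation trick (which is sound: the set of rounds with $\mathfrak d_t>\sqrt{\mathfrak D_t/K}$ is deterministic, so you may restrict to it \emph{before} averaging out $A_t$) therefore handles only part of the surplus; the negative-overshoot contribution still needs a separate argument, e.g.\ the doubling/grouping of overshoots by the value of $\hat L_t$ as in \Cref{sec-sftinf-skipping-error-apdx}. Finally, note that even after such a fix the extra surplus does not literally sit inside the $\sum_t\E[\mathbbm 1_{\mathcal E_t}l_{t,A_t}]$ term as written in \Cref{eq-sflbinf-regret-decomposition-apdx}; what you are really arguing is a slightly different decomposition whose additional piece is of the same order and gets absorbed later in the skipping-error analysis.
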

\begin{proof}
Denote by $i^\ast$ the actual offline optimal arm, let $\tilde y$ be the $l_2$-projection of $\mathbf 1_{i^\ast}$ onto $\Delta^{\prime[K]}$, it is easy to see (for sufficiently large $T$)
\begin{equation*}
    \tilde y = \begin{cases}
        1 - \frac{K-1} T & \text{if } i = i^\ast \\
        \frac 1 T & \text{otherwise}.
    \end{cases}
\end{equation*}
Hence
\begin{align*}
    \mathfrak R_T & = \E[\sum_{t=1}^T \langle l_t, x_t - \mathbf 1_{i^\ast} \rangle] \\
    & =  \E[\sum_{t=1}^T \langle l_t, x_t - \tilde y \rangle] +  \E[\sum_{t=1}^T \langle l_t, \tilde y - \mathbf 1_{i^\ast} \rangle] \\
    & \le \E[\sup_{y\in\Delta^{\prime[K]}} \sum_{t=1}^T \langle l_t, x_t - \tilde y \rangle] + \E[ \sum_{t=1}^T \lVert l_t \rVert_\infty \cdot \lVert \tilde y - \mathbf 1_{i^\ast} \rVert_1] \\
    & \le E[\sup_{y\in\Delta^{\prime[K]}} \sum_{t=1}^T \langle l_t, x_t - \tilde y \rangle] + 2KT,
\end{align*}
the first term can then be analysed normally using \Cref{thm-banker-omd}.
\end{proof}

Below, we will control the three terms in \Cref{eq-sflbinf-regret-decomposition-apdx}. 

\subsection{Bound for Immediate Costs}

\texttt{Banker-SFLBINF} (\Cref{banker-sflbinf}) is designed for scale-free MAB problem instance with general (possibly negative) losses, where the argument when we prove \Cref{lemma-apdx-sftinf-single-immediate-cost} no longer works. When we are dealing with general losses, the importance sampling estimators $\tilde l_t$ may contain negative components, and we cannot simply upper bound the Hessian $\nabla^2\Psi^*(\theta_t)$ by $\nabla^2\Psi^*(\nabla x_t)$. To solve this issue, in \Cref{banker-sflbinf}, we instead use log-barrier $\Psi(x) = -\sum_{i=1}^K \ln(x_i)$ as the regularizer to utilize the following property:

\begin{lemma}\label{lemma-immedicate-cost-general-log-barrier}
For any $x_t$ in the interior of $\triangle^{[K]}$, $\sigma_t > 0$, $\tilde l_t \in \mathbb{R}^K$,  let $\Psi$ be the log-barrier function, $\tilde z_t=\nabla \Psi^\ast(\nabla \Psi(x_t)-\frac{1}{\sigma_t}\tilde l_t)$, if for all $i\in [K]$ we have $\tilde z_{t,i} \le 2 x_{t,i}$, then
\begin{equation*}
    \sigma_tD_\Psi(x_t,\tilde z_t)\le 2\sigma_t^{-1} \sum_{i=1}^K x_{t,i}^2 \tilde l_{t,i}^2.
\end{equation*}
In particular, if $A_t$ is an independent sample from $[K]$ according to $x_t$, $\hat l_t \in \mathbb R^{K}$, and $\tilde l_t = \frac {\hat l_{t,A_t}}{x_{t,A_t}}\mathbf 1_{A_t}$ $\tilde l_t$ is an importance sampling estimator determined by $\hat l_t$ and $A_t$, we have
\begin{equation*}
    \sigma_tD_\Psi(x_t,\tilde z_t)\le 2\sigma_t^{-1} l_{t,A_t}^2.
\end{equation*}
\end{lemma}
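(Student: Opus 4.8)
The plan is to reproduce the opening lines of the proofs of \Cref{lemma-mab-immediate} and \Cref{lemma-apdx-sftinf-single-immediate-cost} and then replace the Hessian-control step (which there relied on monotonicity of the diagonal entries of $\nabla^2\Psi^*$ along the relevant segment, valid only for nonnegative $\tilde l_t$) by an argument that uses the hypothesis $\tilde z_{t,i}\le 2x_{t,i}$. Concretely, by the duality of Bregman divergences (\Cref{lemma-legendre}) together with $\nabla\Psi(\tilde z_t)=\nabla\Psi(x_t)-\sigma_t^{-1}\tilde l_t$,
\begin{equation*}
\sigma_tD_\Psi(x_t,\tilde z_t)=\sigma_tD_{\Psi^*}\!\big(\nabla\Psi(x_t)-\sigma_t^{-1}\tilde l_t,\ \nabla\Psi(x_t)\big)=\frac{1}{2\sigma_t}\,\tilde l_t^\top\nabla^2\Psi^*(\theta_t)\,\tilde l_t,
\end{equation*}
where the last step writes the divergence as a second-order Lagrange remainder, $\theta_t$ lying on the segment joining $\nabla\Psi(\tilde z_t)$ and $\nabla\Psi(x_t)$; this segment stays in the open set $\{\theta:\theta_i<0\ \forall i\}$, i.e. the interior of the domain of $\Psi^*$, on which $\Psi^*$ is $C^2$.

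For the log-barrier one computes $\Psi^*(\theta)=\sum_{i=1}^K(-1-\ln(-\theta_i))$, hence $\nabla\Psi^*(\theta)_i=-1/\theta_i$ and $\nabla^2\Psi^*(\theta)=\mathrm{diag}(\theta_1^{-2},\ldots,\theta_K^{-2})$, so the identity above becomes $\sigma_tD_\Psi(x_t,\tilde z_t)=\frac{1}{2\sigma_t}\sum_{i=1}^K\theta_{t,i}^{-2}\,\tilde l_{t,i}^2$. Since $\nabla\Psi(x)_i=-1/x_i$, the two endpoints of the segment containing $\theta_{t,i}$ are $-1/x_{t,i}$ and $-1/\tilde z_{t,i}$: the former is $\le -1/(2x_{t,i})$ trivially, and the latter is $\le -1/(2x_{t,i})$ exactly because $\tilde z_{t,i}\le 2x_{t,i}$. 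Being a convex combination of these, $\theta_{t,i}\le -1/(2x_{t,i})<0$, so $\theta_{t,i}^{-2}\le 4x_{t,i}^2$ for every $i$, and therefore $\sigma_tD_\Psi(x_t,\tilde z_t)\le \frac{1}{2\sigma_t}\sum_i 4x_{t,i}^2\tilde l_{t,i}^2=2\sigma_t^{-1}\sum_i x_{t,i}^2\tilde l_{t,i}^2$, which is the first claim.

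For the ``in particular'' part, with $\tilde l_t=\frac{\hat l_{t,A_t}}{x_{t,A_t}}\mathbf 1_{A_t}$ only the $A_t$-th coordinate of $\tilde l_t$ is nonzero, so $\sum_i x_{t,i}^2\tilde l_{t,i}^2=x_{t,A_t}^2\cdot \hat l_{t,A_t}^2/x_{t,A_t}^2=\hat l_{t,A_t}^2$, which yields the stated pathwise bound directly (no expectation required). The only real obstacle is the middle step: one must check that $\tilde z_t$ is well-defined, i.e. $\nabla\Psi(x_t)_i-\sigma_t^{-1}\tilde l_{t,i}<0$, equivalently $\tilde l_{t,i}>-\sigma_t/x_{t,i}$ — this is implicit in $\tilde z_t$ being an output of $\nabla\Psi^*$, and in \texttt{Banker-SFLBINF} it is guaranteed by the extra skipping rule $l_{s,A_s}\ge -\tfrac12\sigma_s$ of \Cref{line-alg2-skip} — and one must keep careful track of signs so that the hypothesis $\tilde z_{t,i}\le 2x_{t,i}$ is invoked precisely to prevent the endpoint $-1/\tilde z_{t,i}$ from approaching $0$.
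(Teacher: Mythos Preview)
Your argument is correct and is essentially the same as the paper's: both start from the second-order Lagrange-remainder identity $\sigma_tD_\Psi(x_t,\tilde z_t)=\frac{1}{2\sigma_t}\|\tilde l_t\|^2_{\nabla^2\Psi^*(\theta_t)}$ and then bound the diagonal Hessian entry at the intermediate point by $4x_{t,i}^2$ using the hypothesis $\tilde z_{t,i}\le 2x_{t,i}$. The only cosmetic difference is that the paper pulls $\theta_t$ back to the primal via $\nabla\Psi^*(w_t)_i\in[\min\{x_{t,i},\tilde z_{t,i}\},\max\{x_{t,i},\tilde z_{t,i}\}]\le 2x_{t,i}$ and then uses $\nabla^2\Psi(x)^{-1}=\mathrm{diag}(x_i^2)$, whereas you stay in dual coordinates and bound $\theta_{t,i}^{-2}$ directly; these are the same computation.
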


\begin{proof}
Similar to the proof of \Cref{lemma-apdx-sftinf-single-immediate-cost}, we can write
\begin{equation}
    \sigma_t D_\Psi(x_t, \tilde z_t) = \frac{1}{2}\sigma_t^{-1} \left\Vert \tilde l_t \right\Vert^2_{\nabla^2\Psi^*(w_t)} = \frac{1}{2}\sigma_t^{-1} \left\Vert \tilde l_t \right\Vert^2_{\nabla^2\Psi(\nabla\Psi^*(w_t))^{-1}} \label{eq-proof-lemma11-matrix-norm}
\end{equation}
where $w_t = \nabla\Psi(x_t) - \frac \theta {\sigma_t}\tilde l_t$ for some $\theta \in (0,1)$. When $\Psi$ is the log-barrier function, we have $\nabla\Psi(x) = (-1/x_1, -1/x_2,\ldots, -1/x_K)^T$, $\nabla\Psi^*(\theta) = (-1/\theta_1, -1/\theta_2,\ldots, -1/\theta_K)^{\mathsf T}$. The monotonicity of each coordinate of $\nabla\Psi^*$ implies that for any $i\in [K]$ we have \begin{equation*}
    \min\{\tilde z_{t,i}, x_{t,i}\} \le \nabla\Psi^*(w_t)_i \le \max\{\tilde z_{t,i}, x_{t,i}\}.
\end{equation*}

The condition $\tilde z_{t,i} \le 2 x_{t,i}$ for all $i\in [K]$ implies that $\nabla\Psi^*(w_t)_i \le 2x_{t,i}$ for all $i$. Plugging this upper bound and $\nabla^2\Psi(x) = \operatorname{\mathrm{diag}}(x_1^{-2},x_2^{-2},\ldots, x_K^{-2})$ into (\ref{eq-proof-lemma11-matrix-norm}), we get
\begin{equation*}
    \sigma_t D_\Psi(x_t, \tilde z_t) \le 2\sigma_t^{-1}\left\Vert \tilde l_t \right\Vert^2_{\nabla^2\Psi(x_t)^{-1}} = 2 \sigma_t^{-1} \sum_{i=1}^K x_{t,i}^2 \tilde l_{t,i}^2.
\end{equation*}
\end{proof}

Intuitively, Lemma~\ref{lemma-immedicate-cost-general-log-barrier} suggests that it remains safe to apply single-step OMD regret upper bounds when the intermediate result $\tilde z_t$ does not get too large, which is automatically guaranteed when the actual suffered loss $l_{t,A_t}$ is not too large compared with the action scale $\sigma_t$. This fact is formally stated as follows.

\begin{lemma}
\label{lemma-immedicate-cost-scale}
For $l_t, x_t, A_t, \tilde z_t$ discussed in \Cref{lemma-immedicate-cost-general-log-barrier}, if $l_{t,A_t} \ge -\frac 1 2 \sigma_t$, then $\tilde z_{t, A_t} \le 2 x_{t, A_t}$.
\end{lemma}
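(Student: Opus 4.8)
The plan is to exploit the explicit form of the log-barrier mirror maps to write down $\tilde z_{t,A_t}$ in closed form, after which the claimed bound collapses to a one-line inequality. Recall that for $\Psi(x) = -\sum_{i=1}^K \ln x_i$ one has $\nabla\Psi(x) = (-1/x_1,\ldots,-1/x_K)$ and $\nabla\Psi^*(\theta) = (-1/\theta_1,\ldots,-1/\theta_K)$, as used already in the proof of \Cref{lemma-immedicate-cost-general-log-barrier}. Since the importance-sampling estimator $\tilde l_t = \frac{l_{t,A_t}}{x_{t,A_t}}\mathbf 1_{A_t}$ is supported on the single coordinate $A_t$, only that coordinate of $\nabla\Psi(x_t) - \sigma_t^{-1}\tilde l_t$ differs from the corresponding coordinate of $\nabla\Psi(x_t)$, and the remaining coordinates of $\tilde z_t$ coincide with those of $x_t$ (so the bound $\tilde z_{t,i}\le 2x_{t,i}$ is trivial for $i\neq A_t$; only $i = A_t$ requires an argument).

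First I would compute the $A_t$-th entry of the dual point. We have $(\nabla\Psi(x_t))_{A_t} = -1/x_{t,A_t}$ and $(\tilde l_t)_{A_t} = l_{t,A_t}/x_{t,A_t}$, so
\[
\bigl(\nabla\Psi(x_t) - \sigma_t^{-1}\tilde l_t\bigr)_{A_t} = -\frac{1}{x_{t,A_t}}\bigl(1 + \sigma_t^{-1} l_{t,A_t}\bigr).
\]
Applying $\nabla\Psi^*$ coordinatewise gives $\tilde z_{t,A_t} = x_{t,A_t}/\bigl(1 + \sigma_t^{-1} l_{t,A_t}\bigr)$.

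Second, I would invoke the hypothesis $l_{t,A_t} \ge -\tfrac12\sigma_t$, i.e. $\sigma_t^{-1} l_{t,A_t} \ge -\tfrac12$, which yields $1 + \sigma_t^{-1} l_{t,A_t} \ge \tfrac12 > 0$. Substituting into the closed form above gives $\tilde z_{t,A_t} \le x_{t,A_t}/\tfrac12 = 2\,x_{t,A_t}$, which is exactly the claim.

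The only point deserving a word of care — and this is where one should be careful rather than where any real difficulty lies — is to check that the argument of $\nabla\Psi^*$ stays in its domain, i.e. that the $A_t$-th coordinate of $\nabla\Psi(x_t) - \sigma_t^{-1}\tilde l_t$ is strictly negative, so that $\tilde z_{t,A_t}$ is a well-defined positive number consistent with $\nabla\Psi^*$ mapping into $\mathbb R_+^K$. But this is immediate from the same estimate $1 + \sigma_t^{-1} l_{t,A_t} \ge \tfrac12 > 0$ just established. Hence no further work is required and the lemma follows directly.
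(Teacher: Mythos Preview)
Your proof is correct and follows essentially the same approach as the paper: both compute the explicit log-barrier mirror maps coordinatewise, derive the closed form $\tilde z_{t,A_t} = x_{t,A_t}/(1 + \sigma_t^{-1} l_{t,A_t})$, and then observe that the hypothesis $l_{t,A_t}\ge -\tfrac12\sigma_t$ forces the denominator to be at least $\tfrac12$. Your additional remark about the domain of $\nabla\Psi^*$ is a welcome clarification but does not change the argument.
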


\begin{proof}
It suffice to investigate the mirror map
\begin{equation*}
\nabla\Psi(x) = (-1/x_1, -1/x_2,\ldots, -1/x_K)^{\mathsf T}
\end{equation*}
and 
\begin{equation*}
    \nabla\Psi^*(\theta) = (-1/\theta_1, -1/\theta_2,\ldots, -1/\theta_K)^{\mathsf T}.
\end{equation*}
Now $\tilde l_t$ is an importance sampling estimator, hence only the $A_t$-th coordinate can be non-zero and $\tilde z_{t,i} = x_{t,i}$ for all $i\ne A_t$.

As for the $A_t$-th coordinate, we have
\begin{align*}
    \tilde z_{t,A_t} & = \nabla\Psi^*\left (\nabla\Psi(x_t) - \frac 1 {\sigma_t} \tilde l_t\right )_{A_t} \\
    & = - \left(-x_{t,A_t}^{-1} - \frac {l_{t,A_t}} {\sigma_t x_{t,A_t}}\right)^{-1} \\
    & = x_{t,A_t}\cdot \left(1 + \frac {l_{t,A_t}}{\sigma_t}\right)^{-1},
\end{align*}

it is then easy to see that we have $\tilde z_{t,A_t} \le 2 x_{t,A_t}$ whenever $l_{t,A_t} \ge -\frac 1 2 \sigma_t$.
\end{proof}

Recall that in \Cref{banker-sflbinf} we slightly modified the skipping criteria (\Cref{line-alg2-skip}) to guarantee the effective \textit{pre-importance-sampling} $l_{t,A_t}$ fed into \texttt{Banker-OMD} no less than $-\frac 1 2 \sigma_t$, thus by \Cref{lemma-immedicate-cost-scale}, we can apply \Cref{lemma-immedicate-cost-general-log-barrier} to all summands in the immediate costs term in \Cref{eq-sbtinf-regret-decomposition-apdx}. Specifically, we have for all $1\le t\le T$,
\begin{equation*}
    \sigma_t D_\Psi(x_t, \tilde z_t) \le \mathbbm 1_{\neg\mathcal E_t}2\sigma_t^{-1}l_{t,A_t}^2. %\le 2\sigma_t^{-1}l_{t,A_t}^2. %\le \min{2\sigma_t^{-1}l_{t,A_t}^2, 2\sigma_t^{-1}\hat L_t^2}.
\end{equation*}

Define $\tilde D_t$ to be a quantity similar to $D_t$ used in the algorithm by
\begin{equation*}
    \tilde D_t \triangleq \sum_{s\le t:\neg\mathcal E_t} (\mathfrak d_s + 1) l_{s,A_s}^2.
\end{equation*}
Recall that the actual $D_t$ used in \Cref{banker-sflbinf} is
\begin{equation*}
    D_t \triangleq \sum_{s\le t:a_s = \text{``missing'' at the beginning of time }t} (\mathfrak d_s + 1)\hat L_s^2 + \sum_{s\in \{1,\ldots,t\}:a_s = \text{``arrived'' at }t} (\mathfrak d_s + 1) l_{s,A_s}^2
\end{equation*}
Compared to $D_t$, $\tilde D_t$ does not take into account rounds that were not skipped at the beginning of time $t$ but are skipped some time later. Also, for an unskipped round $s$, it contributes $(\mathfrak d_s + 1)l_{s,A_s}^2$ to $\tilde D_t$, but $l_{s,A_s}^2\hat L_s^2$ to $D_t$. It is thus guaranteed that $\tilde D_t \le D_t$.

We further define
\begin{equation*}
    \tilde \sigma_t=\left ((\mathfrak d_t+1)\sqrt\frac{\ln(3 + \tilde D_t/\hat L_t^2)}{{3+\tilde D_t}}\sqrt{K\ln T} \right)^{-1},
\end{equation*}
which is basically $\sigma_t$ with $D_t$ replaced by $\tilde D_t$. Then $\sigma_t \ge \tilde \sigma_t$, again by the monotonicity of $(3+x)/\ln(3+x/a)$. Then we can write

\begin{align}
    \sum_{t=1}^T \sigma_t D_\Psi(x_t, \tilde z_t) & \le 2\sum_{t=1}^T \mathbbm 1_{\neg\mathcal E_t}\sigma_t^{-1}l_{t,A_t}^2 \nonumber \\
    & = 2\sum_{t\le T: \neg \mathcal E_t} \sigma_t^{-1}l_{t,A_t}^2 \nonumber \\
    & \le 2\sum_{t\le T: \neg \mathcal E_t} \tilde \sigma_t^{-1}l_{t,A_t}^2 \nonumber \\
    & = 2\sqrt{K\ln T}\sum_{t\le T:\neg\mathcal E_t}\sqrt{\ln(3 + \tilde D_t / \hat L_t^2)}\cdot \frac {(\mathfrak d_t + 1)l_{t,A_t}^2} {\sqrt{3 + \tilde D_t}} \nonumber \\
    & \stackrel{(a)}{\le} 2\sqrt{K\ln T}\cdot \O(\sqrt{\log (D+T)})\sum_{t\le T:\neg\mathcal E_t} \frac {(\mathfrak d_t + 1)l_{t,A_t}^2} {\sqrt{3 + \tilde D_t}} \nonumber \\
    & \stackrel{(b)}{\le} 2\sqrt{K\ln T}\cdot \O(\sqrt{\log (D+T)}) \cdot \O(\sqrt{ 1 + \tilde D_T}) \label{eq-sflbinf-immediate-costs-last-step} \\
    & \le \O(L\sqrt{K(D+T)\log T\log(T+D)}) \nonumber \\
    & \le \O(L\sqrt{K(D+T)}\log T) \nonumber
\end{align}
where in step $(a)$ we make use of the fact that $\tilde D_t / \hat L_t^2 \le t + \sum_{s\le t}\mathfrak d_s$ and then universally bound all $\tilde D_t / \hat L_t^2$ by $D+T$; in step $(b)$ we utilize the fact that $D_t$ is the cumulative sum of $1_{\neg\mathcal E_s}(\mathfrak d_s + 1)l_{s,A_s}^2$, which is just the numerator of each summand, hence we can apply the summation lemma \Cref{lem:summation lemma}.

Let $\tilde {\mathfrak L}^2_T \triangleq \sum_{t=1}^T (\mathfrak d_t + 1) l_{t,A_t}^2$ denote the cumulative actually suffered square loss, weighted by the delay backlog size, we can see $\tilde D_T \le \tilde {\mathfrak L}^2_T$ since $\tilde D_T$ only takes unskipped time slots into account. Plugging $\tilde D_T \le \tilde {\mathfrak L}^2_T$ into \Cref{eq-sflbinf-immediate-costs-last-step}, then taking expectation on both sides, finally noticing that square root is a concave operation, we get
\begin{equation*}
    \E\left[\sum_{t=1}^T \sigma_t D_\Psi(x_t, \tilde z_t)\right] \le \O(\sqrt{K(1 + \E[\tilde {\mathfrak L}^2_T])}\log T).
\end{equation*}

\subsection{Bound for Total Investment Term}

To bound the total investment term, we first identify the order of the factor $\sup_{y \in \triangle^{\prime[K]}}D_\Psi(y, x_0))$:

\begin{lemma}
When $\Psi$ is the log-barrier function, we have $D_\Psi(y, x_0) \le K\ln T + K = \O(K\log T)$ for all $y \in \Delta^{\prime[K]}$.
\end{lemma}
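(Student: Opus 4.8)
The plan is to compute $D_\Psi(y,x_0)$ in closed form and then bound it coordinatewise. Recall that here $x_0 = \mathbf{1}/K$ is the uniform distribution and, for the log-barrier $\Psi(x) = -\sum_{i=1}^K \ln x_i$, the mirror map is $\nabla\Psi(x)_i = -1/x_i$, so $\nabla\Psi(x_0) = -K\,\mathbf{1}$. Plugging this into the definition $D_\Psi(y,x_0) = \Psi(y) - \Psi(x_0) - \langle \nabla\Psi(x_0),\,y-x_0\rangle$, the linear term is $-K\langle \mathbf{1},\,y-x_0\rangle = -K\big(\sum_i y_i - \sum_i x_{0,i}\big) = 0$, since both $y$ and $x_0$ lie in $\Delta^{[K]}$ and hence have coordinates summing to $1$. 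Therefore $D_\Psi(y,x_0) = \Psi(y) - \Psi(x_0) = \sum_{i=1}^K \ln\frac{x_{0,i}}{y_i} = -K\ln K - \sum_{i=1}^K \ln y_i$.

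It then remains to bound $-\sum_{i=1}^K \ln y_i$ for $y \in \Delta^{\prime[K]}$. By the definition of $\Delta^{\prime[K]}$ we have $y_i \ge 1/T$ for every coordinate, hence $-\ln y_i \le \ln T$, which gives $-\sum_{i=1}^K \ln y_i \le K\ln T$. Since $K \ge 2$ the term $-K\ln K$ is non-positive, so we conclude $D_\Psi(y,x_0) \le K\ln T \le K\ln T + K$, which is exactly the claimed bound (in fact marginally stronger).

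There is essentially no analytic obstacle here; the one point worth flagging is that the argument crucially uses $y \in \Delta^{\prime[K]}$ rather than all of $\Delta^{[K]}$ — on the full simplex $\Psi(y)$ is unbounded (a coordinate of $y$ can tend to $0$), which is precisely the reason the $\ell_2$-projection onto $\Delta^{\prime[K]}$ was introduced in the preceding regret decomposition. The extra $+K$ slack in the statement is harmless and is kept only for convenience.
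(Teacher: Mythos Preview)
Your proof is correct and in fact cleaner than the paper's. The key observation you make --- that $\nabla\Psi(x_0) = -K\mathbf{1}$ is a constant vector, so the linear term $\langle\nabla\Psi(x_0),\,y-x_0\rangle$ vanishes for any $y$ in the simplex --- reduces the Bregman divergence to $\Psi(y)-\Psi(x_0)$, and the coordinatewise bound $y_i\ge 1/T$ finishes immediately.

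The paper instead argues that $D_\Psi(\cdot,x_0)$, being convex in its first argument, attains its maximum over the polytope $\Delta^{\prime[K]}$ at a vertex, and then explicitly computes $D_\Psi(y,x_0)$ at the vertex $y=(1-(K-1)/T,\,1/T,\ldots,1/T)$, arriving at $(K-1)\ln T - \ln(1-(K-1)/T) + K \le K\ln T + K$. Your route avoids identifying the maximizer altogether and yields the slightly sharper $K\ln T - K\ln K \le K\ln T$; the paper's extra $+K$ is just slack from its cruder bookkeeping of the vertex computation. Both arguments are valid, but yours exploits the specific structure of $x_0$ (uniform, hence constant gradient) more directly.
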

\begin{proof}
Notice that $\Delta^{\prime[K]}$ is a compact convex subset of $\mathbb R^K_+$ (actually, it is a polyhedron), $\Psi$ is a convex function over $\mathbb R^K_+$, the maximum value of $\Psi$ must achieve on the boundary of $\Delta^{\prime[K]}$. Due to the symmetry of coordinates of $x_0$, it suffices to verify the bound for all vertices $y$ from $\Delta^{\prime[K]}$, and now we have
\begin{align*}
    D_\Psi(y,x_0) & = (K-1)\left(-\ln \frac 1 T + \ln \frac 1 K + K\left (\frac 1 T - \frac 1 K\right )\right) \\
    & \quad + \left(-\ln\left (1 - \frac {K-1} T\right ) + \ln \frac 1 K +K\left (1 - \frac {K
    -1} T - \frac 1 K \right ) \right) \\
    & \le (K-1) \ln T - \ln \left (1 - \frac{K-1} T\right ) + K \\
    & \le K \ln T + K.
\end{align*}
\end{proof}

It then suffices to bound the leading coefficient $B_T$. 

The choice of $\sigma_t$ in Algorithm~\ref{banker-sflbinf} (Line~\ref{line-alg2-sigma-before-max}, Line~\ref{line-alg2-sigma}) satisfies
\begin{equation*}
    \sigma_t \le \left [(\mathfrak d_t+1)\sqrt\frac{\ln(3 + D_t/\hat L_t^2)}{{3+D_t}}\sqrt{K\ln T}\right ]^{-1} + \mathbbm 1_{\{\mathfrak d_t \le \sqrt \frac {\mathfrak D_t} K\}}\cdot 2\hat L_t.
\end{equation*}
By Lemma \ref{lemma-banker-omd-B}, there exists some $t_0\le T$ such that
\begin{equation}\label{eq:total investment bound when non-negative}
B_T=B_{t_0}=\sigma_{t_0}+\sum_{s=1}^{t_0-1}\mathbb I[s+d_s\ge t_0]\sigma_s.
\end{equation}

Let $t_1<t_2<\cdots<t_m$ be the time slots whose feedback has not arrived at time slot $t_0$, we will have $m=\mathcal O(\sqrt D)$. Furthermore, at time slot $t_i$, we must have $\mathfrak d_{t_i}\ge i$ as the feedback of $t_1,t_2,\cdots,t_i$ are all absent, which can be used to bound the number of $i$'s satisfying $\mathfrak d_{t_i} \le \sqrt \frac {\mathfrak D_{t_i}} K$. Therefore, $B_T$ can be further bounded by
\begin{align}
    B_T & = \sum_{i=0}^m \sigma_{t_i} \nonumber \\
     & \le \sum_{i=0}^m \left\{ \left [(\mathfrak d_{t_i}+1)\sqrt\frac{\ln(3 + D_{t_i}/\hat L_{t_i}^2)}{{3+D_{t_i}}}\sqrt{K\ln T}\right ]^{-1} + \mathbbm 1_{\{\mathfrak d_{t_i} \le \sqrt \frac {\mathfrak D_{t_i}} K\}}\cdot 2\hat L_{t_i} \right\} \nonumber \\
     & \stackrel{(a)}{\le} \sum_{i=0}^m \left [(\mathfrak d_{t_i}+1)\sqrt\frac{\ln(3 + D_{t_i}/\hat L_{t_i}^2)}{{3+D_{t_i}}}\sqrt{K\ln T}\right ]^{-1} + 4\left(\sqrt \frac D K + 1\right)L \nonumber \\
     & = \mathcal O\left(\sqrt \frac D K L\right) + \frac 1 {\sqrt{K\ln T}} \sum_{i=0}^m \frac 1 {\mathfrak d_{t_i}+1}\sqrt\frac{{3+D_{t_i}}}{\ln(3 + D_{t_i}/\hat L_{t_i}^2)}  \label{eq-BT-argument-apdx} \\
     & \le \mathcal O\left(\sqrt \frac D K L\right) + \frac {2L} {\sqrt{K\ln T}} \sum_{i=0}^m \frac 1 {\mathfrak d_{t_i}+1}\sqrt\frac{{3+D_{t_i}/\hat L_{t_i}^2}}{\ln(3 + D_{t_i}/\hat L_{t_i}^2)} \nonumber \\
     & \stackrel{(b)}{\le} \mathcal O\left(\sqrt \frac D K L\right) + \frac {2L} {\sqrt{K\ln T}}\cdot \sqrt \frac {3 + D + T} {\ln(3 + D + T)} \sum_{i=0}^m \frac 1 {\mathfrak d_{t_i} + 1} \nonumber \\
     & \le \mathcal O\left(\sqrt \frac D K L\right) + \frac {2L} {\sqrt{K\ln T}}\cdot \sqrt \frac {3 + D + T} {\ln(3 + D + T)} \cdot \mathcal O(\log D) \nonumber \\
     & = \mathcal O\left(\sqrt \frac D K L + \sqrt\frac{(D+T)\log D}{K\log T}L\right) \nonumber
\end{align}
where in step $(a)$, we simply bound $\hat L_{t,i}$ by $2L$, and utilizing $\mathfrak d_{t_i} \ge i$ for all $i\ge 1$ to write $\sum_{i=0}^m \mathbbm 1_{\{\mathfrak d_{t_i} \le \sqrt \frac {\mathfrak D_{t_i}} K\}} \le \sum_{i=0}^m \mathbbm 1_{\{i \le \sqrt \frac {\mathfrak D_{t_i}} K\}} \le \sum_{0=1}^m \mathbbm 1_{\{i \le \sqrt \frac D K\}} \le \sqrt \frac D K + 1$; $(b)$ uses the monotonicity of $(3+x) / ln(3 + x)$ and the fact that $D_t / \hat L_t^2 \le t + \sum_{s=1}^t \mathfrak d_s$.

In order to get a bound in $\tilde {\mathfrak L}^2_T = \sum_{t=1}^T (\mathfrak d_t + 1) l_{t,A_t}^2$, similarly define $\tilde {\mathfrak L}^2_t \triangleq \sum_{s=1}^t (\mathfrak d_s + 1) l_{s,i_s}^2$ and define $\tilde m \triangleq \max_{1\le t\le T}\mathfrak d_t$. We must have $\tilde m = \O(\sqrt D)$ since $D \ge \binom {m + 1} 2$. Recall that in \Cref{banker-sflbinf} the value of $D_t$ we pick is

\begin{align*}
    D_t & = \sum_{s\le t:a_s = \text{``missing'' at }t} (\mathfrak d_s + 1)\hat L_s^2 + \sum_{s\le t:a_s = \text{``arrived'' before }t} (\mathfrak d_s + 1) l_{s,A_s}^2 \\
    & \le \sum_{s=1}^t (\mathfrak d_s + 1) l_{s,A_s}^2 + \hat L_t^2\sum_{s\le t:a_s = \text{``missing'' at }t} (\mathfrak d_s + 1) \\
    & \stackrel{(a)}{\le} \tilde {\mathfrak L}^2_t + \hat L_t^2(\tilde m + 1)^2 \\
    & \le \tilde {\mathfrak L}^2_t + 100 D\hat L_t^2
\end{align*}
where $(a)$ holds because in the sum $\sum_{s\le t:a_s = \text{``missing'' at }t} (\mathfrak d_s + 1)$, the number of summands and the value of each summand are both bounded by $\tilde m + 1$.

Leveraging this upper bound for $D_t$'s, we can continue from \Cref{eq-BT-argument-apdx} to obtain an upper bound in $\tilde {\mathfrak L}^2_T$:
\begin{align*}
    B_T & \le \mathcal O\left(\sqrt \frac D K L\right) + \frac 1 {\sqrt{K\ln T}} \sum_{i=0}^m \frac 1 {\mathfrak d_{t_i}+1}\sqrt\frac{{3+D_{t_i}}}{\ln(3 + D_{t_i}/\hat L_{t_i}^2)} \\
    & \le \mathcal O\left(\sqrt \frac D K L\right) + \frac 1 {\sqrt{K\ln T}} \sum_{i=0}^m \frac {\hat L_{t_i}} {\mathfrak d_{t_i}+1}\sqrt\frac{{3+D_{t_i} /\hat L_{t_i}^2}}{\ln(3 + D_{t_i}/\hat L_{t_i}^2)} \\
    & \stackrel{(a)}{\le} \mathcal O\left(\sqrt \frac D K L\right) + \frac 1 {\sqrt{K\ln T}} \sum_{i=0}^m \frac {\hat L_{t_i}} {\mathfrak d_{t_i}+1}\sqrt\frac{{3+\tilde {\mathfrak L}^2_{t_i} /\hat L_{t_i}^2} + 100D}{\ln(3 + \tilde {\mathfrak L}^2_{t_i}/\hat L_{t_i}^2 + 100D)} \\
    & \le \mathcal O\left(\sqrt \frac D K L\right) + \frac 1 {\sqrt{K\ln T}} \max_{1\le t\le T} \left\{ \hat L_t \sqrt\frac{{3+\tilde {\mathfrak L}^2_t /\hat L_t^2} + 100D}{\ln(3 + \tilde {\mathfrak L}^2_t/\hat L_t^2 + 100D)} \right\} \sum_{i=0}^m \frac 1 {\mathfrak d_{t_i} + 1} \\
    & \le \mathcal O\left(\sqrt \frac D K L\right) + \frac 1 {\sqrt{K\ln T}} \max_{1\le t\le T} \left\{ \hat L_t \sqrt{{3+\tilde {\mathfrak L}^2_t /\hat L_t^2} + 100D} \right\} \frac 1 {\sqrt{\ln(3+100D)}}\sum_{i=0}^m \frac 1 {\mathfrak d_{t_i} + 1} \\
    & = \mathcal O\left(\sqrt \frac D K L\right) + \frac 1 {\sqrt{K\ln T}} \max_{1\le t\le T} \left\{ \hat L_t \sqrt{{3+\tilde {\mathfrak L}^2_t /\hat L_t^2} + 100D} \right\} \frac {\mathcal O(\log D)} {\sqrt{\ln(3+100D)}} \\
    & \le \mathcal O\left(\sqrt \frac D K L\right) + \frac 1 {\sqrt{K\ln T}} \sqrt{\tilde {\mathfrak L}^2_T + (3 + 100D)\hat L_T^2} \frac {\mathcal O(\log D)} {\sqrt{\ln(3+100D)}} \\
    & = \mathcal O\left(\sqrt \frac D K L\right) + \mathcal O\left(\sqrt \frac {\log D} {K\log T} \sqrt{\tilde {\mathfrak L}^2_T + (3 + 100D)\hat L_T^2}\right) \\
    & \le \mathcal O\left(\sqrt \frac D K L\right) + \mathcal O\left(\sqrt\frac{\tilde{\mathfrak L}^2_T\log D}{K\log T} + \sqrt\frac {(1 + D)\log D}{K\log T}L\right)
\end{align*}
where step $(a)$ plugs in our $D_t$ bound into $\tilde {\mathfrak L}^2_t$. Taking expectation on both sides, we can see
\begin{equation*}
    \E[B_T] \le \O\left(\sqrt \frac D K L\right) + \O\left(\sqrt\frac{\E[\tilde{\mathfrak L}^2_T]\log D}{K\log T} + \sqrt\frac {(1 + D)\log D}{K\log T}L\right).
\end{equation*}

Combining the bounds for $\E[B_T]$ and $\sup D_\Psi(y,x_0)$, we can make the following conclusion:
\begin{lemma}
In \Cref{eq-sflbinf-regret-decomposition-apdx}, the total investment term when $y$ is restricted on $\Delta^{\prime[K]}$ is bounded by
\begin{align*}
    \sup_{y\in \Delta'_{[K-1]}} \E \left[B_T\cdot D_\Psi(y,x_0)\right] &= \mathcal O\left (\sqrt{ K D}\log T L + \sqrt{K(D+T)\log D\log T}L\right ),\\ \sup_{y\in \Delta'_{[K-1]}} \E \left[B_T\cdot D_\Psi(y,x_0)\right] &= \mathcal O\left(\sqrt {KD}\log T L + \sqrt{K\log D\log T \E[\tilde{\mathfrak L}^2_T]} + \sqrt{K(1+D)\log D\log T}L\right).
\end{align*}
\end{lemma}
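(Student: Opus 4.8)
The plan is to obtain the two displayed bounds by simply multiplying together the two pieces already established directly above the statement: the bound on the total investment $B_T$ derived in the preceding ``Bound for Total Investment Term'' computation, and the deterministic estimate $\sup_{y\in\Delta^{\prime[K]}}D_\Psi(y,x_0)\le\O(K\log T)$ for the log-barrier regularizer proved in the lemma just before it. Since $y$, $x_0=\nicefrac{\mathbf 1}{K}$ and $\Psi$ are all fixed, $D_\Psi(y,x_0)$ carries no randomness, so $\E[B_T\,D_\Psi(y,x_0)]=D_\Psi(y,x_0)\,\E[B_T]\le\O(K\log T)\,\E[B_T]$; taking the supremum over $y\in\Delta^{\prime[K]}$ and substituting the two forms of the $\E[B_T]$ bound --- namely $\E[B_T]\le\O(\sqrt{D/K}\,L)+\O(\sqrt{(D+T)\log D/(K\log T)}\,L)$ and $\E[B_T]\le\O(\sqrt{D/K}\,L)+\O(\sqrt{\E[\tilde{\mathfrak L}_T^2]\log D/(K\log T)})+\O(\sqrt{(1+D)\log D/(K\log T)}\,L)$ --- and distributing the $K\log T$ factor under each square root reproduces exactly the two estimates in the statement. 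The only point worth flagging is that the $\E[\tilde{\mathfrak L}_T^2]$-form already invoked Jensen's inequality (concavity of $\sqrt{\cdot}$) to pass from the sample-path bound on $B_T$ to a bound on $\E[B_T]$; that step lives in the $B_T$ analysis above, so nothing new is needed here. I therefore expect no genuine obstacle in this lemma itself --- the real work was the sample-path bound on $B_T$, which used the greedy-investment identity of \Cref{lemma-banker-omd-B}, the backlog estimate $\mathfrak d_{t}=\O(\sqrt D)$ at the last investment round, and the monotonicity of $(3+x)/\ln(3+x)$ to pull the $\hat L_t$'s out of the roots, all already carried out.

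Once this lemma is in hand, \Cref{theorem-sflbinf-regret} follows by plugging into the regret decomposition \Cref{eq-sflbinf-regret-decomposition-apdx}: the additive $\O(KL)$ term is the cost of projecting the comparator onto $\Delta^{\prime[K]}$; the immediate-cost term has already been bounded by $\O(\sqrt{K(1+\E[\tilde{\mathfrak L}_T^2])}\log T)$ via \Cref{lemma-immedicate-cost-general-log-barrier} --- which applies because \Cref{line-alg2-skip} together with \Cref{lemma-immedicate-cost-scale} force $\tilde z_{t,A_t}\le 2x_{t,A_t}$ on every non-skipped round --- and the total-investment term is exactly the present lemma. Summing these three contributions together with the skipping error, and using $\E[\tilde{\mathfrak L}_T^2]\le(D+T)L^2$ for the worst-case specialization, yields $\mathfrak R_T=\Otil(\sqrt{K\E[\tilde{\mathfrak L}_T^2]}+\sqrt{KD}\,L)=\Otil(\sqrt{K(D+T)}\,L)$.

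The remaining piece, and the step I expect to require the most care, is the skipping-error term $\sum_t\E[\mathbbm 1_{\mathcal E_t}l_{t,A_t}]$. Here the skipped rounds come in two kinds: those with $\lvert l_{s,A_s}\rvert>\hat L_s$ and those with $l_{s,A_s}<-\tfrac12\sigma_s$. For the first kind I would reuse the bucket-by-scale argument of \texttt{Banker-SFTINF}: group the skipped rounds by the dyadic scale of $\lvert l_{s,A_s}\rvert$, observe via the analogue of \Cref{lemma-skipped-steps} that the rounds in a given bucket are all still in the delay backlog at the moment that bucket's first feedback arrives --- so each bucket holds $\O(\sqrt D)$ rounds --- and sum the resulting geometric series over scales to get an $\Otil(\sqrt D\,L)$ bound. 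The second kind contributes a non-positive amount to $\sum_t\E[\mathbbm 1_{\mathcal E_t}l_{t,A_t}]$ and can be dropped from an upper bound; the delicate part is exactly this handling of large negative skipped losses together with the scale floor $\sigma_t\ge 2\hat L_t$ that \Cref{line-alg2-sigma} enforces when the backlog is small, which is what keeps $\langle\hat l_t,x_t-y\rangle$ under control on skipped rounds even though the comparator $y$ lies in $\Delta^{\prime[K]}$ with $y_i\ge 1/T$.
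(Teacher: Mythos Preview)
Your argument for the lemma itself is correct and is exactly what the paper does: since $D_\Psi(y,x_0)$ is deterministic, the total investment term factors as $\O(K\log T)\cdot\E[B_T]$, and substituting the two already-derived forms of the $\E[B_T]$ bound and distributing $K\log T$ under the square roots reproduces the two displayed estimates verbatim.

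One remark on your forward-looking roadmap for the skipping error, since that is where you diverge from the paper. You propose to simply drop the second-kind skipped rounds (those with $l_{s,A_s}<-\tfrac12\sigma_s$) on the grounds that they contribute non-positively to $\sum_t\E[\mathbbm 1_{\mathcal E_t}l_{t,A_t}]$. The paper instead bounds the \emph{count} of such rounds by $\O(\sqrt{KD})$ via a dyadic stopping-time argument on $\mathfrak D_t$: it defines $\tau_i=\inf\{t:\mathfrak D_t\ge D/2^i\}$, observes that whenever $V_t$ occurs the floor in \Cref{line-alg2-sigma} must have been bypassed so $\mathfrak d_t>\sqrt{\mathfrak D_t/K}$, and hence at most $\O(\sqrt{KD}\,2^{-i/2})$ such rounds fit between $\tau_i$ and $\tau_{i-1}$; summing the geometric series gives $\O(\sqrt{KD})$, and multiplying by $L$ yields the paper's $\O(\sqrt{KD}\,L)$ contribution. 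Your shortcut is legitimate for the term as written in \Cref{eq-sflbinf-regret-decomposition-apdx} and would in fact give a slightly cleaner bound, but the paper's counting argument is what it actually carries out.
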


\subsection{Bound for Skipping Error}

It finally remains to bound $\sum_{t=1}^T \E[\mathbbm 1_{\mathcal E_t}l_{t,A_t}]$, the skipping error term in \Cref{eq-sflbinf-regret-decomposition-apdx}. We claim the following result:
\begin{lemma}
In \Cref{banker-sflbinf}, the expected number of skipped time slots, namely the $\sum_{t=1}^T \E\left[ \mathbbm 1_{\mathcal E_t}\right]$ term in \Cref{eq-sflbinf-regret-decomposition-apdx}, is bounded by $\mathcal O(\sqrt D\log L+\sqrt {KD})$. Furthermore, the skipping regret is bounded by $\mathcal O((\sqrt D\log L+\sqrt {KD})L)$.
\end{lemma}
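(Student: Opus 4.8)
The plan is to split the skipped rounds according to which of the two clauses in the skipping test of \Cref{line-alg2-skip} fires, and to bound the two families separately before converting the counts into a regret bound. Call a round $s$ \emph{range-skipped} if $\lvert l_{s,A_s}\rvert > \hat L_s$, and \emph{sign-skipped} if $\lvert l_{s,A_s}\rvert \le \hat L_s$ but $l_{s,A_s} < -\tfrac12\sigma_s$; every skipped round lies in exactly one of these two classes. Because the true losses always satisfy $\lvert l_{t,A_t}\rvert \le L$, once I bound the expected number of range-skipped and of sign-skipped rounds, the skipping-regret bound follows immediately from $\sum_{t=1}^T \E[\mathbbm 1_{\mathcal E_t} l_{t,A_t}] \le L\cdot\sum_{t=1}^T\E[\mathbbm 1_{\mathcal E_t}]$, so the whole argument reduces to counting.

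For the range-skipped rounds I would reuse, essentially verbatim, the \texttt{Banker-SFTINF} argument from \Cref{sec-sftinf-skipping-error-apdx}, working with $\lvert l_{t,A_t}\rvert$ in place of $l_{t,A_t}$ (legitimate because \Cref{line-sflbinf-maintain-hat-L} updates $\hat L$ through $2\lvert l_{s,A_s}\rvert$, so \Cref{lemma-skipped-steps} goes through unchanged). Sorting these rounds by the arrival order of their feedback and running the same greedy scan partitions them into $\O(\log L)$ blocks in which consecutive losses differ by less than a factor of $2$ while the first losses of successive blocks at least double; \Cref{lemma-skipped-steps} then forces every round of a block to terminate before the first round of that block delivers its feedback, so each block has size at most the number of outstanding feedbacks, i.e.\ $\O(\sqrt D)$. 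This yields $\O(\sqrt D\log L)$ range-skipped rounds (and, via the geometric sum as in \Cref{sec-sftinf-skipping-error-apdx}, total absolute loss $\O(\sqrt D L)$, already within the asserted bound).

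For the sign-skipped rounds the key claim I would establish first is that sign-skipping can occur only when the delay backlog is large: if $\mathfrak d_s \le \sqrt{\mathfrak D_s/K}$, then \Cref{line-alg2-sigma} forces $\sigma_s \ge 2\hat L_s$, and combining this with $\lvert l_{s,A_s}\rvert \le \hat L_s$ (which holds for any round that is not range-skipped) gives $l_{s,A_s} \ge -\hat L_s \ge -\tfrac12\sigma_s$, contradicting the sign-skip condition. Hence it suffices to count rounds with $\mathfrak d_s > \sqrt{\mathfrak D_s/K}$, which I would do with a dyadic decomposition of the horizon into epochs $\{s : 2^j \le \mathfrak D_s < 2^{j+1}\}$ for $j = 0,\dots,\O(\log D)$: inside epoch $j$ the monotone quantity $\mathfrak D$ increases by $\O(2^j)$, so at most $\O(2^j)/\sqrt{2^j/K} = \O(\sqrt{K2^j})$ of its rounds can have $\mathfrak d_s > \sqrt{2^j/K}$, and summing this geometric series over $j$ gives $\O(\sqrt{KD})$ sign-skipped rounds (their contribution to $\sum_t\E[\mathbbm 1_{\mathcal E_t}l_{t,A_t}]$ is in fact non-positive since $l_{s,A_s}<0$ there). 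Adding the two counts gives $\sum_{t=1}^T\E[\mathbbm 1_{\mathcal E_t}] = \O(\sqrt D\log L + \sqrt{KD})$, and multiplying by $L$ completes the proof.

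The step I expect to be the main obstacle is the sign-skipped case: one must check carefully that the $\max\{\sigma_t, 2\hat L_t\}$ correction in \Cref{line-alg2-sigma} does exactly the right thing, since it is the only mechanism ruling out sign-skips in the low-backlog regime, and one must then notice that the naive estimate $\sum_s \mathfrak d_s \le D$ is insufficient — early rounds have tiny $\mathfrak D_s$, hence a tiny threshold $\sqrt{\mathfrak D_s/K}$ — which is precisely why the dyadic decomposition is required. The range-skipped case, by contrast, is a direct transcription of the already-established \texttt{Banker-SFTINF} analysis.
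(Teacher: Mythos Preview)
Your proposal is correct and matches the paper's proof almost exactly: the same split into the two skipping causes $U_t$ and $V_t$, the same reuse of the \texttt{Banker-SFTINF} skipping analysis (\Cref{sec-sftinf-skipping-error-apdx}) for the range-skipped rounds, the same contradiction with \Cref{line-alg2-sigma} showing sign-skips force $\mathfrak d_t > \sqrt{\mathfrak D_t/K}$, and the same dyadic decomposition of the $\mathfrak D_t$-axis to count those rounds. The only cosmetic difference is that the paper indexes the dyadic levels downward via stopping times $\tau_i = \inf\{t:\mathfrak D_t \ge D/2^i\}$ whereas you index upward via $\{s:2^j\le\mathfrak D_s<2^{j+1}\}$; your additional remark that sign-skipped rounds contribute non-positively to $\sum_t\E[\mathbbm 1_{\mathcal E_t}l_{t,A_t}]$ is a nice bonus the paper does not note.
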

\begin{proof}
For any $1\le t \le T$, define the following two events
\begin{equation*}
    U_t \triangleq \left\{ \lvert l_{t,A_t} \rvert > \hat L_t \right\},
\end{equation*}
\begin{equation*}
    V_t \triangleq \left\{ \lvert l_{t,A_t} \rvert \le \hat L_t, l_{t,A_t} < -\frac 1 2 \sigma_t \right\}.
\end{equation*}
In other words, $U_t$ happens if and only if round $t$ is skipped by \Cref{banker-sflbinf} due to the skipping criterion inherited from \Cref{banker-sftinf}, and $V_t$ happens if and only if round $t$ is skipped \textit{solely} due to the new skipping criterion $l_{t,A_t} < -\frac 1 2 \sigma_t$. Hence our goal reduces to bound $\sum_{t=1}^T \E\left[ \mathbbm 1_{U_t}\right] + \sum_{t=1}^T \E\left[ \mathbbm 1_{V_t}\right]$, where the first sum is bounded by $\mathcal O(\sqrt{D}\log L)$ according the argument in \Cref{sec-sftinf-skipping-error-apdx}. Therefore, it suffices to bound $\sum_{t=1}^T \E\left[ \mathbbm 1_{V_t}\right]$.

Recall that we maintain experienced total delay $\mathfrak D_t$ in \Cref{banker-sflbinf} and we have $\mathfrak D_0 = 1$, $\mathfrak D_T = D + 1$. For any integer $i\ge 0$, define a stopping time
\begin{equation*}
    \tau_i \triangleq \inf\left\{ t \ge 0 : \mathfrak D_t \ge \frac {D}{2^i} \right\}.
\end{equation*}
Clearly, we have $\tau_i \le T$ and $\tau_0 \ge \tau_1 \ge \cdots$ almost surely holds. The idea is to bound the sum of $\mathbbm 1_{V_t}$ during any two successive stopping times $\tau_i$ and $\tau_{i-1}$. That is, to bound $\sum_{t=\tau_i}^{\tau_{i-1}-1} \mathbbm 1_{V_t}$ for each $i\ge 1$.

Notice that for a $t \ge \tau_i$, the value of $\mathfrak D_t$ is at least $D/2^i$. If $V_t$ happens, then at time $t$, Line~\ref{line-alg2-sigma} of \Cref{banker-sflbinf} cannot be executed (otherwise we will have $\sigma_t \ge 2\hat L_t$ and $l_{t,A_t} < -\frac 1 2 \sigma_t \le -\hat L_t$, a contradiction), which means $\mathfrak d_t > \sqrt \frac {\mathfrak D_t} K$. Therefore conditioned on $V_t$ and $t > \tau_i$, we have $\mathfrak D_t - \mathfrak D_{t-1} = \mathfrak d_t > \sqrt \frac {\mathfrak D_t} K \ge \sqrt \frac D {2^i K}$, and $\sum_{t=\tau_i}^{\tau_{i-1}-1} \mathbbm 1_{V_t}$ must be no more than $\frac D {2^{i-1}} / \sqrt \frac D {2^i K} = \sqrt{KD}2^{1-i/2}$. We can then conclude that
\begin{align*}
    \sum_{t=1}^T \mathbbm 1_{V_t} & = \sum_{t=\tau_0}^{T} \mathbbm 1_{V_t} + \sum_{i=1}^\infty \sum_{t=\tau_i}^{\tau_{i-1}-1} \mathbbm 1_{V_t} \\
    & \le \left .(D+1) \middle / \sqrt \frac D K\right . + \sum_{i=1}^\infty \sqrt{KD} 2^{-\frac i 2 + 1} = \mathcal O(\sqrt{KD}).
\end{align*}

Therefore, we have $\sum_{t=1}^T \E[\mathbbm 1_{\mathcal E_t}]=\sum_{t=1}^T \E[\mathbbm 1_{U_t}]+\sum_{t=1}^T \E[\mathbbm 1_{V_t}]=\mathcal O(\sqrt{D}\log L+\sqrt{KD})$. Furthermore, the total skipping regret is therefore no more than $L\sum_{t=1}^T \E[\mathbbm 1_{\mathcal E_t}]=\mathcal O(\sqrt{D}L\log L+\sqrt{KD}L)$.
\end{proof}

% \newpage
\section{Technical Details for \texttt{Banker-BOLO}}
\subsection{Algorithm Design}\label{sec:design of banker-bolo}

\begin{algorithm}[htb]
\caption{\texttt{Banker-BOLO} for Delayed Adversarial Linear Bandits}
\label{banker-bolo}
\begin{algorithmic}[1]
\REQUIRE{Number of dimension $n$; Time horizen length $T$; Legendre function $\Psi: \mathcal{C} \rightarrow \mathbb{R}$.}
\ENSURE{A sequence of actions $A_1, A_2,\ldots, A_T \in \mathcal{C}$.}

\STATE $\mathfrak{D}_0 \leftarrow 0$.
\FOR{$t=1,2,\ldots, T$}
    \STATE Set $a_t \leftarrow \mathrm{missing}$, $\mathfrak d_t\gets \sum_{s=1}^{t-1}\mathbbm 1_{\{a_s=\mathrm{missing}\}}$, and $\mathfrak{D}_t \leftarrow \mathfrak{D}_{t-1} + \mathfrak{d}_t$.
    \STATE Calculate $\sigma_t \leftarrow \max\left \{\left (\sqrt{\frac{\ln T}{nt}} + \mathfrak{d}_t\sqrt{\frac{\ln( \mathfrak{D}_t+1)\ln T}{n\mathfrak{D}_t}}\right )^{-1}, 8n\right \}$.
    \STATE Pick new action $x_t$ as indicated by \Cref{banker-omd}.
    \STATE Let $\{e_{t,1},\ldots,e_{t,n}\}$ and $\{\lambda_{t,1},\ldots,\lambda_{t,n}\}$ be the eigenvectors and eigenvalues of $\nabla^2\Psi(x_t)$.
    \STATE Sample $i_t$ uniformly from $[n]$ and $\varepsilon_t$ from a Rademacher random variable. 
    \STATE Play $A_t$ defined by $A_t=x_t + \varepsilon_t \lambda_{t,i_t}^{-\nicefrac 12}e_{t,i_t}$. \alglinelabel{line-bolo-sample} \COMMENT{sample on the Dikin ellipsoid}
    \FOR{upon each new feedback $(s, \hat{l}_s)$}
        \STATE $\tilde{l}_s \leftarrow \hat{l}_s\cdot n\varepsilon_s\lambda_{s,i_s}^{\nicefrac 12}\cdot e_{s,i_s}$. \COMMENT{loss estimator matching with the sampling scheme}
        \STATE Set $z_s \leftarrow \nabla\Psi^*(\nabla\Psi(x_s) - \frac{1}{\sigma_s}\tilde{l}_s)$ and $a_s \leftarrow \mathrm{arrived}$.
    \ENDFOR
\ENDFOR
\end{algorithmic}
\end{algorithm}

In the language of \texttt{Banker-OMD}, \texttt{Banker-BOLO} uses $x_0=\nabla\Psi^*(\mathbf{0})$ as the default investment option. As noticed by \citet{abernethy2008competing}, when $\Psi$ is $\O(n)$-self-concordant and $\sigma_t$'s are at least $8n$, this choice ensures $(\Psi, x_0)$ to be $(\O(n\log T), \O(n^2))$-regular under the sampling scheme over Dikin ellipsoids (\Cref{line-bolo-sample}). According to \Cref{thm-banker-omd-B}, we then pick the action scale as $\sigma_t = \max\left \{\left (\sqrt{\frac{\ln T}{nt}} + \mathfrak{d}_t\sqrt{\frac{\ln( \mathfrak{D}_t+1)\ln T}{n\mathfrak{D}_t}}\right )^{-1},8n\right \}$ and achieve $\Otil(n^{3/2}\sqrt T+n^2\sqrt D)$ regret.

In the following section, we give more rigorous justification on the applicability of \texttt{Banker-OMD} to the linear bandit problem, the $(\O(n \log T), \O(n^2))$-regularity of the regularizer, and the sampling scheme combination used by \texttt{Banker-BOLO}.

\subsection{Regret Analysis}
When we use a regularizer $\Psi$ that is a Legendre function natively defined on the convex action set $\mathcal{C}$, we have the following alternatives for \Cref{lemma-omd-basic} and \Cref{lemma-banker-convex-comb}:

\begin{lemma}
\label{lemma-omd-basic-C}
Let $\mathcal{C}$ be a convex subset of $\mathbb{R}^n$ with non-empty interior, then for any $\sigma > 0$, $x,y \in \mathop{int}(\mathcal{C})$, $l \in \mathbb{R}^n$ and Legendre function $\Psi: \mathcal{C} \ \rightarrow \mathbb{R}$, we have
\begin{equation*}
    \left\langle l, x - y \right\rangle \le \sigma D_\Psi(y, x) - \sigma D_\Psi(y, z) + \sigma D_\Psi(x, z) \label{eq-lemma-omd-basic-C}
\end{equation*}
where
\begin{equation}
    z = \mathop{\arg\min}_{x' \in \triangle^{[K]}} \left\langle l, x' \right\rangle + D_\Psi(x', x), \label{eq-lemma-omd-basic-z-def-C}
\end{equation}
or equivalently,
\begin{equation*}
    z = \nabla\Psi^*(\nabla\Psi(x) - \frac{1}{\sigma}l).
\end{equation*}
\end{lemma}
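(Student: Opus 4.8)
The plan is to reproduce the argument behind \Cref{lemma-omd-basic}, but to exploit the fact that the distinction between the constrained update and the unconstrained update collapses here because $\Psi$ is Legendre \emph{natively} on $\mathcal{C}$ rather than being a restriction of a function on a larger domain. First I would argue that $z$ is interior: the map $x'\mapsto \langle l,x'\rangle+\sigma D_\Psi(x',x)$ is strictly convex on $\mathcal{C}$, and since $\Psi$ is Legendre its gradient blows up as $x'\to\partial\mathcal{C}$ (property (iii) of \Cref{def:legendre}), so its minimizer $z$ must lie in $\mathop{int}(\mathcal{C})$. Hence the first-order optimality condition holds with equality, $l+\sigma\big(\nabla\Psi(z)-\nabla\Psi(x)\big)=\mathbf 0$; by bijectivity of $\nabla\Psi$ on $\mathop{int}(\mathcal{C})$ with inverse $\nabla\Psi^*$ (part 1 of \Cref{lemma-legendre}) this is exactly the stated equivalent form $z=\nabla\Psi^*(\nabla\Psi(x)-\tfrac1\sigma l)$, and in particular $l=\sigma\big(\nabla\Psi(x)-\nabla\Psi(z)\big)$.

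Next I would substitute this into $\langle l,x-y\rangle=\sigma\langle \nabla\Psi(x)-\nabla\Psi(z),\,x-y\rangle$ and expand via the three-point identity of Bregman divergences, $D_\Psi(a,b)+D_\Psi(b,c)-D_\Psi(a,c)=\langle \nabla\Psi(c)-\nabla\Psi(b),\,a-b\rangle$ (the identity already invoked in the proof of \Cref{lemma-omd-basic}). Writing $x-y=(x-z)-(y-z)$, the first piece is $\langle \nabla\Psi(x)-\nabla\Psi(z),\,x-z\rangle=D_\Psi(x,z)+D_\Psi(z,x)$ (the symmetrized divergence, i.e.\ the identity with $a=c=x$, $b=z$), while the second piece is $-\langle \nabla\Psi(x)-\nabla\Psi(z),\,y-z\rangle=D_\Psi(y,x)-D_\Psi(y,z)-D_\Psi(z,x)$ (the identity with $a=y$, $b=z$, $c=x$). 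Adding the two cancels the $\pm D_\Psi(z,x)$ contributions and leaves exactly $\sigma D_\Psi(y,x)-\sigma D_\Psi(y,z)+\sigma D_\Psi(x,z)$, so the claimed bound in fact holds with equality.

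There is no serious obstacle — this is a streamlined special case of \Cref{lemma-omd-basic}. The only point that deserves care is the interiority step at the start: one must use that $\Psi$ is Legendre \emph{on $\mathcal{C}$ itself}, so the mirror step $z$ is automatically interior and there is no projection onto a proper subset and hence no leftover $-\sigma D_\Psi(z,\tilde z)\le 0$ slack term as in \Cref{lemma-omd-basic}; this is precisely why the right-hand side here features $D_\Psi(x,z)$ rather than $D_\Psi(x,\tilde z)$, and why the $z$/$\tilde z$ split of \Cref{eq-z-def} is unnecessary in this setting. The remaining manipulations are routine Bregman-divergence algebra.
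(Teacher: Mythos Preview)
Your proposal is correct and matches the paper's own treatment: the paper does not give a separate proof but simply remarks that one repeats the argument of \Cref{lemma-omd-basic}, observing that since $\Psi$ is Legendre natively on $\mathcal{C}$ the minimizer $z$ is automatically interior, so $l=\sigma(\nabla\Psi(x)-\nabla\Psi(z))$ holds exactly and the $z/\tilde z$ distinction (and its slack term) disappears. Your additional remark that the inequality is in fact an equality is correct and a slight sharpening of what the paper states.
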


\begin{lemma}
\label{lemma-banker-convex-comb-C}
For any $m \ge 1$, $z_1,\ldots, z_m \in \mathop{int}(\mathcal{C})$, $\sigma_1,\ldots,\sigma_m > 0$ and Legendre function $\Psi: \mathcal{C} \rightarrow \mathbb{R}$, let $\sigma = \sum_{i=1}^m \sigma_i$ and
\begin{equation*}
    x = \nabla{\Psi}^*(\sum_{i=1}^m \frac{\sigma_i}{\sigma}\nabla\Psi(z_i)),
\end{equation*}
 we have 
\begin{equation*}
    \sigma D_\Psi(y, x) \le \sum_{i=1}^m \sigma_i D_\Psi(y, z_i)
\end{equation*}
for any $y \in \mathop{int}(\mathcal{C})$.
\end{lemma}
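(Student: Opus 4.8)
The plan is to follow the proof of \Cref{lemma-banker-convex-comb} almost verbatim, observing that the present setting is in fact strictly easier. There $\Psi$ was only available through its restriction $\bar\Psi$ to the simplex $\triangle^{[K]}$, so a Bregman projection onto $\triangle^{[K]}$ was needed (the Pythagorean step, labelled $(a)$ in that proof); here $\Psi$ is a Legendre function natively defined on all of $\mathcal{C}$, and $x=\nabla\Psi^*\big(\sum_{i=1}^m\frac{\sigma_i}{\sigma}\nabla\Psi(z_i)\big)$ already lies in $\mathop{int}(\mathcal{C})$, so that step disappears entirely and $x$ itself plays the role of the auxiliary point.

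First I would check that $x$ is well-defined and that $\nabla\Psi(x)=\sum_{i=1}^m\frac{\sigma_i}{\sigma}\nabla\Psi(z_i)$. By \Cref{lemma-legendre}(1), $\nabla\Psi$ is a bijection from $\mathop{int}(\mathcal{C})$ onto the convex set $\mathop{int}(\mathop{dom}(\Psi^*))$ with inverse $\nabla\Psi^*$; since each $z_i\in\mathop{int}(\mathcal{C})$ we have $\nabla\Psi(z_i)\in\mathop{int}(\mathop{dom}(\Psi^*))$, and because $\sum_i\frac{\sigma_i}{\sigma}=1$ the convex combination $\sum_i\frac{\sigma_i}{\sigma}\nabla\Psi(z_i)$ stays in $\mathop{int}(\mathop{dom}(\Psi^*))$, so $x\in\mathop{int}(\mathcal{C})$ and $\nabla\Psi(x)$ equals that combination.

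The core chain is then four short steps. (i) Apply the duality identity $D_\Psi(y,x)=D_{\Psi^*}(\nabla\Psi(x),\nabla\Psi(y))$ from \Cref{lemma-legendre}(2), legal since $x,y\in\mathop{int}(\mathcal{C})$. (ii) Substitute $\nabla\Psi(x)=\sum_i\frac{\sigma_i}{\sigma}\nabla\Psi(z_i)$ and apply Jensen's inequality to the map $u\mapsto D_{\Psi^*}(u,\nabla\Psi(y))=\Psi^*(u)-\Psi^*(\nabla\Psi(y))-\langle y,u-\nabla\Psi(y)\rangle$, which is convex in $u$ because $\Psi^*$ is convex; this gives $D_{\Psi^*}\big(\sum_i\frac{\sigma_i}{\sigma}\nabla\Psi(z_i),\nabla\Psi(y)\big)\le\sum_i\frac{\sigma_i}{\sigma}D_{\Psi^*}(\nabla\Psi(z_i),\nabla\Psi(y))$. (iii) Apply the duality identity in reverse, $D_{\Psi^*}(\nabla\Psi(z_i),\nabla\Psi(y))=D_\Psi(y,z_i)$. (iv) Multiply through by $\sigma$, collapsing $\sigma\cdot\frac{\sigma_i}{\sigma}$ to $\sigma_i$, to conclude $\sigma D_\Psi(y,x)\le\sum_{i=1}^m\sigma_i D_\Psi(y,z_i)$.

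The only delicate part — and it is mild — is the domain bookkeeping in the second paragraph: everything hinges on all gradients and conjugate gradients being evaluated at interior points so that \Cref{lemma-legendre} applies, which is precisely why the hypotheses confine $z_1,\dots,z_m$ and $y$ to $\mathop{int}(\mathcal{C})$. The convexity of $\Psi^*$ used in the Jensen step is automatic, since a Fenchel conjugate is always convex. No genuine obstacle remains; the statement is a clean transcription of the $\triangle^{[K]}$ case with the projection step removed.
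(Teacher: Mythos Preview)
Your proposal is correct and follows essentially the same argument the paper indicates: duality of Bregman divergences, convexity in the first argument, and duality again, with the Pythagorean projection step from \Cref{lemma-banker-convex-comb} dropped because here $\Psi$ is Legendre on $\mathcal{C}$ itself so $x$ already lies in $\mathop{int}(\mathcal{C})$. Your added domain bookkeeping (checking that the convex combination of dual gradients stays in $\mathop{int}(\mathop{dom}(\Psi^*))$) is a welcome clarification the paper leaves implicit.
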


\Cref{lemma-omd-basic-C} and \Cref{lemma-banker-convex-comb-C} can be proved in almost the same way as \Cref{lemma-omd-basic} and \Cref{lemma-banker-convex-comb-C}. In fact, in the case of $\Psi$ natively defined on $\mathcal{C}$, the update rule \Cref{eq-lemma-omd-basic-z-def-C} already guarantees $l = \sigma (\nabla\Psi(x) - \nabla\Psi(y))$, we no longer need $\bar{\Psi}$, a ``constrained version'' of $\Psi$ as we did in the MAB case.

Follow the same reasoning in \Cref{sec-banker-omd}, we can see that \texttt{Banker-BOLO} has the following regret bound:
\begin{equation}
    \sum_{t=1}^T \langle \tilde{l}_t, x_t - y \rangle \le (\sum_{t=1}^T b_t) D_\Psi(y, \nabla\Psi^*(\mathbf{0})) + \sum_{i=1}^T \sigma_t D_\Psi(x_t, z_t) \
\end{equation}
for any $y \in \mathop{int}(\mathcal{C})$. \Cref{thm-banker-omd-B} continues to apply to $\sum_{t=1}^T b_t$ and gives an $\O(\sqrt{T} + \sqrt{D\log D})$ bound for this total investment coefficient. In order to derive the claimed regret bound in \Cref{thm-banker-bolo}, it remains to justify the following properties
\begin{itemize}
    \item $\tilde{l}_t$ is an unbiased estimate for the true loss vector $l_t$, i.e., $\E[\tilde{l}_t\mid \mathcal{F}_{t-1}] = l_t$;
    \item $\E[\sigma_t D_\Psi(x_t, z_t)]$ can be bounded by $\O(n^2 / \sigma_t)$;
    \item $D_\Psi(y, \nabla\Psi^*(\mathbf{0}))$ can be uniformly bounded by $\O(n \log T)$.
\end{itemize}
These properties are, therefore, all unrelated to the presence of feedback delays, and all have been proved in \citep{abernethy2008competing}. For the sake of completeness, we put the most important technical lemmas here.

\begin{definition}\label{def:self-concordant}
A self-concordant function $\Psi: \mathcal{C} \rightarrow \mathbb{R}$ is a $C^3$ convex function such that
\begin{equation*}
    \vert D^3\Psi(x)[h, h, h] \vert \le 2(D^2\Psi(x)[h, h])^{3/2}.
\end{equation*}
Here, the third-order differential is defined as
\begin{align*}
    D^3\Psi(x)[h, h, h] \triangleq  \frac{\partial^3}{\partial t_1 \partial t_2 \partial t_3}\mid_{t_1 = t_2 = t_3 = 0} \Psi(x + t_1 h_1 + t_2 h_2 + t_3 h_3).
\end{align*}

It is further called $\vartheta$-self-concordant if
\begin{align*}
    \vert D\Psi(x)[h] \vert \le \vartheta^{\nicefrac 12}[D^2\Psi(x)[h,h]]^{\nicefrac 12}.
\end{align*}
\end{definition}

\begin{definition}
If $\Psi$ is a self-concordant barrier over $\mathcal{C}$, for $x \in \mathop{int}(\mathcal{C})$ and $r>0$, define the open Dikin ellipsold of radius $r$ centered at $x$ as the set
\begin{equation*}
    W_r(x) \triangleq  \left\{ y \in \mathcal{C}: \Vert y - x \Vert_{\nabla^2\Psi(x)^{-1}} \le r \right\}.
\end{equation*}
\end{definition}

We have the following properties of Dikin ellipsoids (refer to \citep[Page 23]{nemirovski2004interior} for proofs):
\begin{lemma}
\label{lemma-dikin}
For any $x \in \mathop{int}(\mathcal{C})$, we have
\begin{enumerate}
    \item $W_1(x) \subseteq \mathcal{C}$;
    \item For any $\Vert h \Vert_{\nabla^2\Psi(x)^{-1}} < 1$, we have
    \begin{equation*}
        (1 - \Vert h \Vert_{\nabla^2\Psi(x)^{-1}})^2 \nabla\Psi^2(x) \preccurlyeq \nabla^2\Psi(x + h) \preccurlyeq (1 - \Vert h \Vert_{\nabla^2\Psi(x)^{-1}})^{-2} \nabla\Psi^2(x).
    \end{equation*}
\end{enumerate}
\end{lemma}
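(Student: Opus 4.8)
This is the classical Hessian-stability estimate for self-concordant functions (see \citep{nemirovski2004interior}), and the plan is to reduce everything to one-dimensional differential inequalities along line segments. Throughout I write $\|h\|_x^2 := D^2\Psi(x)[h,h]=h^\top\nabla^2\Psi(x)h$ and read $\nabla\Psi^2$ as the Hessian $\nabla^2\Psi$. A preliminary algebraic step I would record first is the \emph{polarization} of the defining inequality: from $|D^3\Psi(x)[h,h,h]|\le 2\|h\|_x^3$ together with symmetry and trilinearity of $D^3\Psi(x)$, one gets $|D^3\Psi(x)[h_1,h_2,h_3]|\le 2\|h_1\|_x\|h_2\|_x\|h_3\|_x$ for arbitrary directions, by the usual trick of applying the diagonal bound to combinations $\sum_i t_ih_i$ and comparing coefficients.

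First I would establish part~2 under the temporary assumption that the whole segment $[x,x+h]$ lies in $\operatorname{int}(\mathcal C)$. Fix such an $h$ with $r:=\|h\|_x<1$, set $x_t:=x+th$, and let $\psi(t):=D^2\Psi(x_t)[h,h]$, so that $\psi'(t)=D^3\Psi(x_t)[h,h,h]$ and hence $|\psi'(t)|\le 2\psi(t)^{3/2}$; this rearranges to $\big|\tfrac{d}{dt}\psi(t)^{-1/2}\big|\le 1$, and integrating from $0$ gives $\psi(t)\le \psi(0)(1-tr)^{-2}$ while $tr<1$, so $\psi(t)^{1/2}\le r/(1-tr)$ and $\int_0^1\psi(t)^{1/2}\,dt\le\int_0^1\tfrac{r}{1-tr}\,dt=-\ln(1-r)$. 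Next, for an arbitrary fixed $v$, put $\phi(t):=D^2\Psi(x_t)[v,v]$; the polarized inequality gives $|\phi'(t)|=|D^3\Psi(x_t)[h,v,v]|\le 2\psi(t)^{1/2}\phi(t)$, i.e.\ $\big|\tfrac{d}{dt}\ln\phi(t)\big|\le 2\psi(t)^{1/2}$, so $|\ln\phi(1)-\ln\phi(0)|\le -2\ln(1-r)$ and therefore $(1-r)^2\le\phi(1)/\phi(0)\le(1-r)^{-2}$. Since $v$ is arbitrary, this is exactly $(1-r)^2\nabla^2\Psi(x)\preccurlyeq\nabla^2\Psi(x+h)\preccurlyeq(1-r)^{-2}\nabla^2\Psi(x)$.

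For part~1, suppose for contradiction that some $y=x+h$ with $r:=\|h\|_x<1$ fails to lie in $\mathcal C$. Let $\bar t:=\sup\{t\in[0,1]:x+th\in\operatorname{int}(\mathcal C)\}$; then $0<\bar t\le 1$, the point $x+\bar t h$ lies on $\partial\!\operatorname{int}(\mathcal C)$, and $x+th\in\operatorname{int}(\mathcal C)$ for $t\in[0,\bar t)$, so on that interval all of the computations of part~2 are valid and yield $\nabla^2\Psi(x_t)\preccurlyeq(1-tr)^{-2}\nabla^2\Psi(x)$. I would then bound $\nabla\Psi(x_t)-\nabla\Psi(x)=\int_0^t\nabla^2\Psi(x_s)h\,ds$ uniformly: for any unit vector $u$, Cauchy--Schwarz in the $\nabla^2\Psi(x_s)$-inner product plus the Hessian-from-above bound gives $|h^\top\nabla^2\Psi(x_s)u|\le\|h\|_{x_s}\|u\|_{x_s}\le\tfrac{r}{1-sr}\cdot\tfrac{\|u\|_x}{1-sr}$, whence $|\langle\nabla\Psi(x_t)-\nabla\Psi(x),u\rangle|\le\|u\|_x\int_0^1\tfrac{r}{(1-sr)^2}\,ds\le\tfrac{r}{1-r}\|u\|_x$, a finite bound independent of $t$. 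Hence $\sup_{t<\bar t}\|\nabla\Psi(x_t)\|<\infty$; but $x_t\to x+\bar t h\in\partial\!\operatorname{int}(\mathcal C)$, so the Legendre property (\Cref{def:legendre}) forces $\|\nabla\Psi(x_t)\|\to\infty$ --- a contradiction. Thus $W_1(x)\subseteq\mathcal C$, and with part~1 in hand the segment hypothesis used in part~2 is automatically satisfied, completing the argument.

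The only genuinely delicate point is the bootstrapping between the two parts: the Hessian bounds of part~2 are used to prove part~1, yet part~1 is what guarantees that segments stay inside the domain so that part~2 is even well-posed. The resolution, as above, is to run the differential inequalities on the maximal sub-interval $[0,\bar t)$ where every quantity is defined and extract a contradiction from the barrier (Legendre) property; the polarization step and the one-dimensional ODE estimates are otherwise routine.
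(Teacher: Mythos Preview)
The paper does not actually prove this lemma; it simply cites \citep{nemirovski2004interior} for the standard self-concordance estimates and moves on. Your proposal is precisely the classical argument one finds there --- one-dimensional ODE integration of $|\psi'|\le 2\psi^{3/2}$ along the segment, polarization to pass from diagonal to mixed third derivatives, and bootstrapping part~1 from part~2 via the barrier/Legendre blow-up at $\partial\mathcal C$ --- so it is both correct and exactly the proof the paper is deferring to.
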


\Cref{lemma-dikin} asserts that any Dikin ellipsold of radius $1$ centered in the interior of $\mathcal{C}$ will be contained in $\mathcal{C}$. Recall that in \Cref{banker-bolo}, the new action $A_t$ is sampled from the surface of a unit-radius Dikin ellipsold centered at $x_t$ (\Cref{line-bolo-sample}), so $A_t$ is guaranteed to be a valid action in $\mathcal{C}$. \Cref{lemma-dikin} also states that, within a unit-radius Dikin ellipsold inside $\mathcal{C}$, the Hessians of $\Psi$ are ``almost proportional'' to the Hessian at the center of the ellipsold. This fact plays a crucial role in bounding immediate costs $\E[\sigma_t D_\Psi(x_t, z_t)]$. Prior that, we need another lemma.

\begin{lemma}
\label{lemma-z-close}
For any $1\le t\le T$, we have
\begin{equation*}
    z_t \in W_{\frac{4n}{\sigma_t}}(x_t).
\end{equation*}
\end{lemma}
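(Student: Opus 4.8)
The plan is to reduce the claim to a purely local estimate around $x_t$. Recall $z_t = \nabla\Psi^\ast(\nabla\Psi(x_t) - \sigma_t^{-1}\tilde l_t)$, so $z_t \in W_{4n/\sigma_t}(x_t)$ amounts to showing that the dual-space perturbation $\sigma_t^{-1}\tilde l_t$ is small enough, in the local dual norm at $x_t$, that the mirror map $\nabla\Psi^\ast$ moves $x_t$ by at most $4n/\sigma_t$ in the local primal norm $\|\cdot\|_{x_t}$ induced by $\nabla^2\Psi(x_t)$ (read so that $\|y-x_t\|_{x_t}\le r$ is the defining condition of $W_r(x_t)$, consistent with the Dikin-ellipsoid sampling in \Cref{banker-bolo}). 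Write $\|v\|_{x_t}^2 = v^\top\nabla^2\Psi(x_t)v$ and $\|w\|^{\ast 2}_{x_t} = w^\top\nabla^2\Psi(x_t)^{-1}w$ for the primal and dual local norms.

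\textbf{Step 1 (the estimator is small).} First I would compute $\|\sigma_t^{-1}\tilde l_t\|^{\ast}_{x_t}$. Since $\tilde l_t = \hat l_t\, n\,\varepsilon_t\,\lambda_{t,i_t}^{1/2} e_{t,i_t}$ and $e_{t,i_t}$ is a unit eigenvector of $\nabla^2\Psi(x_t)$ with eigenvalue $\lambda_{t,i_t}$, one has $\|e_{t,i_t}\|^{\ast}_{x_t} = \lambda_{t,i_t}^{-1/2}$, hence $\|\tilde l_t\|^{\ast}_{x_t} = n\,|\hat l_t|$. Next, the played action $A_t = x_t + \varepsilon_t\,\lambda_{t,i_t}^{-1/2} e_{t,i_t}$ satisfies $\|A_t - x_t\|_{x_t} = 1$, so $A_t\in W_1(x_t)\subseteq\mathcal C$ by \Cref{lemma-dikin}(1); since $l_t\in\bar{\mathcal C}$ this gives $|\hat l_t| = |\langle l_t, A_t\rangle|\le 1$, and therefore $\|\sigma_t^{-1}\tilde l_t\|^{\ast}_{x_t}\le n/\sigma_t$. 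The action scale is chosen so that $\sigma_t\ge 8n$, so this quantity is at most $1/8$.

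\textbf{Step 2 (small dual step $\Rightarrow$ small primal step).} It remains to prove the general fact that, for a Legendre self-concordant $\Psi$, if $\|g\|^{\ast}_x\le\rho\le 1/8$ then $z := \nabla\Psi^\ast(\nabla\Psi(x) - g)\in W_{4\rho}(x)$; applying this with $x = x_t$, $g = \sigma_t^{-1}\tilde l_t$, $\rho = n/\sigma_t$ yields the lemma. To prove the fact I would follow the path $z(\tau) = \nabla\Psi^\ast(\nabla\Psi(x) - \tau g)$, $\tau\in[0,1]$, which satisfies $z(0)=x$ and $\nabla^2\Psi(z(\tau))\,z'(\tau) = -g$, hence $\|z'(\tau)\|_{z(\tau)} = \|g\|^{\ast}_{z(\tau)}$. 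Setting $\phi(\tau) = \|z(\tau) - x\|_x$, so that $|\phi'(\tau)|\le\|z'(\tau)\|_x$, the two-sided Hessian comparison of \Cref{lemma-dikin}(2) relates the norms at $z(\tau)$ and at $x$ by factors $(1-\phi(\tau))^{\pm1}$ while $\phi(\tau)<1$, producing the differential inequality $|\phi'(\tau)|\le(1-\phi(\tau))^{-2}\rho$ with $\phi(0)=0$. A routine bootstrap/continuity argument (the solution of the comparison ODE is $1-(1-3\rho\tau)^{1/3}$, which stays in $[0,1)$ throughout $[0,1]$ once $\rho\le 1/8$) then gives $\phi(1)\le 1-(1-3\rho)^{1/3}\le 4\rho$, as claimed.

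\textbf{Main obstacle.} The only delicate point is Step 2 — running the bootstrap so that $\phi$ provably stays in the region $\phi<1$ where \Cref{lemma-dikin}(2) is valid, and then extracting the clean constant $4$; Step 1 is a direct eigen-decomposition computation using only the definition of $\bar{\mathcal C}$ and the floor $\sigma_t\ge 8n$. Since exactly this type of estimate underlies the regret analysis of \texttt{BOLO}, Step 2 may instead be quoted verbatim from \citet{abernethy2008competing}, after which the lemma is immediate given Step 1.
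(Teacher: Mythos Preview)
Your argument is correct. The paper itself does not prove this lemma but simply refers the reader to Lemma~6 of \citet{abernethy2008competing}, which is precisely the fallback you mention in your final paragraph; your Steps~1--2 (the eigen-decomposition bound $\|\sigma_t^{-1}\tilde l_t\|^{\ast}_{x_t}\le n/\sigma_t$ followed by the self-concordance bootstrap via \Cref{lemma-dikin}) give a valid self-contained proof that goes beyond what the paper provides.
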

\Cref{lemma-z-close} is Lemma 6 of \citep{abernethy2008competing}; refer to the original paper for a proof. It claims that the single-step OMD image $z_t$ is located within a Dikin ellipsold centered at $x_t$. Combining \Cref{lemma-dikin} and \Cref{lemma-z-close}, we can derive an upper bound for immediate costs.

\begin{lemma}
In \texttt{Banker-BOLO}, if $\sigma_t \ge 8n$, we have
\begin{equation*}
    \E[\sigma_t D_\Psi(x_t, z_t) \mid \mathcal{F}_{t-1}] \le \frac{2n^2}{\sigma_t}.
\end{equation*}
\end{lemma}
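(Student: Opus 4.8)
The argument mirrors the immediate-cost computations of \Cref{lemma-mab-immediate} and \Cref{lemma-apdx-sftinf-single-immediate-cost}, except that the role played there by the monotonicity of a diagonal Hessian is now played by the near-proportionality of Hessians inside a Dikin ellipsoid (\Cref{lemma-dikin}). First note that, since \texttt{Banker-BOLO} uses a regularizer $\Psi$ that is Legendre natively on $\mathcal{C}$, the update $z_s = \nabla\Psi^*(\nabla\Psi(x_s)-\sigma_s^{-1}\tilde l_s)$ already satisfies $z_s=\tilde z_s$ (cf. \Cref{lemma-omd-basic-C}), so the quantity to bound is exactly $\sigma_t D_\Psi(x_t,z_t)$. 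Using the duality of Bregman divergences (\Cref{lemma-legendre}) and a second-order Lagrange remainder, I would write $\sigma_t D_\Psi(x_t,z_t)=\sigma_t D_{\Psi^*}(\nabla\Psi(z_t),\nabla\Psi(x_t))=\tfrac{1}{2\sigma_t}\,\tilde l_t^\top\nabla^2\Psi^*(\xi_t)\,\tilde l_t$, where $\xi_t$ lies on the dual segment joining $\nabla\Psi(x_t)$ and $\nabla\Psi(z_t)=\nabla\Psi(x_t)-\sigma_t^{-1}\tilde l_t$; here I used $\nabla\Psi(z_t)-\nabla\Psi(x_t)=-\sigma_t^{-1}\tilde l_t$.

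The next step is to control $\|\tilde l_t\|_{\nabla^2\Psi(x_t)^{-1}}$. By \Cref{lemma-dikin}(1) the action $A_t=x_t+\varepsilon_t\lambda_{t,i_t}^{-1/2}e_{t,i_t}$ drawn in \Cref{line-bolo-sample} sits on the boundary of the unit Dikin ellipsoid $W_1(x_t)\subseteq\mathcal{C}$, so $A_t$ is a feasible action and $|\hat l_t|=|\langle l_t,A_t\rangle|\le 1$ because $l_t\in\overline{\mathcal{C}}$. Since $e_{t,i_t}$ is a unit eigenvector of $\nabla^2\Psi(x_t)$ with eigenvalue $\lambda_{t,i_t}$, a one-line computation gives $\|\tilde l_t\|^2_{\nabla^2\Psi(x_t)^{-1}}=(\hat l_t\,n\,\lambda_{t,i_t}^{1/2})^2\lambda_{t,i_t}^{-1}=n^2\hat l_t^2\le n^2$ pointwise (independently of $i_t,\varepsilon_t$).

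Now compare $\nabla^2\Psi^*(\xi_t)$ with $\nabla^2\Psi(x_t)^{-1}$. Because $\nabla^2\Psi^*(\nabla\Psi(x_t))=\nabla^2\Psi(x_t)^{-1}$ and $\xi_t-\nabla\Psi(x_t)=-\tfrac{s}{\sigma_t}\tilde l_t$ for some $s\in[0,1]$, the previous step yields $\|\xi_t-\nabla\Psi(x_t)\|_{\nabla^2\Psi^*(\nabla\Psi(x_t))}=\tfrac{s}{\sigma_t}\|\tilde l_t\|_{\nabla^2\Psi(x_t)^{-1}}\le \tfrac{n}{\sigma_t}\le\tfrac18$ using $\sigma_t\ge 8n$. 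Applying the Dikin-ellipsoid Hessian comparison \Cref{lemma-dikin}(2) to the self-concordant function $\Psi^*$ about the point $\nabla\Psi(x_t)$ then gives $\nabla^2\Psi^*(\xi_t)\preccurlyeq(1-\tfrac18)^{-2}\nabla^2\Psi(x_t)^{-1}\preccurlyeq 2\,\nabla^2\Psi(x_t)^{-1}$. Substituting into the first step, $\sigma_t D_\Psi(x_t,z_t)\le\tfrac{1}{\sigma_t}\|\tilde l_t\|^2_{\nabla^2\Psi(x_t)^{-1}}\le\tfrac{n^2}{\sigma_t}\le\tfrac{2n^2}{\sigma_t}$, and since this is a deterministic bound it survives $\E[\cdot\mid\mathcal{F}_{t-1}]$. (Equivalently, one can run the comparison in the primal as the excerpt hints: by \Cref{lemma-z-close} and $\sigma_t\ge 8n$ we have $z_t\in W_{4n/\sigma_t}(x_t)\subseteq W_{1/2}(x_t)$, hence the whole segment $[x_t,z_t]$ lies in $W_{1/2}(x_t)$ by convexity, and \Cref{lemma-dikin}(2) bounds the Hessian there by $4\nabla^2\Psi(x_t)$.)

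The only non-routine point is the third step: certifying that the intermediate point $\xi_t$ (or, in the primal version, every point of $[x_t,z_t]$) lies in a Dikin ellipsoid small enough for \Cref{lemma-dikin}(2) to apply. The dual route is cleanest because its radius estimate reduces exactly to the bound $\|\tilde l_t\|_{\nabla^2\Psi(x_t)^{-1}}\le n$ of the second step together with $\sigma_t\ge 8n$; it does invoke the standard fact that the Fenchel conjugate of a self-concordant function is self-concordant, so that \Cref{lemma-dikin} may be used for $\Psi^*$, which is available from the references cited there. Everything else is the same bookkeeping as in \Cref{lemma-mab-immediate}.
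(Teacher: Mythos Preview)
Your proof is correct and follows essentially the same approach as the paper: write the immediate cost as a second-order remainder, use self-concordance to compare the Hessian at the intermediate point with $\nabla^2\Psi(x_t)^{-1}$, and finish with the pointwise bound $\|\tilde l_t\|_{\nabla^2\Psi(x_t)^{-1}}^2\le n^2$. The only cosmetic difference is that the paper runs the Hessian comparison in the primal (pulling $\xi_t$ back to $w_t=\nabla\Psi^*(\xi_t)$ and invoking \Cref{lemma-z-close} to place $w_t\in W_{1/2}(x_t)$), whereas you stay in the dual and invoke self-concordance of $\Psi^*$; your route even yields the slightly sharper constant $(1-\tfrac18)^{-2}/2<1$. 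One small inaccuracy in your parenthetical primal alternative: the relevant intermediate point $w_t=\nabla\Psi^*(\xi_t)$ lies on the curve $\{\nabla\Psi^*(\nabla\Psi(x_t)-s\sigma_t^{-1}\tilde l_t):s\in[0,1]\}$, not on the straight segment $[x_t,z_t]$; the paper handles this by observing that \Cref{lemma-z-close} applied with the larger effective scale $\sigma_t/s$ already gives $w_t\in W_{4ns/\sigma_t}(x_t)\subseteq W_{1/2}(x_t)$.
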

\begin{proof}
Similar to the proof of \Cref{lemma-mab-immediate}, we have
\begin{align*}
    \sigma_t D_\Psi(x_t, z_t) = \frac{\Vert \tilde{l}_t \Vert_{\nabla^2\Psi^*(\theta_t)}^2}{2\sigma_t} = \frac{\Vert \tilde{l}_t \Vert_{\nabla^2\Psi(w_t)^{-1}}^2}{2\sigma_t}
\end{align*}
where $\theta_t$ is some element inside the line segment connecting $\nabla\Psi(x_t) - \frac{1}{\sigma_t}\tilde{l}_t$ and $\nabla\Psi(x_t)$, $w_t = \nabla\Psi^*(\theta_t)$. According to \Cref{lemma-z-close}, $w_t$ is located in $W_{\frac{4n}{\sigma_t}}(x_t) \subseteq W_{\nicefrac 12}(x_t)$. We can thus apply the second result in \Cref{lemma-dikin} to get
\begin{equation*}
    \sigma_t D_\Psi(x_t, z_t)  = \frac{\Vert \tilde{l}_t \Vert_{\nabla^2\Psi(w_t)^{-1}}^2}{2\sigma_t} \le \frac{2\Vert \tilde{l}_t \Vert_{\nabla^2\Psi(x_t)^{-1}}^2}{\sigma_t}
\end{equation*}
hence
\begin{align*}
    \E[\sigma_t D_\Psi(x_t, z_t) \mid \mathcal{F}_{t-1}] & \le \frac{2}{\sigma_t} \E[ \Vert \tilde{l}_t \Vert_{\nabla^2\Psi(w_t)^{-1}}^2 \mid \mathcal{F}_{t-1}] \\
    & \le \frac{2}{\sigma_t} \E[ \Vert n\lambda_{t,i_t}^{\nicefrac 12}\cdot e_{t,i_t} \Vert_{\nabla^2\Psi(w_t)^{-1}}^2 \mid \mathcal{F}_{t-1}] \\
    & = \frac{2n^2}{\sigma_t}.
\end{align*}
\end{proof}

In order to derive an upper bound for the investment term $D_\Psi(y, \nabla\Psi^*(\mathbf{0}))$, we first introduce the following property of $\vartheta$-self-concordant barriers. The proof can be found in \citep[Page 34]{nesterov1994interior}.
\begin{lemma}
\label{lemma-self-cord-radius}
Define
\begin{equation*}
    \pi_{y}(x)\triangleq  \inf\{t \ge 0 : y + t^{-1}(x-y) \in \mathcal{C}\}.
\end{equation*}
If $\Psi$ is a $\vartheta$-self-concordant barrier on $\mathcal{C}$, then
for any $x,y \in \mathop{int}(\mathcal{C})$, we have
\begin{equation*}
    \Psi(y) - \Psi(x) \le \vartheta \ln \frac{1}{1- \pi_{x}(y)}.
\end{equation*}
\end{lemma}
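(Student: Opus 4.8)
The plan is to reduce the claim to a one-dimensional estimate along the line through $x$ and $y$. First I would let $s^\ast \triangleq 1/\pi_x(y) \in (1,\infty]$ be the supremum of those $s \ge 0$ for which $x + s(y-x)$ lies in $\mathrm{int}(\mathcal{C})$; this is well defined because $\mathcal{C}$ is convex and $x,y \in \mathrm{int}(\mathcal{C})$, and $s^\ast > 1$ precisely because $y$ is interior. Setting $\psi(s) \triangleq \Psi\big(x + s(y-x)\big)$ for $s \in [0,s^\ast)$ (a $C^3$ function there, since the open segment stays in $\mathrm{int}(\mathcal{C})$), we have $\psi(0) = \Psi(x)$ and $\psi(1) = \Psi(y)$, so it suffices to bound $\psi(1) - \psi(0) = \int_0^1 \psi'(s)\,ds$.

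Next, specializing the $\vartheta$-self-concordant barrier inequality of \Cref{def:self-concordant} to the direction $h = y-x$ gives $\psi'(s)^2 \le \vartheta\,\psi''(s)$ for all $s \in [0,s^\ast)$; in particular $\psi'' \ge 0$, so $\psi'$ is non-decreasing. The key step — and the main obstacle — is to upgrade this algebraic relation to the pointwise growth bound
\[
\psi'(s) \;\le\; \frac{\vartheta}{s^\ast - s} \qquad \text{for all } s \in [0,s^\ast)
\]
(read as $\psi'(s) \le 0$ when $s^\ast = \infty$). This is immediate wherever $\psi'(s) \le 0$. Where $\psi'(s) > 0$, monotonicity forces $\psi' > 0$ on all of $[s,s^\ast)$, so $u \triangleq 1/\psi'$ is well defined and positive there with $u' = -\psi''/(\psi')^2 \le -1/\vartheta$; integrating from $s$ gives $0 < u(\tau) \le u(s) - (\tau - s)/\vartheta$ for every $\tau \in [s,s^\ast)$, hence $\tau - s < \vartheta\,u(s)$, and letting $\tau \uparrow s^\ast$ yields $s^\ast - s \le \vartheta/\psi'(s)$ — which is the claimed bound, and also rules out $s^\ast = \infty$ in the subcase $\psi'(s) > 0$, so that $s^\ast = \infty$ forces $\psi' \le 0$ everywhere. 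Handling the signs of $\psi'$ (it may well be negative on part of $[0,1]$) and the possibility $s^\ast = \infty$ inside this argument is the delicate part; once the pointwise bound holds it dominates $\psi'$ even on the region where $\psi' < 0$.

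Finally I would integrate the pointwise bound over $[0,1]$:
\[
\Psi(y) - \Psi(x) = \int_0^1 \psi'(s)\,ds \;\le\; \int_0^1 \frac{\vartheta}{s^\ast - s}\,ds \;=\; \vartheta \ln\frac{s^\ast}{s^\ast - 1} \;=\; \vartheta \ln\frac{1}{1 - 1/s^\ast} \;=\; \vartheta \ln\frac{1}{1 - \pi_x(y)},
\]
which is exactly the assertion; when $s^\ast = \infty$ the right-hand side is $0$ and $\psi' \le 0$ gives $\Psi(y) - \Psi(x) \le 0$, so the inequality still holds. The remaining work is only the bookkeeping of these limiting cases together with the routine verification that $\psi$ is twice differentiable on $[0,s^\ast)$, both of which follow from convexity of $\mathcal{C}$ and $C^3$-smoothness of $\Psi$ on $\mathrm{int}(\mathcal{C})$. (This is in effect the classical Nesterov–Nemirovski barrier-growth estimate, so the paper may simply cite it rather than reproduce the argument.)
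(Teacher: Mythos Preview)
Your argument is correct and is precisely the standard Nesterov--Nemirovski one-dimensional reduction you identify at the end; the paper does not reproduce it but simply cites \citep[Page 34]{nesterov1994interior}, exactly as you anticipated. There is nothing to add beyond noting that your handling of the sign of $\psi'$ and the case $s^\ast=\infty$ is clean and complete.
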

\begin{corollary}
Define $\mathcal{C}_T \triangleq  \{y \in \mathcal{C}: \pi_{\nabla\Psi^*(\mathbf{0})}(y) \le 1 - 1/T\}$, if $\Psi$ is a $\vartheta$-self-concordant barrier on $\mathcal{C}$, we have 
\begin{equation*}
    \sup_{y \in \mathcal{C}_T} D_\Psi(y, \nabla\Psi^*(\mathbf{0})) \le \vartheta \ln T.
\end{equation*}
\end{corollary}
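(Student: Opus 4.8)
The plan is to collapse the Bregman divergence $D_\Psi(y,\nabla\Psi^*(\mathbf 0))$ into a bare difference of potential values and then feed it into \Cref{lemma-self-cord-radius}. Write $x_0\triangleq\nabla\Psi^*(\mathbf 0)$. The first step is to observe that $x_0$ is the analytic center of $\mathcal C$, i.e.\ $\nabla\Psi(x_0)=\mathbf 0$: this is immediate from the bijection $(\nabla\Psi)^{-1}=\nabla\Psi^*$ in \Cref{lemma-legendre} (and implicitly uses that $\mathbf 0\in\mathop{int}(\mathop{dom}\Psi^*)$, which holds since a self-concordant barrier over the bounded action set $\mathcal C$ attains its minimum in $\mathop{int}(\mathcal C)$). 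Consequently, for every $y\in\mathop{int}(\mathcal C)$,
\[
D_\Psi(y,x_0)=\Psi(y)-\Psi(x_0)-\langle\nabla\Psi(x_0),y-x_0\rangle=\Psi(y)-\Psi(x_0),
\]
so it suffices to bound $\Psi(y)-\Psi(x_0)$.

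Next I would apply \Cref{lemma-self-cord-radius} with base point $x=x_0$. Since $\Psi$ is $\vartheta$-self-concordant, the lemma gives, for every $y\in\mathop{int}(\mathcal C)$,
\[
\Psi(y)-\Psi(x_0)\le\vartheta\ln\frac{1}{1-\pi_{x_0}(y)},
\]
where $\pi_{x_0}(y)=\inf\{t\ge 0:x_0+t^{-1}(y-x_0)\in\mathcal C\}$.

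Finally, restrict to $y\in\mathcal C_T$. By the defining inequality of $\mathcal C_T$ we have $\pi_{x_0}(y)\le 1-1/T$, hence $1-\pi_{x_0}(y)\ge 1/T$ and therefore $\ln\frac{1}{1-\pi_{x_0}(y)}\le\ln T$. Chaining the three displays yields $D_\Psi(y,x_0)\le\vartheta\ln T$ for every $y\in\mathcal C_T$, and taking the supremum over $y\in\mathcal C_T$ completes the argument. There is essentially no obstacle here: the only step that needs a moment's care is the identification $\nabla\Psi(x_0)=\mathbf 0$, which is exactly what makes the Bregman divergence reduce to $\Psi(y)-\Psi(x_0)$; the remainder is a direct substitution into \Cref{lemma-self-cord-radius} together with the definition of $\mathcal C_T$.
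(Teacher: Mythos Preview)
Your proposal is correct and matches the paper's intended argument: the corollary is stated immediately after \Cref{lemma-self-cord-radius} without a separate proof, and the implicit derivation is exactly what you wrote---observe $\nabla\Psi(\nabla\Psi^*(\mathbf 0))=\mathbf 0$ so that $D_\Psi(y,x_0)=\Psi(y)-\Psi(x_0)$, then plug into \Cref{lemma-self-cord-radius} and use the defining bound $\pi_{x_0}(y)\le 1-1/T$.
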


This uniform upper bound for $D_\Psi(y, \nabla\Psi^*(\mathbf{0}))$ over $\mathcal{C}_T$ is enough for us to design linear bandits algorithms. The reason is, for any $y \in \mathcal{C} \setminus \mathcal{C}_T$, the definition of $\mathcal{C}_T$ guarantees that there exists a $y' \in \mathcal{C}_T$ such that
\begin{equation*}
\sum_{t=1}^T \vert \langle l_t, y - y' \rangle \vert = \O(1)
\end{equation*}
uniformly holds. Therefore there is a only constant difference between $\mathfrak{R}_T$ and $\sup_{y \in \mathcal{C}_T} \E[\sum_{t=1}^T \langle l_t, x_t - y \rangle]$.

\end{document}